\newenvironment{customthm}[1]
  {\innercustomthm}
  {\endinnercustomthm}
\newenvironment{customlem}[1]
  {\innercustomlem}
  {\endinnercustomlem}
\newenvironment{customprop}[1]
  {\innercustomprop}
  {\endinnercustomprop}
\newenvironment{customcorollary}[1]
  {\innercustomcorollary}
  {\endinnercustomcorollary}
\newtheorem{theorem}{Theorem}
\newtheorem{lemma}{Lemma}
\newtheorem*{lemma*}{Lemma}
\newtheorem{proposition}{Proposition}
\newtheorem{corollary}{Corollary}
\newtheorem{definition}{Definition}
\newtheorem*{example*}{Example}
\newtheorem*{assumption*}{Assumption}
\newtheorem*{assumptions*}{Assumptions}
\newtheorem{model}{Model}
\newcommand{\indep}{\perp\kern-5.5pt\perp}
\newcommand{\notindep}{\not\!\indep}
\newcommand{\toverset}[2]{\overset{{\rm #1}}{#2}}
\newcommand{\baseline}{{\rm baseline}}
\DeclareMathOperator{\RV}{RV}
\DeclareMathOperator{\pa}{pa}
\DeclareMathOperator{\past}{past}
\DeclareMathOperator{\future}{future}
\newcommand{\citesupplement}{}
\title{\textbf{Granger Causality in Extremes}}
\author[1,2]{Juraj Bodik}
\author[3,4]{Olivier C. Pasche}
\affil[1]{Faculty of Business and Economics, University of Lausanne, Switzerland}
\affil[2]{Department of Statistics, UC Berkeley, California, USA}
\affil[3]{Research Institute for Statistics and Information Science, University of Geneva, Switzerland}
\affil[4]{Department of Industrial Engineering and Operations Research, Columbia University, New York, USA}
\date{}
\begin{document}

\pagenumbering{gobble}%

\maketitle
\begin{abstract}

Causal discovery in time series becomes especially important during extreme, highly volatile periods, yet state-of-the-art methods focus on causality within the body of the distribution and often overlook mechanisms that manifest only in extreme events. We propose a framework for Granger causality in extremes that infers causal links primarily from extreme events, using the causal tail coefficient. We establish equivalences between causality in extremes and other causal concepts, including (classical) Granger causality, Sims causality, and structural causality, under suitable assumptions. We prove other key properties of Granger causality in extremes and show that the framework is especially helpful in the presence of hidden confounders. Building on those properties, we propose a non-parametric inference method for detecting Granger causality in extremes from observational data; it handles non-linear and high-dimensional series, outperforms the considered state-of-the-art methods in almost all of our simulated settings, and uncovers interpretable structures in financial and extreme-weather applications. An open-source implementation of our methodology is provided.

\end{abstract}

\vspace*{6mm}

{%
\noindent
\small
\textbf{Keywords:} 
causal discovery,
Granger causality,
extreme events,
time series,
structural causal models,
heavy tails
}
    
\newpage

\pagenumbering{arabic}%

\section{Introduction}\label{s:intro}

Granger causality \citep{GRANGER1980329} is a widely employed statistical framework for formalizing causal relationships among two or more time series variables, used across a wide array of fields, including finance, economics, neuroscience, and climate science \citep{gujarati2009causality,Rubin,attanasio2013granger}.
It is well suited for empirical examinations of cause-and-effect associations, as it does not require the specification of a scientific model. 
However, Granger causality primarily measures the association between variables, and omitting relevant variables from the analysis can potentially lead to spurious causal inferences, which drew some criticism \citep{Granger_criticism}. 

A typical focus of causal methods is on the body of the distribution \citep[causality in the mean,][]{GRANGER1980329, PCalgorithm, Runge2019PCMCI}. However, many important causal questions lie beyond the range of observed values, where classical methods are not well suited. For example, what are the effects of extreme stock return \citep{Candelon2013} on other stocks, or of extreme precipitation \citep{barbero2018temperature} on river floods? With climate change driving more frequent extreme weather, understanding the impacts of extremes becomes more crucial. More generally, large interventions often differ from minor ones, and many causal mechanisms emerge only during extreme periods, beyond what causality in the mean can capture. Moreover, complex causal relationships often simplify in the extremes \citep{engelke2025extremesstructuralcausalmodels}, making them easier to study from that perspective.

In this work, we formally introduce Granger-type causality in extremes for the time series
\((\mathbf X,\mathbf Y)^\top=((X_t,Y_t)^\top,t\in\mathbb Z)\). We propose novel
definitions that characterize two forms of extreme causal effects, intuitively described as: 
\begin{itemize} %
    \item $X_t$ being extreme ``increases the probability of'' $Y_{t+i}$ being extreme,
    \item $X_t$ being extreme ``implies''  $Y_{t+i}$ being extreme,
\end{itemize}
for some $i\leq p \in \mathbb{N}$, where $p$ is refereed to as a ``max-lag''. 
Our framework generalizes prior work, such as \cite{Gnecco, Pasche, bodik}, by incorporating both Granger-type lagged causal effects and the potential presence of confounders, and by relaxing strong assumptions of regular variation and tail equivalence, allowing applicability to both heavy- and light-tailed variables. 
Approaches such as \cite{hong2009granger, Candelon2013, Mazzarisi_causality_in_tail} can also be viewed as special cases of this framework.

As theoretical contributions, we show formal equivalencies between the definitions of causality in extremes and classical definitions of causality. Additionally, we prove that hidden confounders---a key challenge in causal analysis---do not alter the results, provided certain tail assumptions are met and the sample size is large. Finally, we discuss a no-free-lunch theorem regarding testing of Granger causality and causality in extremes. 

Our practical contributions include a novel model-free method for detecting causality in extremes, able to handle complex multivariate time series. Along with an open-source implementation, we prove its consistency in large sample sizes. An empirical comparison with state-of-the-art methods \citep[such as PCMCI,][]{Runge2019PCMCI} highlights that our proposed approach is more accurate, faster and more robust across many practical settings. 

As an application of our framework, we analyzed a hydro-meteorological system in Switzerland and cryptocurrency returns. 
Our results identify coherent impacts of extreme precipitation on different regions of the river network and main drivers of extreme events in the cryptocurrency market.
For example, the latter could be practically useful to traders for anticipating extreme market movements and adjusting their strategies accordingly.

\subsection{Existing literature and notation}
The intersection between causality and extremes is a burgeoning research area, and only recently have some connections between causality and extremes been made. \cite{EngelkeGraphicalModels} propose graphical models within the context of extremes. \cite{Extremal_quantile_treatment_effect_Deuber_and_Engelke} have developed a method for estimating extremal quantiles of treatment effects. \cite{bodik2024extreme_treatment_effect} introduced the notion of extreme treatment effect in the potential outcomes framework.    \cite{Naveau} analyzed the effect of climate change on weather extremes. \cite{courgeau2021extreme} proposed a framework for extreme event propagation.  

We formalize the definition of Granger causality in extremes using a generalization of the so-called causal tail coefficient, first introduced by \cite{Gnecco} in the context of SCMs. For a pair of random variables $X_1, X_2$  with their respective distributions $F_1, F_2$, the causal (upper) tail coefficient of $X_1$ on $X_2$ is defined as
$$
\Gamma_{1,2} := \lim_{v\to 1^-}\mathbb{E}[ F_2(X_2)\mid F_1(X_1)>v ], 
$$
if the limit exists. This coefficient lies between zero and one and captures the influence of $X_1$ on $X_2$ in the upper tail. Intuitively, if $X_1$ has a monotonically increasing influence on $X_2$, we expect $\Gamma_{1,2}$ to be close to unity. Under strong assumptions on the tails of $X_1, X_2$ and their underlying causal structure, the values of $\Gamma_{1,2}$ and $\Gamma_{2,1}$ allow us to discover the causal relationship between $X_1$ and $X_2$ \citep[Theorem~1]{Gnecco}. \cite{Pasche} proposed an inference method that adjusts the causal tail coefficient for observed confounders and a permutation test strategy for causal discovery. \cite{bodik} modified the causal tail coefficient for stationary bivariate time series $(\textbf{X},\textbf{Y})^\top= ((X_t,Y_t)^\top, t\in\mathbb{Z})$ to
$$
\Gamma_{\textbf{X}\to \textbf{Y}}(p):=\lim_{v\to 1^-}\mathbb{E}[\max\{F_Y(Y_1), \dots, F_Y(Y_{p})\}\mid F_X(X_0)>v],
$$
where $p\in\mathbb{N}$ is the max-lag and $F_X, F_Y$ are marginal distributions of $\textbf{X},\textbf{Y}$ respectively. This coefficient allows discovering the causal relationship (in the Granger sense) between $\textbf{X},\textbf{Y}$ under strong assumptions on their tails \citep[Theorem~1]{bodik}. However, these results do not apply to time series systems that are confounded, normally distributed, or whose cause and effect variables have different tails. 

A different approach to causality in extremes is taken by \cite{hong2009granger, Candelon2013, Mazzarisi_causality_in_tail}. They transform the original time series $(\textbf{X},\textbf{Y})^\top = ((X_t,Y_t)^\top, t\in\mathbb{Z})$ into a binary-valued time series $(\tilde{\textbf{X}},\tilde{\textbf{Y}})^\top$, where $\tilde{X}_t := 1$ if $X_t$ exceeds a threshold, and $0$ otherwise (similarly for $\tilde{Y}_t$). Causal relations are then assessed using a parametric model for the resulting discrete time series.

Structural causal models \citep[SCMs,][]{Pearl_book, Elements_of_Causal_Inference} or Bayesian networks \citep{PCalgorithm}, are a prevalent approach for modeling causal relationships in non-temporal contexts, allowing for the explicit representation of causal mechanisms and counterfactual reasoning \citep{bodik2026retrospective}. Several methods have been devised to adapt SCMs for accurately capturing temporal and dynamic causal relationships \citep{white2010granger, Eicher}. While these developments offer deeper insights into causality in time-dependent systems, the integration of SCMs with time series data remains an ongoing area of research
\citep{TIMINOPeters, Runge2019PCMCI, Pamfil2020DYNOTEARSSL, assaad2022survey}.

In Section~\ref{section2}, we review classical definitions of causality, introduce our novel notions of ``causality in extremes'' and ``causality in tails'', and establish their equivalence with classical causal notions. Section~\ref{section3} offers a characterization of causality in extremes in the presence of hidden confounding. 
In Section~\ref{Section_estimation}, we propose a consistent inference procedure for discovering Granger causality in extremes.
Section~\ref{Section5} discusses the multivariate extension of our approach. 
Section~\ref{Section6} presents conclusions from simulation experiments, and Section~\ref{Section_application} discusses real-world applications to extreme causal discovery for hydro-meteorological systems and cryptocurrency returns. 
The supplementary material \citesupplement contains generalizations of the results presented in the main paper to non-unit causal lags and to both tails, theory about the limitations of general statistical tests for Granger and extremal causality, motivating the assumptions used in this paper, additional details about the simulations studies, and the mathematical proofs. 
Finally, we also provide an open-source implementation of all the methods discussed in this manuscript.

In this work, multivariate time series denoted, for example, $\mathbf{W} = (\mathbf{W}_t, t\in\mathbb{Z})$, comprise $d$-dimensional random vectors defined on a shared underlying probability space $(\Omega,\mathcal{F},\mathbb{P})$. For a collection of random variables, $\sigma(\cdot)$ denotes the smallest $\sigma$-algebra with respect to which those random variables are measurable. By a stationary time series we mean a strictly
stationary time series. A stationary series \(\mathbf W\) is called
ergodic if every time-shift-invariant event
\(A\in\sigma(\mathbf W_t:t\in\mathbb Z)\) satisfies \(
\mathbb P(A)\in\{0,1\}.
\) For a matrix $\textbf{A}\in\mathbb{R}^{d\times d}$, we define its norm as  $||\textbf{A}|| = \sup_{x\in\mathbb{R}^d, |x|=1}|\textbf{A}x|$. For random variables \(U\) and \(V\) defined on the same probability space, \(U \not\equiv V\) a.s. means that they are not
almost surely equal, equivalently \(\mathbb P(U\neq V)>0\). We say that $\textbf{W}$ is $1$-Markov, if the future values, given its current value, is independent of the past; that is, $\sigma(\mathbf W_s:s\ge t+1)
\perp\!\!\!\perp
\sigma(\mathbf W_s:s\le t-1)
\mid
\sigma(\mathbf W_t)$ for all $t\in\mathbb{Z}$ \citep{Markov_processes}. We use the notation $\past(t) = (t, t-1, t-2, \dots)$.

\section{From Granger causality via structural causality to causality in extremes}
\label{section2}

\subsection{Granger and structural causality}\label{section_granger_struct_def}

Granger causality is rooted in the fundamental axiom that ``the past and present can influence the future, but the future cannot influence the past'' \citep{GRANGER1980329}. For a bivariate process  $(\textbf{X},\textbf{Y})=( (X_t, Y_t)^\top, t\in\mathbb{Z})$, $\textbf{X}$ is considered to cause $\textbf{Y}$, if the knowledge of variable $X_t$ aids in predicting the future variable $Y_{t+1}$. While predictability on its own is essentially a statement about stochastic dependence, it is precisely the axiomatic imposition of a temporal ordering that allows interpreting such dependence as a causal connection. The notion of Granger causality can be formalized as follows.

\begin{definition}[Granger causality \citep{GRANGER1980329}]\label{DEF1}
Let \(\mathbf W=(\mathbf X,\mathbf Y,\mathbf Z)=((X_t,Y_t,\mathbf Z_t)^\top,\ t\in\mathbb Z)\) be a finite-dimensional stochastic process. We say that \(\mathbf X\) Granger-causes \(\mathbf Y\) at time \(t\), with respect to \(\mathbf Z\), if
\begin{equation}\label{Granger_first_definition}
    Y_{t+1}\notindep \mathbf X_{\past(t)}
    \mid \mathcal C_t^{-\mathbf X}, 
    \qquad 
    \mathcal C_t^{-\mathbf X}:=\sigma(\mathbf Y_{\past(t)},\mathbf Z_{\past(t)}).
\end{equation}
We write \(\mathbf X\toverset{G}{\to}\mathbf Y\mid\mathbf Z\) if there exists \(t\in\mathbb Z\) such that \eqref{Granger_first_definition} holds.

We simply write \(\mathbf X\toverset{G}{\to}\mathbf Y\) if the conditioning set \(\mathcal C_t^{-\mathbf X}\) is causally sufficient in the following sense: replacing \(\mathcal C_t^{-\mathbf X}\) by any admissible enlargement of the information available up to time \(t\) that does not contain \(\mathbf X_{\past(t)}\) does not change whether \(\mathbf X\) Granger-causes \(\mathbf Y\) at time~\(t\). 
\end{definition}

We emphasize that Granger causality is not causality in the interventionist or counterfactual sense. Rather, it is a predictive notion based on temporal ordering: if the past of \(\mathbf X\) improves the prediction of \(Y_{t+1}\) after conditioning on the available information set, then \(\mathbf X\) is said to Granger-cause \(\mathbf Y\). The causal terminology is justified only under the additional assumption of causal sufficiency (absence of hidden confounders). When such assumptions are not credible, \(\mathbf X\toverset{G}{\to}\mathbf Y\mid\mathbf Z\) should be interpreted only as evidence that \(\mathbf X\) is a potential, or ``prima facie,'' cause of \(\mathbf Y\), rather than as a definitive causal statement. This is analogous to observational causal inference in the potential-outcomes framework \citep{Rubin}, where causal relations are identified only under assumptions such as no hidden confounding.

More specialized notions than the one in Definition~\ref{DEF1} have also appeared in the literature \citep{CausalityInVariance}. We say that the process \(\mathbf X\) Granger-causes \(\mathbf Y\) in mean, respectively in variance, if
\begin{equation*}
    \mathbb{E}\left[Y_{t+1}\mid \mathcal{C}^{-\mathbf X}_{t}\right]
    \neq
    \mathbb{E}\left[Y_{t+1}\mid \mathcal{C}_{t}\right],
    \quad\text{respectively}\quad
    \operatorname{Var}\!\left(Y_{t+1}\mid \mathcal{C}^{-\mathbf X}_{t}\right)
    \neq
    \operatorname{Var}\!\left(Y_{t+1}\mid \mathcal{C}_{t}\right),
\end{equation*}
for some \(t\in\mathbb Z\), where \(\mathcal C_t:=\sigma(\mathbf X_{\past(t)},\mathbf Y_{\past(t)},\mathbf Z_{\past(t)})\) represents generated by the observed past of the system up to time $t$. If \(\mathbf X\) Granger-causes \(\mathbf Y\) in mean or in variance, then necessarily \(\mathbf X\toverset{G}{\to}\mathbf Y\). However, \(\mathbf X\) may Granger-cause \(\mathbf Y\) in variance without Granger-causing it in mean, as occurs in generalized autoregressive conditionally heteroskedastic models \citep[GARCH,][]{CausalityInVariance}.

A different concept of causality, known as ``structural causality,'' was introduced by \cite{white2010granger} as a time series analog to the Structural Causal Model (SCM). In this framework, $\textbf{X}$ and $\textbf{Y}$ are assumed to be generated structurally as:
\begin{equation}\label{structural_generation_lag}
\begin{split}
    X_t&=h_{X,t}(X_{t-1}, \dots, X_{t-q_x}, Y_{t-1}, \dots, Y_{t-q_x}, \textbf{Z}_{t-1}, \dots, \textbf{Z}_{t-q_x}, \varepsilon^X_t),\\
    Y_t&=h_{Y,t}(X_{t-1}, \dots, X_{t-q_y}, Y_{t-1}, \dots, Y_{t-q_y}, \textbf{Z}_{t-1}, \dots, \textbf{Z}_{t-q_y}, \varepsilon^Y_t),
\end{split}
\end{equation}
for all $t\in\mathbb{Z}$, where $h_{X,t}$ and $h_{Y,t}$ are measurable functions, and $q_x, q_y\in\mathbb{N}\cup\{\infty\}$ are called orders (lags) of $\textbf{X},\textbf{Y}$, respectively. Here, the process $\textbf{Z}$ encompasses all other variables in the system, and $\varepsilon^X_t$ and $\varepsilon^Y_t$ are the noise variables. Typically, we assume that $h_{X,t}$ are equal for all $t\in\mathbb{Z}$, in which case we omit the subscript $t$ and simply write $h_X$ (similarly for $h_Y$).
 
For clarity of the text, we simplify the notation by assuming $q_x=q_y=1$. Nonetheless, we relax this assumption in Supplement~\ref{Appendix_A}. 

\begin{definition}[Structural causality]\label{Definition_str}
Assume that $(\textbf{X},\textbf{Y}, \textbf{Z})$ are stationary time series, where $\textbf{Y}$ is structurally generated as 
\begin{equation*}%
\begin{split}
    Y_t&=h_{Y}(X_{t-1}, Y_{t-1}, \textbf{Z}_{t-1}, \varepsilon^Y_t),
\end{split}
\end{equation*}
for all $t\in\mathbb{Z}$, where \(h_Y\) is a measurable function and \((\varepsilon_t^Y)_{t\in\mathbb Z}\) are noise variables satisfying 
\begin{equation}\label{IndependenceOfNoise}
\varepsilon_t^Y
    \indep
    \sigma(X_s,Y_s,\mathbf Z_s:s<t),
    \qquad \text{for all }t\in\mathbb Z.    
\end{equation} 
We say that \(X\) does \textit{not} directly structurally cause \(Y\) if there exists a measurable
function \(\widetilde h_Y\) such that \(
    Y_t=\widetilde h_Y(Y_{t-1},\mathbf Z_{t-1},\varepsilon_t^Y)
\) a.s. for all $t\in\mathbb{Z}$. 
Otherwise, $\textbf{X}$ is said to directly structurally cause $\textbf{Y}$ (notation $\textbf{X}\toverset{str}{\to}\textbf{Y}$). 
\end{definition}

Definition~\ref{Definition_str} implicitly assumes causal sufficiency: the structural equation for \(Y_t\) contains all relevant lagged variables only through \((X_{t-1},Y_{t-1},\mathbf Z_{t-1})\). 

The definitions of Granger causality and structural causality are closely related. Under the assumption that $\textbf{X}$ and $\textbf{Y}$ are structurally generated as described in Definition~\ref{Definition_str}, Granger causality implies structural causality \citep[Proposition 1, Chapter 22.4]{berzuini2012causality}. The reverse implication is generally not true; however, the distinction between these definitions is not relevant for the purposes of the present paper. Generally, it is only relevant for counterfactual statements, and can be disregarded by considering the concept of ``almost sure structural causality'' \citep[Section 3.1]{white2010granger}.

\subsection{Causality in extremes}
\label{section_granger_cusality_in_tail}
Recall the two intuitive definitions of causality in extremes from Section~\ref{s:intro}, which can be reformulated, in this context, as: an extreme event for $X_t$ 
\begin{itemize}
    \item increases the probability of an extreme event for $Y_{t+1}$, given~$\mathcal{C}^{-\textbf{X}}_{t}$, or,
    \item implies an extreme event for $Y_{t+1}$, given~$\mathcal{C}^{-\textbf{X}}_{t}$.
\end{itemize}
These two intuitive notions are formalized in Definition~\ref{definiton_of_tail_causality}. For simplicity, we assume that (i) \(X_t\) and \(Y_t\) are supported on some neighborhood 
of infinity for all \(t\in\mathbb Z\), and (ii) the time series satisfy the 1-Markov property. We relax these assumptions in Section~\ref{section_extensions}. 

\begin{definition}[Causality in extremes]\label{definiton_of_tail_causality}
Let \(\mathbf W=(\mathbf X,\mathbf Y,\mathbf Z)=((X_t,Y_t,\mathbf Z_t)^\top,\ t\in\mathbb Z)\) be a finite-dimensional stochastic process satisfying the 1-Markov property. Let \(F\) be a cumulative distribution function satisfying \(F(x)<1\) for all \(x\in\mathbb R\), and assume that \(X_t\) and \(Y_{t+1}\) are supported on some neighborhood of infinity.

Define the causal tail coefficient adjusted for \(\textbf{Z}\) at time $t\in\mathbb Z$ by
\begin{equation}\label{CTC_definition}
\Gamma_{\textbf{X}\to\textbf{Y}\mid\mathcal{C}}^t:=  \lim_{v\to\infty} \mathbb{E}[ F(Y_{t+1})\mid X_t>v,\mathcal{C}^{-\textbf{X}}_t], \qquad  \text{ where }  \,\, \mathcal{C}_t^{-\mathbf X}:=\sigma(\mathbf Y_{\past(t)},\mathbf Z_{\past(t)}),
\end{equation}
provided that the limit exists a.s., and the baseline coefficient
\[
\Gamma_{\textbf{X}\to\textbf{Y}\mid\mathcal{C}}^{t, \baseline}:=\mathbb{E}[ F(Y_{t+1})\mid  \mathcal{C}^{-\textbf{X}}_t]. 
\]
We simply write  $\Gamma_{\mathbf X\to\mathbf Y\mid\mathcal{C}}$ and $   \Gamma_{\mathbf X\to\mathbf Y\mid\mathcal{C}}^{\baseline}$ when the system is stationary; that is, when the coefficient is time-invariant.

We say that the (upper) tail of \(\mathbf X\) causes \(\mathbf Y\) at time \(t\), adjusted for \(\textbf{Z}\), if
\begin{equation}
\label{tail_cause}
\Gamma_{\mathbf X\to\mathbf Y\mid \mathcal C}^t
\not\equiv
\Gamma_{\mathbf X\to\mathbf Y\mid \mathcal C}^{t,\baseline}
\quad \text{a.s.},
\end{equation}
We write $\mathbf X \toverset{tail}{\longrightarrow} \mathbf Y\mid \textbf{Z}$ if there exists \(t\in\mathbb Z\) such that \eqref{tail_cause} holds. 

We say that (upper) extreme in \(\mathbf X\) causes an extreme in \(\mathbf Y\) at time \(t\), adjusted for \(\textbf{Z}\), if
\begin{equation}
    \label{ext_cause}
        \Gamma_{\mathbf X\to\mathbf Y\mid \mathcal{C}}^t=1
    \qquad \text{a.s.}
\end{equation}
We write $   \mathbf X \toverset{ext}{\longrightarrow} \mathbf Y\mid \textbf{Z}$ if there exists \(t\in\mathbb Z\) such that  \eqref{ext_cause} holds. 

We simply write \(\mathbf X\toverset{tail}{\to}\mathbf Y\) or \(\mathbf X\toverset{ext}{\to}\mathbf Y\) if the conditioning set \(\mathcal C_t^{-\mathbf X}\) is causally sufficient in the sense that replacing \(\mathcal C_t^{-\mathbf X}\) by any admissible enlargement of the information available up to time \(t\) that does not contain \(\mathbf X_{\past(t)}\) does not change whether \(\mathbf X\) tail-causes or  extreme-causes \(\mathbf Y\). 
\end{definition}

An alternative but equivalent formulation of causality in extremes (following directly from Lemma~\ref{Observation}, in the Supplement) is: 
\begin{equation}\label{eq_42}
    \textbf{X}\toverset{ext}{\longrightarrow}\textbf{Y} \quad \iff  \quad \lim_{v\to\infty}P(Y_{t+1}>\tau\mid X_t>v, \mathcal{C}^{-\textbf{X}}_t) = 1 \quad \text{ for every }\tau\in\mathbb{R},
\end{equation}
for some $t\in\mathbb Z$. While the right side of \eqref{eq_42} might, perhaps, be easier to interpret, the definition of causality in extremes as in \eqref{CTC_definition} offers several advantages. While it also leads to a natural connection to the causality in tail and extensions to more general structures of time series, the main advantage is its flexibility in the choice of $F$. 
In their more restrictive setting, \cite{Gnecco} and \cite{bodik} considered $F$ as a marginal distribution function of $\textbf{Y}$. 
\cite{hong2009granger} and \cite{Mazzarisi_causality_in_tail} utilized the special case $F(x) = \mathbbm{1}(x > \tau)$ for a given threshold $\tau\in\mathbb{R}$, which leads to $ \mathbb{E}[ F(Y_{t+1})\mid X_t>v,\mathcal{C}^{-\textbf{X}}_t] = P( Y_{t+1} > \tau \mid X_t>v,\mathcal{C}^{-\textbf{X}}_t)$. 
However, this threshold-based choice does not satisfy \(F(x)<1\) for all \(x\in\mathbb R\); for such fixed thresholds, the equivalence results with Granger-type causality can fail in general. In practice, an appropriate choice for $F$ can lead to better performance when it comes to inference and avoids the need to choose the threshold $\tau$. 
Nevertheless, in the limit, the notions \( \textbf{X} \toverset{tail}{\longrightarrow} \textbf{Y} \) and \( \textbf{X} \toverset{ext}{\longrightarrow} \textbf{Y} \) remain invariant w.r.t.\ $F$ under very weak conditions (see Lemma \ref{lemma_o_invariance_about_F}). 
Thus, we do not specify \( F \) when stating ``tail/extreme of $\textbf{X}$ causes~$\textbf{Y}$.''

We also write $\Gamma_{\textbf{X}\to\textbf{Y}\mid\textbf{Z}}$ when we want to emphasize the exact variables we condition on; in particular, we use the notation $\Gamma_{\textbf{X}\to\textbf{Y}\mid\emptyset}$ when $\textbf{Z}$ is an empty set (replacing $\mathcal{C}^{-\textbf{X}}_t$ by $\sigma(\textbf{Y}_{\past(t)})$).

\subsection{Connections between the definitions}
\label{Section_Reformulating}

The notions \(\textbf{X}\toverset{ext}{\longrightarrow}\textbf{Y}\),
\(\textbf{X}\toverset{tail}{\longrightarrow}\textbf{Y}\), and
\(\textbf{X}\toverset{G}{\to}\textbf{Y}\) become equivalent under assumptions that
ensure that structural dependence remains visible in the tails. We now present these
assumptions.

\begin{assumptions*}
Assuming the structure from Definition~\ref{Definition_str}, we impose the following:
\begin{enumerate}[label=(A\arabic*), ref=A\arabic*]
    \item \label{AssumptionA1}
    Either
    \(
        \lim_{x\to\infty} h_Y(x,y,\mathbf z,e)=\infty
    \)
    for all admissible values of \((y,\mathbf z,e)\), or \(h_Y\)
    is constant in \(x\).

    \item \label{AssumptionA2}
    Either
    \(
        \lim_{|x|\to\infty} \big|h_Y(x,y,\mathbf z,e)\big|=\infty
    \)
    for all admissible values of \((y,\mathbf z,e)\), or \(h_Y\)
    is constant in \(x\).
\end{enumerate}
\end{assumptions*}

Assumptions A1 and A2 link ordinary structural dependence with dependence in extremes. They require that whenever \(X\) has a direct structural effect on \(Y\), this effect does not vanish in the tails: extreme values of \(X_t\) are transmitted into extreme values of \(Y_{t+1}\). Assumption~A1 formalizes this requirement for the upper tail, whereas Assumption A2 concerns both tails. 
Either Assumption A1 or A2 hold true in many classical models, such as vector autoregressive models \citep[VAR,][]{MVT_timeseries_SPRINGER} and GARCH, among others. However, they may fail in settings where the structural effect is dampened or bounded, for example when
\(Y_{t+1} = sin(X_{t})+\varepsilon_{t+1}^Y\). Note that \ref{AssumptionA1} and \ref{AssumptionA2} are automatically satisfied if $\textbf{X}\not\toverset{str}{\to} \textbf{Y}$.

\begin{proposition}\label{Proposition_CTC_iff_1}
If $\textbf{X}\toverset{ext}{\longrightarrow}\textbf{Y}$, then $\textbf{X}\toverset{tail}{\longrightarrow}\textbf{Y}$. Under Assumption \ref{AssumptionA1},  if $\textbf{X}\toverset{tail}{\longrightarrow}\textbf{Y}$, then $\textbf{X}\toverset{ext}{\longrightarrow}\textbf{Y}$. 
\end{proposition}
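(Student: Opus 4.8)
The plan is to prove the two implications separately; the first is essentially immediate, and the second rests on the dichotomy built into Assumption~\ref{AssumptionA1}.

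For $\textbf{X}\toverset{ext}{\longrightarrow}\textbf{Y}\Rightarrow\textbf{X}\toverset{tail}{\longrightarrow}\textbf{Y}$: since $F(x)<1$ for every $x\in\mathbb{R}$ and $Y_{t+1}$ is a.s.\ finite, $F(Y_{t+1})<1$ a.s., hence $\Gamma^{\baseline}_{\textbf{X}\to\textbf{Y}\mid\mathcal{C}}=\mathbb{E}[F(Y_{t+1})\mid\mathcal{C}^{-\textbf{X}}_t]<1$ a.s.\ (the conditional expectation of a strictly positive variable is strictly positive). Since $\textbf{X}\toverset{ext}{\longrightarrow}\textbf{Y}$ means $\Gamma_{\textbf{X}\to\textbf{Y}\mid\mathcal{C}}=1$, the two coefficients disagree, which is exactly $\textbf{X}\toverset{tail}{\longrightarrow}\textbf{Y}$.

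For the converse, assume $\textbf{X}\toverset{tail}{\longrightarrow}\textbf{Y}$ and split on Assumption~\ref{AssumptionA1}. If $h_Y$ is constant in its first argument, then $Y_{t+1}$ is a measurable function of $(Y_t,\textbf{Z}_t,\varepsilon^Y_{t+1})$ only; using the noise independence~(\ref{IndependenceOfNoise}) (and the Markov structure to pass from $\sigma(Y_t,\textbf{Z}_t)$ to $\mathcal{C}^{-\textbf{X}}_t$), $\varepsilon^Y_{t+1}$ is independent of $X_t$ given $\mathcal{C}^{-\textbf{X}}_t$, so conditioning further on $\{X_t>v\}$ leaves the conditional law of $F(Y_{t+1})$ unchanged; thus $\mathbb{E}[F(Y_{t+1})\mid X_t>v,\mathcal{C}^{-\textbf{X}}_t]=\Gamma^{\baseline}_{\textbf{X}\to\textbf{Y}\mid\mathcal{C}}$ for all $v$, giving $\Gamma_{\textbf{X}\to\textbf{Y}\mid\mathcal{C}}=\Gamma^{\baseline}_{\textbf{X}\to\textbf{Y}\mid\mathcal{C}}$ and contradicting $\textbf{X}\toverset{tail}{\longrightarrow}\textbf{Y}$. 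Hence we are in the remaining case, $\lim_{x\to\infty}h_Y(x,y,\textbf{z},e)=\infty$ for all admissible $y,\textbf{z},e$, and it suffices to show $\Gamma_{\textbf{X}\to\textbf{Y}\mid\mathcal{C}}=1$. Conditioning on $\mathcal{C}^{-\textbf{X}}_t$ fixes $Y_t=y$, $\textbf{Z}_t=\textbf{z}$; using~(\ref{IndependenceOfNoise}) to make $\varepsilon^Y_{t+1}$ independent of $X_t$ under the conditioning, disintegrate over $\varepsilon^Y_{t+1}=e$ to write $\mathbb{E}[F(Y_{t+1})\mid X_t>v,\mathcal{C}^{-\textbf{X}}_t]$ as an average of $\mathbb{E}[F(h_Y(X_t,y,\textbf{z},e))\mid X_t>v,\mathcal{C}^{-\textbf{X}}_t]$, which, by monotonicity of the distribution function $F$, is at least $F\big(\inf_{x>v}h_Y(x,y,\textbf{z},e)\big)$. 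Because $h_Y(\cdot,y,\textbf{z},e)\to\infty$, this infimum diverges as $v\to\infty$, so this lower bound tends to $1$; being also bounded by $1$, the inner quantity tends to $1$, and dominated convergence (bound $1$), first in $e$ and then over $\mathcal{C}^{-\textbf{X}}_t$, gives $\Gamma_{\textbf{X}\to\textbf{Y}\mid\mathcal{C}}=\lim_{v\to\infty}\mathbb{E}[F(Y_{t+1})\mid X_t>v,\mathcal{C}^{-\textbf{X}}_t]=1$, i.e.\ $\textbf{X}\toverset{ext}{\longrightarrow}\textbf{Y}$.

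The main obstacle is the measure-theoretic bookkeeping in the last case: justifying the disintegration over $\varepsilon^Y_{t+1}$ and the interchange of $\lim_{v\to\infty}$ with conditional expectation while conditioning on the shrinking events $\{X_t>v\}$ --- which requires $P(X_t>v\mid\mathcal{C}^{-\textbf{X}}_t)>0$ for every $v$, guaranteed by $\textbf{X}$ being supported on a neighbourhood of infinity --- together with upgrading the conditional independence in~(\ref{IndependenceOfNoise}) from $\sigma(Y_t,\textbf{Z}_t)$ to all of $\mathcal{C}^{-\textbf{X}}_t$ via the 1-Markov property, and checking that the bounds are uniform enough for dominated convergence. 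Monotonicity of $F$ is what turns $h_Y\to\infty$ cleanly into $F(Y_{t+1})\to1$; without it the divergence would not transfer so directly.
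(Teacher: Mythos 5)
Your proof is correct, and the analytic core is the same as the paper's; the difference is purely in how the implications are organized. The paper proves Propositions~\ref{Proposition_CTC_iff_1} and~\ref{Proposition_Granger_equivalent_tail} simultaneously via the cycle $\textbf{X}\toverset{tail}{\to}\textbf{Y}\implies\textbf{X}\toverset{G}{\to}\textbf{Y}\implies\textbf{X}\toverset{ext}{\to}\textbf{Y}\implies\textbf{X}\toverset{tail}{\to}\textbf{Y}$, with Granger causality as an explicit waypoint, and it factors the hard step ($\textbf{X}\toverset{G}{\to}\textbf{Y}\implies\Gamma_{\textbf{X}\to\textbf{Y}\mid\mathcal{C}}=1$) through two lemmas: one reducing $\mathbb{E}[F(Y_{t+1})\mid X_t>v,\cdot]\to1$ to $P(Y_{t+1}>c\mid X_t>v,\cdot)\to1$ for every $c$, and one dominated-convergence lemma for $h(Z_1,Z_2)$ with $Z_2\indep Z_1$. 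You instead prove the two directions of Proposition~\ref{Proposition_CTC_iff_1} head-on, splitting on the dichotomy in Assumption~\ref{AssumptionA1} (your case~1 is exactly the contrapositive "not structural $\Rightarrow$ not tail", i.e.\ the paper's tail-$\Rightarrow$-Granger step in disguise), and you replace the two lemmas with the single monotone bound $\mathbb{E}[F(h_Y(X_t,y,\textbf{z},e))\mid X_t>v,\cdot]\geq F\bigl(\inf_{x>v}h_Y(x,y,\textbf{z},e)\bigr)\to1$, which is a legitimate shortcut since $F$ is a distribution function. Your version is slightly more self-contained for this one proposition; the paper's buys Proposition~\ref{Proposition_Granger_equivalent_tail} in the same pass. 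The measure-theoretic caveats you flag at the end (disintegration over $\varepsilon^Y_{t+1}$ justified by~(\ref{IndependenceOfNoise}) and the 1-Markov property, positivity of $P(X_t>v\mid\mathcal{C}^{-\textbf{X}}_t)$ from the support assumption, dominated convergence with bound~$1$) are exactly the ones the paper's Lemma on $h(Z_1,Z_2)$ handles, so nothing is missing.
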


\begin{proposition}\label{Proposition_Granger_equivalent_tail}
 If $\textbf{X}\toverset{tail}{\longrightarrow}\textbf{Y}$, then $\textbf{X}\toverset{G}{\to}\textbf{Y}$. Under Assumption \ref{AssumptionA1},  if $\textbf{X}\toverset{G}{\to}\textbf{Y}$, then $\textbf{X}\toverset{tail}{\to}\textbf{Y}$. 
\end{proposition}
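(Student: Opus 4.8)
The plan is to establish the two implications separately, using the structural representation of Definition~\ref{Definition_str}, the $1$-Markov property, and Proposition~\ref{Proposition_CTC_iff_1}.

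For ``$\textbf{X}\toverset{tail}{\longrightarrow}\textbf{Y}$ implies $\textbf{X}\toverset{G}{\to}\textbf{Y}$'' I would argue by contraposition. Suppose $\textbf{X}\not\toverset{G}{\to}\textbf{Y}$, i.e.\ $Y_{t+1}\indep\textbf{X}_{\past(t)}\mid\mathcal{C}^{-\textbf{X}}_t$. Apply the elementary property of conditional independence --- if $U\indep V\mid\mathcal{G}$ and $B\in\sigma(V)$ then $\mathbb{E}[\phi(U)\mid\mathcal{G}\vee\sigma(\mathbbm{1}_B)]=\mathbb{E}[\phi(U)\mid\mathcal{G}]$ on $\{P(B\mid\mathcal{G})>0\}$ --- with $U=Y_{t+1}$, $V=\textbf{X}_{\past(t)}$, $\mathcal{G}=\mathcal{C}^{-\textbf{X}}_t$ and $B=\{X_t>v\}\in\sigma(X_t)\subseteq\sigma(\textbf{X}_{\past(t)})$. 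This yields $\mathbb{E}[F(Y_{t+1})\mid X_t>v,\mathcal{C}^{-\textbf{X}}_t]=\mathbb{E}[F(Y_{t+1})\mid\mathcal{C}^{-\textbf{X}}_t]=\Gamma^{\baseline}_{\textbf{X}\to\textbf{Y}\mid\mathcal{C}}$ for every $v$, and letting $v\to\infty$ gives $\Gamma_{\textbf{X}\to\textbf{Y}\mid\mathcal{C}}=\Gamma^{\baseline}_{\textbf{X}\to\textbf{Y}\mid\mathcal{C}}$, i.e.\ $\textbf{X}$ does not tail-cause $\textbf{Y}$.

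For the converse, assume $\textbf{X}\toverset{G}{\to}\textbf{Y}$. Under~\eqref{IndependenceOfNoise}, Granger causality implies structural causality, so $h_Y$ is not constant in $x$, and Assumption~\ref{AssumptionA1} then forces $h_Y(x,y,\textbf{z},e)\to\infty$ as $x\to\infty$ for all admissible $y,\textbf{z},e$. By Proposition~\ref{Proposition_CTC_iff_1} it suffices to show $\Gamma_{\textbf{X}\to\textbf{Y}\mid\mathcal{C}}=1$ (i.e.\ $\textbf{X}\toverset{ext}{\longrightarrow}\textbf{Y}$), since that entails $\textbf{X}\toverset{tail}{\longrightarrow}\textbf{Y}$. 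By the $1$-Markov property, $Y_{t+1}$ is independent of $(\textbf{X}_{\past(t-1)},\textbf{Y}_{\past(t-1)},\textbf{Z}_{\past(t-1)})$ given $(X_t,Y_t,\textbf{Z}_t)$, so with $g(x,y,\textbf{z}):=\mathbb{E}[F(Y_{t+1})\mid X_t=x,Y_t=y,\textbf{Z}_t=\textbf{z}]$ one gets $\mathbb{E}[F(Y_{t+1})\mid X_t>v,\mathcal{C}^{-\textbf{X}}_t]=\mathbb{E}[g(X_t,Y_t,\textbf{Z}_t)\mid X_t>v,\mathcal{C}^{-\textbf{X}}_t]$, where $Y_t,\textbf{Z}_t$ are $\mathcal{C}^{-\textbf{X}}_t$-measurable and the conditional law of $X_t$ appearing here is supported on $(v,\infty)$. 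Substituting $Y_{t+1}=h_Y(X_t,Y_t,\textbf{Z}_t,\varepsilon^Y_{t+1})$ and using~\eqref{IndependenceOfNoise}, which makes the conditional law $\nu_{y,\textbf{z}}$ of $\varepsilon^Y_{t+1}$ given $(Y_t,\textbf{Z}_t)=(y,\textbf{z})$ free of $x$, gives $g(x,y,\textbf{z})=\int F\big(h_Y(x,y,\textbf{z},e)\big)\,d\nu_{y,\textbf{z}}(e)$. Since $h_Y(x,y,\textbf{z},e)\to\infty$ and $F$ is a distribution function that is bounded away from $1$ and tends to $1$ at infinity, dominated convergence yields $g(x,y,\textbf{z})\to1$ as $x\to\infty$; averaging over the conditional law of $X_t$ on $(v,\infty)$ and letting $v\to\infty$ gives $\Gamma_{\textbf{X}\to\textbf{Y}\mid\mathcal{C}}=1$.

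The main obstacle is the converse, specifically the simultaneous conditioning on the event $\{X_t>v\}$ and on the infinite information set $\mathcal{C}^{-\textbf{X}}_t$. One must use the $1$-Markov property to reduce $\mathcal{C}^{-\textbf{X}}_t$ to $(Y_t,\textbf{Z}_t)$ so that only the one-step kernel $g$ is relevant, and then use~\eqref{IndependenceOfNoise} to ensure the conditional law of the innovation $\varepsilon^Y_{t+1}$ does not degenerate under $\{X_t>v\}$; without this last ingredient, Assumption~\ref{AssumptionA1} --- which only delivers $h_Y\to\infty$ pointwise in $e$, with an $e$-dependent rate --- would not be enough to drive $g(x,y,\textbf{z})$ to $1$. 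The remaining steps are routine measure-theoretic bookkeeping together with dominated convergence, and the insensitivity of the conclusion to the choice of $F$ is exactly Lemma~\ref{lemma_o_invariance_about_F}.
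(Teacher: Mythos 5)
Your proof is correct and follows essentially the same route as the paper's: the forward implication by contraposition from the conditional independence defining Granger non-causality, and the converse by passing through structural causality, Assumption (A1), and condition (3) to establish $\textbf{X}\toverset{ext}{\longrightarrow}\textbf{Y}$, which implies tail causality by Proposition 1. The only cosmetic difference is that you apply dominated convergence directly to $\int F\bigl(h_Y(x,y,\textbf{z},e)\bigr)\,d\nu_{y,\textbf{z}}(e)$ using $F(x)\to 1$ at infinity, whereas the paper first converts the statement $\mathbb{E}[F(Y_{t+1})\mid\cdot]\to 1$ into tail probabilities $P(Y_{t+1}>c\mid\cdot)\to 1$ (its Lemma S.1) and then runs the same dominated-convergence argument at the level of probabilities (its Lemma S.2).
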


The proofs are presented in Section~\ref{Proof_of_propositions_1_and_2} of the supplementary material. 
Combining Propositions \ref{Proposition_CTC_iff_1} and \ref{Proposition_Granger_equivalent_tail}, $\textbf{X}\toverset{ext}{\longrightarrow}\textbf{Y}$ implies $\textbf{X}\toverset{G}{\to}\textbf{Y}$ as long as both definitions are well-defined.  
Moreover, under Assumption~\ref{AssumptionA1}, the other implication is also valid. This result is related to faithfulness. Structural causality is a property of the
structural equation, whereas Granger causality is a property of the observational
distribution. Thus, a structural dependence of \(Y_{t+1}\) on \(X_t\) may fail to
imply Granger causality if it is distributionally masked by changes only on null sets or cancellations, as in unfaithful SCMs. Assumptions A1/A2 play a related tail-specific role: they ensure that structural
dependence remains visible in the tails.

As a final note, Lemma~\ref{lemma_o_invariance_about_F} formalises the invariance of Definition~\ref{definiton_of_tail_causality} to the choice of $F$.

\begin{lemma}\label{lemma_o_invariance_about_F}
Under Assumption \ref{AssumptionA1}, the definition of  $\textbf{X}\toverset{tail}{\longrightarrow}\textbf{Y}$ is invariant with the choice of $F$. That is, for any distribution functions $F_1, F_2$ satisfying $F_i(x)<1$ for all $x\in\mathbb{R}$:
\begin{equation*}
    \begin{split}
        \lim_{v\to\infty} \mathbb{E}[ F_1(Y_{t+1})\mid X_t>v,&  \, \mathcal{C}^{-\textbf{X}}_t]\neq\mathbb{E}[ F_1(Y_{t+1})\mid  \mathcal{C}^{-\textbf{X}}_t]\\
        & \iff \\
        \lim_{v\to\infty} \mathbb{E}[ F_2(Y_{t+1})\mid X_t>v,&\,  \mathcal{C}^{-\textbf{X}}_t]\neq\mathbb{E}[ F_2(Y_{t+1})\mid  \mathcal{C}^{-\textbf{X}}_t]. 
    \end{split}
\end{equation*}
\end{lemma}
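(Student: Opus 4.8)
The plan is to reduce the statement to (classical) Granger causality, a property in which $F_{}$ does not appear at all, and then to read off the invariance from Proposition~\ref{Proposition_Granger_equivalent_tail}. For a distribution function $F$ with $F(x)<1$ for every $x\in\mathbb{R}$, write $\textbf{X}\overset{\mathrm{tail},F}{\longrightarrow}\textbf{Y}$ for the defining inequality of $\textbf{X}\toverset{tail}{\longrightarrow}\textbf{Y}$ in Definition~\ref{definiton_of_tail_causality} evaluated with that particular choice of the function there, i.e.\ $\lim_{v\to\infty}\mathbb{E}[F(Y_{t+1})\mid X_t>v,\mathcal{C}^{-\textbf{X}}_t]\neq\mathbb{E}[F(Y_{t+1})\mid\mathcal{C}^{-\textbf{X}}_t]$. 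With this notation the lemma is exactly the equivalence $\textbf{X}\overset{\mathrm{tail},F_1}{\longrightarrow}\textbf{Y}\iff\textbf{X}\overset{\mathrm{tail},F_2}{\longrightarrow}\textbf{Y}$ for any two admissible $F_1,F_2$. I would rely on the observation that Propositions~\ref{Proposition_CTC_iff_1} and~\ref{Proposition_Granger_equivalent_tail} are established for an \emph{arbitrary} admissible choice of the function in Definition~\ref{definiton_of_tail_causality} (their proofs only use that it is nondecreasing, bounded by $1$, and tends to $1$ at $+\infty$), so they may be applied with $F_{}$ equal to $F_1$ or to $F_2$.

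By symmetry in $F_1,F_2$ it suffices to prove one implication, say $\textbf{X}\overset{\mathrm{tail},F_1}{\longrightarrow}\textbf{Y}\Rightarrow\textbf{X}\overset{\mathrm{tail},F_2}{\longrightarrow}\textbf{Y}$. First I would apply the assumption-free half of Proposition~\ref{Proposition_Granger_equivalent_tail} with $F_{}:=F_1$: from $\textbf{X}\overset{\mathrm{tail},F_1}{\longrightarrow}\textbf{Y}$ one obtains $\textbf{X}\toverset{G}{\to}\textbf{Y}$, i.e.\ $Y_{t+1}\notindep\textbf{X}_{\past(t)}\mid\mathcal{C}^{-\textbf{X}}_t$, a statement in which $F$ no longer occurs. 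Then I would apply the other half of Proposition~\ref{Proposition_Granger_equivalent_tail} — the one using Assumptions~\ref{AssumptionA1} and~\eqref{IndependenceOfNoise}, which are precisely the hypotheses of the present lemma — with $F_{}:=F_2$, to pass from $\textbf{X}\toverset{G}{\to}\textbf{Y}$ to $\textbf{X}\overset{\mathrm{tail},F_2}{\longrightarrow}\textbf{Y}$. Chaining the two implications gives the equivalence. Equivalently and more symmetrically: under the stated assumptions Proposition~\ref{Proposition_Granger_equivalent_tail} says that \emph{for every} admissible $F$ the property $\textbf{X}\overset{\mathrm{tail},F}{\longrightarrow}\textbf{Y}$ is equivalent to the single, $F$-independent property $\textbf{X}\toverset{G}{\to}\textbf{Y}$, so all of them are mutually equivalent — which is the assertion of the lemma.

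\textbf{Main obstacle.} The only delicate point — and the place where the substantive content actually sits — is the clause invoked above, namely that Proposition~\ref{Proposition_Granger_equivalent_tail} holds for a generic admissible $F$ and not merely for $F=F_Y$. If pressed to make the argument self-contained rather than quoting the proposition, I would verify the usual dichotomy directly. When $\textbf{X}\not\toverset{G}{\to}\textbf{Y}$ one has $Y_{t+1}\indep\textbf{X}_{\past(t)}\mid\mathcal{C}^{-\textbf{X}}_t$, hence $Y_{t+1}\indep\{X_t>v\}\mid\mathcal{C}^{-\textbf{X}}_t$ for every $v$, so $\mathbb{E}[F(Y_{t+1})\mid X_t>v,\mathcal{C}^{-\textbf{X}}_t]=\mathbb{E}[F(Y_{t+1})\mid\mathcal{C}^{-\textbf{X}}_t]$ for all $v$ and all $F$, and $\textbf{X}\overset{\mathrm{tail},F}{\longrightarrow}\textbf{Y}$ fails simultaneously for every $F$. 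When $\textbf{X}\toverset{G}{\to}\textbf{Y}$, Assumption~\ref{AssumptionA1} forces $h_Y(x,y,\textbf{z},e)\to\infty$ as $x\to\infty$, and together with~\eqref{IndependenceOfNoise} this drives the conditional law of $Y_{t+1}$ given $\{X_t>v\}$ and $\mathcal{C}^{-\textbf{X}}_t$ off to $+\infty$ as $v\to\infty$; since $0\le F\le1$ and $F(x)\to1$ this yields $\lim_{v\to\infty}\mathbb{E}[F(Y_{t+1})\mid X_t>v,\mathcal{C}^{-\textbf{X}}_t]=1$ for every admissible $F$, whereas $\mathbb{E}[F(Y_{t+1})\mid\mathcal{C}^{-\textbf{X}}_t]<1$ almost surely because $F<1$ everywhere and $Y_{t+1}$ is a.s.\ finite, so $\textbf{X}\overset{\mathrm{tail},F}{\longrightarrow}\textbf{Y}$ holds simultaneously for every $F$. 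Either way, ``$\textbf{X}\overset{\mathrm{tail},F}{\longrightarrow}\textbf{Y}$'' is an all-or-nothing statement across admissible $F$, which is the lemma. Thus the hard step is the ``escape to $+\infty$'' argument under~\ref{AssumptionA1} and~\eqref{IndependenceOfNoise}, already carried out in the proof of Proposition~\ref{Proposition_Granger_equivalent_tail}, and I would not reprove it here.
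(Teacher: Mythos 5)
Your proposal is correct and follows essentially the same route as the paper: the paper's own proof also chains the equivalences through the $F$-independent statement $\textbf{X}\toverset{G}{\to}\textbf{Y}$ via Propositions~\ref{Proposition_CTC_iff_1} and~\ref{Proposition_Granger_equivalent_tail}, relying on the fact that those propositions hold for any admissible choice of $F_{}$. Your additional self-contained verification of the dichotomy is exactly the content already carried out in the proofs of those propositions, so nothing is missing.
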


\subsection{Extensions}
\label{section_extensions}

Definition~\ref{definiton_of_tail_causality} focuses on upper-tail effects from \(X_t\) to \(Y_{t+1}\). We briefly describe four extensions that are useful in applications: non-unit causal lags, two-sided extremes, bounded support, and instantaneous extremal effects. The formulations are developed in detail in Supplement~\ref{Appendix_A}; here, we present only the main ideas.

\subsubsection{Non-unit causal lags}\label{lag_extention_in_section2}
In many time series, an extreme event in \(X_t\) may affect \(Y\) only after several time steps. For \(p\in\mathbb N\), define
\[
\Gamma^t_{\mathbf X\to \mathbf Y\mid\mathcal C}(p)
:=
\lim_{v\to\infty}
\mathbb E\!\left[
\max\{F(Y_{t+1}), \dots, F(Y_{t+p})\}
\,\middle|\,
X_t>v,\mathcal C^{-X_t}_t
\right],
\]
where \(
\mathcal C^{-X_t}_t
:=
\sigma(X_s:s<t,\;Y_s,Z_s:s\le t).
\) The corresponding baseline coefficient is defined analogously as  
\(
\Gamma^{t,\baseline}_{\mathbf X\to \mathbf Y\mid\mathcal C}(p)
:=
\mathbb E\!\left[
\max_{1\le j\le p} F(Y_{t+j})
\,\middle|\,
\mathcal C^{-X_t}_t
\right].
\) The max-lag \(p\) version of Definition~\ref{definiton_of_tail_causality} is obtained by replacing the unit-lag coefficient
\(\Gamma^t_{\mathbf X\to \mathbf Y\mid\mathcal C}\)
with
\(\Gamma^t_{\mathbf X\to \mathbf Y\mid\mathcal C}(p)\).
We write $\textbf{X}\toverset{tail(p)}{\longrightarrow}\textbf{Y}$ and $\textbf{X}\toverset{ext(p)}{\longrightarrow}\textbf{Y}$. 

When \(p\) is large enough to include the relevant causal delay in \eqref{structural_generation_lag}, the lagged coefficient recovers the corresponding main-text implications. In particular, Propositions~\ref{proposition_with_lag_1} and~\ref{proposition_with_lag_2} located in Supplement~\ref{Appendix_A_lag_greater_than_1} show that\footnote{Sims causality \citep{sims} is closely related to Granger causality;  see Supplement~\ref{appendix_lag_section2} for details.}
\[
\mathbf X\overset{{\rm ext}(p)}{\longrightarrow}\mathbf Y
\implies
\mathbf X\overset{{\rm tail}(p)}{\longrightarrow}\mathbf Y
\implies
\mathbf X\toverset{Sims}{\longrightarrow}\mathbf Y,
\]
with the converse implication valid an analogous condition to A1, for \(p\) large enough to cover the first relevant causal delay. The lagged analogue of Theorem~\ref{Theorem1} is discussed in Supplement~\ref{supplement_section3_lagged}, and the corresponding estimator and algorithmic adaptations are described in Supplement~\ref{supplement_section4_lagged}.

\subsubsection{Instantaneous extremal effects}
In practice, measurements are often collected in large steps, and hence, one may be interested in instantaneous extremal effects, where an extreme event in \(X_t\) is associated with an extreme event in \(Y_t\). Such effects are not Granger-causal in the usual sense and require an additional contemporaneous causal ordering or structural model to define a causal notion. Nevertheless, we define
\[
\Gamma^{t}_{\mathbf X\to \mathbf Y\mid\mathcal C}([0,p])
:=
\lim_{v\to\infty}
\mathbb E\!\left[
\max\{F(Y_t), \dots, F(Y_{t+p})\}
\,\middle|\,
X_t>v,\mathcal C^{-\{X_t,Y_t\}}_t
\right],
\]
where \(\mathcal C^{-\{X_t,Y_t\}}_t\) contains the admissible information available at time \(t\) excluding \(X_t\) and \(Y_t\). The corresponding causality in extremes is defined analogously with this modified coefficient. Formal definition is located in Appendix~\ref{Appendix_A_Instantaneous}. 

In Proposition~\ref{theorem_instantaneous}, located in Appendix~\ref{Appendix_A_Instantaneous}, we show that, under additive model assumptions and heavy-tailed innovations and $p=0$, $\Gamma^{t}_{\mathbf X\to \mathbf Y\mid\mathcal C}([0, p]) = 1$ if and only if $X_t$ enters the data-generating process of $Y_t$. Although this notion can be useful for systems with contemporaneous interactions, its causal interpretation requires assumptions substantially stronger than causal sufficiency. 

\subsubsection{Both tails}
The upper-tail formulation is appropriate when only large positive values are of interest. In many applications, however, both unusually large positive and unusually large negative values are relevant. In this case, we apply the same definition to \((|\mathbf X|,|\mathbf Y|)\). More precisely, we define
\[
\Gamma^t_{|\mathbf X|\to |\mathbf Y|\mid\mathcal C}(p)
:=
\lim_{v\to\infty}
\mathbb E\!\left[
\max_{1\le j\le p} F(|Y_{t+j}|)
\,\middle|\,
|X_t|>v,\mathcal C^{-X_t}_t
\right].
\]
The corresponding baseline coefficient is defined analogously, without the conditioning. We write
\(\mathbf{X}\toverset{tail\pm(p)}{\longrightarrow}\mathbf{Y}\) if this coefficient differs from its baseline, and
\(\mathbf{X}\toverset{ext\pm(p)}{\longrightarrow}\mathbf{Y}\) if it is equal to one almost surely. This extension captures extremal up-movements and down-movements simultaneously.

Supplement~\ref{Appendix_A_both_tails} contains counterparts of Propositions~\ref{Proposition_CTC_iff_1}, \ref{Proposition_Granger_equivalent_tail} and Lemma \ref{lemma_o_invariance_about_F} for causality in both tails. We also discuss the extension of the results from Section~\ref{section3} in Supplement~\ref{appendix_both_tails_section3} and estimator from Section~\ref{Section_estimation} in Supplement~\ref{appendix_both_tails_section4}  . 

\subsubsection{Bounded support}
The assumption that \(X_t\) and \(Y_{t+1}\) are supported near infinity is mainly a notational convenience. If \(X\) or \(Y\) has a finite upper endpoint, extremes are interpreted as values approaching the corresponding endpoint. As formalized in Supplement~\ref{Appendix_A_support}, this can be handled by replacing the limit in \eqref{CTC_definition} from \(v\to\infty\) to \(v\uparrow r_X\), where \(r_X\) is the right endpoint of the support of \(X\), and by choosing \(F_Y\) so that \(F_Y(y)\uparrow1\) as \(y\uparrow r_Y\).

\section{Robustness of causality in extremes to hidden confounders under regular variation}
\label{section3}
Causality in extremes offers a significant advantage over causality in the mean in terms of robustness to hidden confounders, at the population level. We demonstrate that, under certain assumptions about the tails of the time series,
\begin{equation}\label{OhYeah}%
\Gamma_{\textbf{X}\to\textbf{Y}\mid\mathcal{C}} = 1 \iff  \Gamma_{\textbf{X}\to\textbf{Y}\mid \emptyset} = 1.
\end{equation}
This is particularly valuable in high-dimensional datasets, where there are numerous potential confounders for $\textbf{X}$ and $\textbf{Y}$, making it challenging to distinguish between true causality and correlation induced by a hidden confounder. Equation (\ref{OhYeah}) allows us to focus solely on the coefficient $\Gamma_{\textbf{X}\to\textbf{Y}\mid \emptyset}$ without the need to condition on the potentially high-dimensional confounders. Implication ``$\Longrightarrow$'' in (\ref{OhYeah}) is relevant for testing non-causality, while ``$\Longleftarrow$'' is pertinent for testing causality.

The goal of this section is to establish the assumption for the validity of  (\ref{OhYeah}). We demonstrate that the implication ``$\Longrightarrow$'' is valid under mild assumptions, whereas the ``$\Longleftarrow$'' implication requires stronger assumptions regarding the tails of the variables. 

A similar robustness property has previously been leveraged in \cite{Gnecco} and \cite{bodik}, whose frameworks depend on the distributions having heavy tails and implicitly rely on their coefficient's robustness to confounding. We demonstrate that this population-level robustness to confounders extends to more general stochastic recurrence equations, beyond previously studied linear frameworks. 

\subsection{Preliminaries for regular variation and SRE}\label{Section_preliminaries_RV_and_SRE}

A dominant framework in the literature for modeling tails of random variables is regular variation framework \citep{RegularVariationBook, Heavy_tailed_time_series}.  A real random variable $X$ is regularly varying with tail index $\theta>0$, if its distribution is in the form  $F_X(x)=1-x^{-\theta} L(x)$ for some slowly varying function $L$, i.e., a function satisfying $\lim_{x\to\infty}\frac{L(c x)}{L(x)} = 1$ for every $c > 0$. This property is denoted by $X\sim \RV(\theta)$.  Regular variation describes that a tail decays polynomially (note that the tail of normal distribution decays exponentially). Smaller $\theta$ implies heavier tails; in particular, the k-th moment of $X$ does not exist when $\theta\leq k$. We say that random variables $X,Y$ have compatible tails, if $\lim_{x\to\infty}\frac{P(X>x)}{P(Y>x)}\in (0, \infty)$.
For real functions $f,g$, we write $f(x)\sim g(x) \iff \lim_{x\to\infty}\frac{f(x)}{g(x)}=1$.

We consider the stochastic recurrence equation \citep[SRE,][]{SRE}
\begin{equation}\label{generalSRE}
    \textbf{W}_t = \textbf{A}_t\textbf{W}_{t-1} + 
 \textbf{B}_t, \qquad t\in\mathbb{Z},
\end{equation}
where $(\textbf{A}_t, \textbf{B}_t)$ is an i.i.d.\ random sequence, $\textbf{A}_t$ are $d\times d$ matrices and $\textbf{B}_t$ are $d$ dimensional vectors. This model of time series is quite general, with VAR(1) or ARCH(1) models as special cases. 
Under mild contractivity assumptions $\mathbb{E}\log||\textbf{A}_t|| < 0$ and $\mathbb{E}\log^+|\textbf{B}_t| < \infty$ (where \(
\log^+ x := \max\{\log x, 0\}
\), see Chapter 2.2 in \cite{SRE}), the sequence $ \textbf{W}_t$ is strictly stationary, ergodic, can be rewritten as $\textbf{W}_0 = \sum_{i=0}^\infty \pi_{i-1}\textbf{B}_{-i}$, where $\pi_i = \textbf{A}_{0}\textbf{A}_{-1}\dots \textbf{A}_{-i}$ with a convention that $\pi_{-1}$ is an identity matrix, and  
satisfy a distributional equality 
\begin{equation*} 
\tilde{\textbf{W}}\toverset{d}{=} \tilde{\textbf{A}} \tilde{\textbf{W}} +  \tilde{\textbf{B}}, \qquad \tilde{\textbf{W}}\indep (\tilde{\textbf{A}},\tilde{\textbf{B}}),
\end{equation*}
where $(\tilde{\textbf{A}},\tilde{\textbf{B}})\toverset{d}{=}({\textbf{A}_1},{\textbf{B}_1})$ and $\tilde{\textbf{W}}\toverset{d}{=}\textbf{W}_0$ are generic elements. 

In the univariate case ($d=1$), the distribution of $\tilde{W}$ is regularly varying under mild assumptions on the distribution of $(A,B)$ \citep{kesten1973random}. In the literature, these assumptions typically mainly include one of the following two. 

\begin{assumption*}[Grey assumption with index $\alpha$]
There exists $\alpha>0$ such that $\mathbb{E}|A|^\alpha < 1, \mathbb{E}|A|^{\alpha+\nu}<\infty$ for some $\nu>0$ and such that
\begin{equation*}
    P(B>x)\sim p_\alpha x^{-\alpha} l(x) \;\; \text{ and } \;\; P(-B>x)\sim q_\alpha x^{-\alpha} l(x) 
\end{equation*}
with $p_\alpha, q_\alpha\geq0$, $p_\alpha\neq 0$, $ p_\alpha+q_\alpha=1$, where $l(x)$ is a slowly varying function.  
\end{assumption*}
\begin{assumption*}[Kesten-Goldie assumption with index $\alpha$]
There exists $\alpha>0$ such that $\mathbb{E}|A|^\alpha = 1, \mathbb{E}|A|^{\alpha}\log^+|A|<\infty$ and $\mathbb{E}|B|^\alpha<\infty$. Moreover, $P(Ax+B=x)<1$ for every $x\in\mathbb{R}$ and the conditional law of $\log|A|$ given $\{A\neq 0\}$ is non-arithmetic. 
\end{assumption*}
The Grey assumption is typically of interest in VAR models, whereas the Kesten-Goldie assumption is pertinent in GARCH models \citep{Pedersen}. 
\subsection{Causality in extremes under regular variation}\label{Section_causality_under_RV}
To demonstrate (\ref{OhYeah}), we adopt the assumption that our time series adhere to the SRE model (\ref{generalSRE}), denoted as follows:

\begin{equation}\label{sre2}
\textbf{W}_t = \begin{pmatrix}
Z_t  \\
X_t  \\
Y_t \\
\end{pmatrix}
, 
\textbf{A}_t = \begin{pmatrix}
A^z_{1,t} & A^z_{2,t} & A^z_{3,t} \\
A^x_{1,t} & A^x_{2,t} & A^x_{3,t} \\
A^y_{1,t}& A^y_{2,t} & A^y_{3,t} \\
\end{pmatrix}
, \textbf{B}_t=
\begin{pmatrix}
B^z_{t} \\
B^x_{t}  \\
B^y_{t}  \\
\end{pmatrix}, \varepsilon_t^\cdot = (A^\cdot_{1,t},A^\cdot_{2,t},A^\cdot_{3,t},B^\cdot_t)^\top.
\end{equation}

For simplicity, we assume that \(\textbf{Z}\), which represents a potential hidden cause of \(\textbf{X}\), of \(\textbf{Y}\), or of both, is univariate.
Additionally, we assume that $B^x_t, B^y_t$ are supported on some neighborhood of infinity for all $t\in\mathbb Z$. We operate under the following assumptions:
\begin{itemize}
\item[(B1)] $\mathbb{E}[\log||\textbf{A}_t||] < 0$ and $\mathbb{E}[\log^+|\textbf{B}_t|] < \infty$,  \hfill (stationarity and ergodicity assumption)

\item[(B2)] $\varepsilon_t^{z}, \varepsilon_t^{x}, \varepsilon_t^{y}$ are independent for all $t\in\mathbb{Z}$, \hfill (no instantaneous causality)

\item[(B3)]  $ \lim_{v\to\infty }\mathbb P\left(Z_t>-av\mid X_t>v,Y_{past(t)}\right)= 1$ for every $a>0$, \hfill (upper-tails only)
 
\item[(B4)] $A_{j,t}^i \overset{a.s.}{>} 0$ for all $t\in\mathbb{Z}$ and $j=1,2,3$, $i=z,x,y$ satisfying $P(A_{j,t}^i = 0) \neq 1$,

\item[(B5)] $A_{j,t}^i$ has a density function absolutely continuous with respect to Lebesgue measure for all $t\in\mathbb{Z}$, $j=1,2,3$, $i=z,x,y$ satisfying $P(A_{j,t}^i = 0) \neq 1$.  
\end{itemize}

Condition (B1) ensures stationarity and ergodicity of the time series; in particular,
it implies that \(\Gamma_{\mathbf X\to\mathbf Y\mid\mathcal C}\) does not depend on \(t\). Condition (B2) rules out instantaneous causality. (B3) rules out the case where an extreme value of $X_t$ is systematically accompanied by an extreme negative value of $Z_t$.   Note that (B4) together with (B2) implies Assumption A1, while (B5) together with (B2) implies
Assumption A2.

Under these assumptions, Theorem~\ref{Theorem1} is the main result of this section:
it shows that the implication ``\(\implies\)'' in \eqref{OhYeah} holds under relatively
weak assumptions, whereas the reverse implication holds for regularly varying processes.

\begin{theorem}\label{Theorem1}
Consider the SRE model (\ref{sre2}), satisfying (B1), (B2), (B4). 
\begin{itemize}
    \item Under (B3), $\Gamma_{\textbf{X}\to\textbf{Y}\mid\mathcal{C}} = 1 \implies \Gamma_{\textbf{X}\to\textbf{Y}\mid \emptyset} = 1. $
\item If the pairs $(A^x_{1,t},B^x_t)^\top,(A^x_{2,t},B^x_t)^\top,(A^x_{3,t},B^x_t)^\top$ satisfy the Grey assumption with index $\alpha_x$, and $\limsup_{u\to\infty}\frac{P(X_t>u\mid Y_{past(t)})}{P(B_t^x>u)}\overset{a.s.}{<}\infty$, then $\Gamma_{\textbf{X}\to\textbf{Y}\mid\mathcal{C}} = 1 \;\Longleftarrow\; \Gamma_{\textbf{X}\to\textbf{Y}\mid \emptyset} = 1. $
\end{itemize}
\end{theorem}
The proof is given in Supplement~\ref{proof_of_thm1}. The condition
\(
\limsup_{u\to\infty}\frac{P(X_t>u\mid Y_{past(t)})}{P(B_t^x>u)}<\infty
\)
ensures that some extreme events originate in \(X_t\), and its tail is not entirely determined by the tails of
$\textbf{W}_{past(t)}$. The computation of this limit in stochastic recurrence equations has been extensively studied; see
\citet[Theorem~4.4.24]{SRE} and \cite{Ewa_Damek}.

Theorem~\ref{Theorem1} shows that, under Grey-type tail assumptions, the adjustment
process \(\mathbf Z\) can be ignored when assessing causality in extremes from
\(\mathbf X\) to \(\mathbf Y\), provided that the extreme behavior
of \(X_t\) is sufficiently driven by its own innovation \(B_t^x\). In the SRE representation, this corresponds to settings where the
contribution of the confounding pathway through \(A^y_{1,t}B_t^z\) is not heavier-tailed
than the contribution of the direct pathway through \(A^y_{2,t}B_t^x\). Whether analogous
results hold under the Kesten--Goldie assumption, instead of the Grey assumption, remains
an open problem.

\section{Estimation and causal discovery}
\label{Section_estimation}

We introduce a family of estimators of ${\Gamma}_{\textbf{X}\to\textbf{Y}\mid\mathcal{\textbf{Z}}}$ and a classification procedure that outputs either $\textbf{X}\toverset{ext}{\to}\textbf{Y}$ or $\textbf{X}\toverset{ext}{\not\to}\textbf{Y}$ from data. We denote by  $\textbf{Z} = (\textbf{Z}_t, t\in\mathbb{Z})$ a vector of other relevant time series (possible confounders) with dimension $\dim(\textbf{Z}_t) = d\in\mathbb{N}$.  We assume that we observe $n\in\mathbb{N}$ time steps of the series $(x_1, y_1, \textbf{z}_1)^\top, \dots, (x_n, y_n, \textbf{z}_n)^\top$. 
\begin{definition}\label{Gamma_hat_original}
Let $F$ be a distribution function from Definition~\ref{definiton_of_tail_causality}. We propose a general covariate-adjusted estimator of the form
  \begin{equation}\label{Gamma_hat_equation}
\hat{\Gamma}_{\textbf{X}\to\textbf{Y}\mid\mathcal{\textbf{Z}}}:=\frac{1}{|S|}\sum_{t\in S} F(y_{t+1}),
  \end{equation}
 where several choices for the set $S\subseteq\{1, \dots, n-1\}$ are described below. 
\end{definition}

One possible choice, leading to an unadjusted estimator, is $S_0:= \{t\in \{1, \dots, n-1\} : x_t\geq\tau_n^X\}$, where $\tau_n^X\in\mathbb{R}^{\mathbb{N}}$ is a sequence satisfying $\tau_n^X\to\infty$ and $|S_0|\to\infty$ in probability. In practice, this can be achieved by taking $\tau_n^X=x_{(n-k_n+1)}$, the $k_n$-th largest value of $x_1, \dots, x_n$, where $k_n$ is any sequence satisfying
\begin{equation}
 \label{k_deleno_n}
 k_n\to\infty, \; \frac{k_n}{n}\to 0, \quad \text{ as } n\to\infty.
\end{equation}
The estimator introduced in \cite{bodik} can be viewed as this special case.

\subsection{Conditioning on confounders being non-extreme}
Although we have shown the population coefficient $\Gamma_{\textbf{X}\to\textbf{Y}\mid\emptyset}$ to be robust to confounders, in the sense of~\eqref{OhYeah}, confounding effects can still have undesirable impacts on finite sample estimation and testing, as shown by~\cite{Pasche} in the i.i.d.\ case. 
We present alternative choices for the set $S$ in \eqref{Gamma_hat_equation} with the aim of removing the confounding influence of $\textbf{Z}$ in the extremes and enhancing the efficacy of the estimator in scenarios with different tail behaviors.
The general idea of the sets we propose in the definitions below is to condition on  $X_t$ being extreme, while we condition on all other relevant variables not being extreme. 
This ensures that an extreme event in $Y_{t+1}$ is indeed caused by an extreme event in $X_{t}$, and it is not caused by a common confounder $\textbf{Z}_{t}$ or $Y_{t}$. 
\begin{definition}\label{Def_estim_S1}
  Let \begin{equation*}
      S_{1} := \{t\in\{1, \dots, n-1\}: X_t\geq \tau_n^X, 
      \begin{pmatrix}
    Y_t \\
    \textbf{Z}_t 
\end{pmatrix}   \leq  \boldsymbol{\tau} \},
  \end{equation*}
where $\boldsymbol{\tau} = (\tau_Y, \boldsymbol{\tau}_Z)^{\top}\in\mathbb{R}^{1+d}$ is a fixed constant and $\tau_n^X\to\infty$ is a sequence such that $|S_1|\to\infty$ in probability. The inequality $ \begin{pmatrix} Y_t \\ \textbf{Z}_t \end{pmatrix} \leq \boldsymbol{\tau} $ is understood element-wise.
\end{definition}
\begin{definition}\label{definition_of_S2}
  We denote by $B_{w_0}(r) = \{w : ||w-w_0||_{\infty}<r\}$ the ball with center $w_0$ and radius $r\in\mathbb{R}^+$. 
  Let \begin{equation*}
      S_2 := \{t\in\{1, \dots, n-1\}: X_t\geq \tau_n^X, 
      \begin{pmatrix}
    Y_t \\
    \textbf{Z}_t 
\end{pmatrix}   \in {B}_{(y_0, \textbf{z}_0)}(r_n) \},
  \end{equation*}
where $(y_0, \textbf{z}_0)\in\mathbb{R}^{1+d}$ lies the support of $(Y_0, \textbf{Z}_0)$ with non-zero density and $\tau_n^X\to\infty, r_n\to 0$ are sequences such that $|S_2|\to\infty$ in probability.
\end{definition}

\begin{theorem}\label{Theorem2v2}
Consider a data-generating process as in Definition~\ref{Definition_str}. 
Assume that the process \((\mathbf X,\mathbf Y,\mathbf Z)\) is stationary 
and ergodic. Assume that the relevant finite-dimensional distributions are 
absolutely continuous with respect to Lebesgue measure and have continuous 
densities.  Assume that the structural function \(h_Y\) satisfies 
Assumption~\ref{AssumptionA1} and is continuously differentiable in 
\((y,\mathbf z)\) on a neighbourhood of \((y_0,\mathbf z_0)\), with gradient 
uniformly bounded on that neighbourhood.

Then, the estimator \(\hat{\Gamma}_{\textbf{X}\to\textbf{Y}\mid \textbf{Z}}\) defined in equation \eqref{Gamma_hat_equation} with \(S \equiv S_2\),  is consistent in the sense that   \begin{equation*}
   \hat{\Gamma}_{\textbf{X}\to\textbf{Y}\mid \textbf{Z}} \overset{P}{\to} \Gamma_{\textbf{X}\to\textbf{Y}\mid\mathcal{C}_0}, \quad \text{ as }n\to\infty,
  \end{equation*}
where $\Gamma_{\textbf{X}\to\textbf{Y}\mid\mathcal{C}_0} = \lim_{v\to\infty} \mathbb{E}[ F(Y_{t+1})\mid X_t>v,Y_t = y_0, \textbf{Z}_t = \textbf{z}_0]$, provided that the limit exists. Additionally, assuming second-order assumptions presented in Supplement~\ref{proof_of_thm2v2}, the following holds:
\[
    \frac{\sqrt{|S_2|}}{\widehat\sigma_n}\,
    \left(
        \widehat\Gamma_{\mathbf X\to\mathbf Y\mid\mathbf Z}
        -
        \Gamma_{\mathbf X\to\mathbf Y\mid\mathcal C_0}
    \right)
    \overset{d}{\to} N(0,1), \quad \text{ where }\quad
        \widehat\sigma_n^2
    :=
    \frac1{|S_2|}
    \sum_{t\in S_2}
    \left\{
        F(Y_{t+1})
        -
        \widehat\Gamma_{\mathbf X\to\mathbf Y\mid\mathbf Z}
    \right\}^2.
\]
\end{theorem}

\begin{theorem}\label{Theorem3}
Consider a time series following the non-negative SRE model \eqref{sre2} that satisfies Assumptions (B1), (B2), and (B4). Then, the estimator \(\hat{\Gamma}_{\textbf{X}\to\textbf{Y}\mid \textbf{Z}}\) defined in equation (\ref{Gamma_hat_equation}), with \(S \equiv S_1\) satisfies
\begin{equation}
\label{consistency_theorem2}
\hat{\Gamma}_{\textbf{X}\to\textbf{Y}\mid\textbf{Z}} \overset{P}{\to} 1 \text{   as }n\to\infty\iff  \Gamma_{\textbf{X}\to\textbf{Y}\mid\mathcal{C}} = 1.
\end{equation}
\end{theorem}
Proof of Theorems~\ref{Theorem2v2} and~\ref{Theorem3} are presented in Supplements~\ref{proof_of_thm2v2} and~\ref{proof_of_thm3}. 

As for the unadjusted estimator, a practical choice of the hyperparameter 
\(\tau_n^X\) is the \(k_n\)-th largest value of \(X_t\) among the indices
\(
    \widetilde S_1
    :=
    \left\{
    t\in\{1,\dots,n-1\}:
    \begin{pmatrix}
        Y_t\\
        \mathbf Z_t
    \end{pmatrix}
    \leq \boldsymbol{\tau}
    \right\},
\)
where \(k_n\) is any sequence satisfying \eqref{k_deleno_n}, and
\(\boldsymbol{\tau}\) is chosen as a high quantile. Further details are
discussed in Section~\ref{Section_choice_of_hyperparameters}.

\subsection{Causal discovery in extremes}
\label{section_algorithm}
In this section, we propose a procedure that takes the data $(x_1, y_1, \textbf{z}_1)^\top, \dots, (x_n, y_n, \textbf{z}_n)^\top$ and outputs  $\textbf{X}\toverset{ext}{\to}\textbf{Y}$ or  $\textbf{X}\toverset{ext}{\not\to}\textbf{Y}$. Intuitively, it relies on two key values for the estimator: 
\begin{itemize}
    \item if $\textbf{X}\toverset{ext}{\to}\textbf{Y}$, then $\hat{\Gamma}_{\textbf{X}\to\textbf{Y}\mid\textbf{Z}}\approx 1$, 
    \item if $\textbf{X}\toverset{ext}{\not\to}\textbf{Y}$, then $\hat{\Gamma}_{\textbf{X}\to\textbf{Y}\mid\textbf{Z}}\approx  {\Gamma}_{\textbf{X}\to\textbf{Y}\mid\mathcal{C}}^{\baseline}<1$.
\end{itemize}
In order to distinguish between these two cases, we rely on an estimate  $\hat{\Gamma}_{\textbf{X}\to\textbf{Y}\mid\textbf{Z}}^{\baseline} := \frac{1}{|\tilde{S}|}\sum_{t\in \tilde{S}} F(y_{t+1})$. If $\hat{\Gamma}_{\textbf{X}\to\textbf{Y}\mid\textbf{Z}}$ is closer to $1$ than to $\hat{\Gamma}_{\textbf{X}\to\textbf{Y}\mid\mathcal{C}}^{\baseline}$, we output $\textbf{X}\toverset{ext}{\to}\textbf{Y}$. Otherwise we output $\textbf{X}\toverset{ext}{\not\to}\textbf{Y}$. 
Algorithm~\ref{Algorithm1} details this procedure. 

\begin{algorithm}[tbph] %
  \KwData{$(x_1, y_1, \textbf{z}_1)^\top, \dots, (x_n, y_n, \textbf{z}_n)^\top$.}
  \KwOut{either $\textbf{X}\toverset{ext}{\to}\textbf{Y}$ or  $\textbf{X}\toverset{ext}{\not\to}\textbf{Y}$.}
  \vspace{4pt}
Obtain an estimate $\hat{\Gamma}_{\textbf{X}\to\textbf{Y}\mid\textbf{Z}}$ using (\ref{Gamma_hat_equation}) and either set $S_1$ or $S_2$\;
Compute $\hat{\Gamma}_{\textbf{X}\to\textbf{Y}\mid\textbf{Z}}^{\baseline} := \frac{1}{|\tilde{S}|}\sum_{t\in \tilde{S}} F(y_{t+1})$ using $\tilde{S}$ as either $\tilde{S}_1$ or $\tilde{S}_2$\;
\leIf{$\hat{\Gamma}_{\textbf{X}\to\textbf{Y}\mid\textbf{Z}} > \frac{1+\hat{\Gamma}_{\textbf{X}\to\textbf{Y}\mid\textbf{Z}}^{\baseline}}{2}$}{\Return $\textbf{X}\toverset{ext}{\to}\textbf{Y}$}{\Return $\textbf{X}\toverset{ext}{\not\to}\textbf{Y}$}

  \caption{Discovery of causality in extremes}
  \label{Algorithm1}
\end{algorithm}

The consistency of Algorithm~\ref{Algorithm1} follows directly from Theorems~\ref{Theorem2v2} and~\ref{Theorem3}, as demonstrated in the following Lemma~\ref{Classification_lemma}. The proof is presented in Supplement~\ref{proof_of_classicication_lemma}.  
\begin{lemma}\label{Classification_lemma}
Let the assumptions from Theorem~\ref{Theorem2v2} hold. Then, Algorithm~\ref{Algorithm1} with $S=S_2$ is consistent; that is, the output is correct with probability tending to one as $n\to\infty$. 

Let the assumptions from Theorem~\ref{Theorem3} hold. Then, there exists $\boldsymbol{\tau}_0\in\mathbb{R}^{1+d}$ such that for all  $\boldsymbol{\tau}\leq \boldsymbol{\tau}_0$, Algorithm~\ref{Algorithm1} with $S=S_1$ and with hyper-parameter $\boldsymbol{\tau}$ gives the correct output with probability tending to one as $n\to\infty$. 

\end{lemma}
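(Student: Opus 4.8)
The plan is to reduce the consistency of Algorithm~\ref{Algorithm1} to a comparison of the probability limits of the two quantities it thresholds, using the elementary fact that if $U_n\overset{P}{\to}u$ and $V_n\overset{P}{\to}v$ with $u<v$, then $P(U_n<V_n)\to1$. Here $U_n,V_n$ will be $\hat{\Gamma}_{\textbf{X}\to\textbf{Y}\mid\textbf{Z}}$ and $\tfrac12\bigl(1+\hat{\Gamma}_{\textbf{X}\to\textbf{Y}\mid\textbf{Z}}^{\baseline}\bigr)$ (the second limit obtained from the first by the continuous mapping theorem). Thus it suffices to (i) identify the probability limits of $\hat{\Gamma}_{\textbf{X}\to\textbf{Y}\mid\textbf{Z}}$ and of $\hat{\Gamma}_{\textbf{X}\to\textbf{Y}\mid\textbf{Z}}^{\baseline}$, and (ii) check the correct strict inequality between them in each of the two cases $\textbf{X}\toverset{ext}{\to}\textbf{Y}$ and $\textbf{X}\toverset{ext}{\not\to}\textbf{Y}$; the consistency of Algorithm~\ref{Algorithm1} then follows case by case.

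For the first statement ($S\equiv S_2$, under the hypotheses of Theorem~\ref{Theorem2v2}): Theorem~\ref{Theorem2v2} already gives $\hat{\Gamma}_{\textbf{X}\to\textbf{Y}\mid\textbf{Z}}\overset{P}{\to}\Gamma_{\textbf{X}\to\textbf{Y}\mid\mathcal{C}_0}$, and the same local-averaging argument applied to $\tilde S_2$ (which is $S_2$ without the thresholding $X_t\ge\tau_k^X$), together with continuity of $(y,\textbf{z})\mapsto\mathbb{E}[F(Y_{t+1})\mid Y_t=y,\textbf{Z}_t=\textbf{z}]$ at $(y_0,\textbf{z}_0)$ and $r_n\to0$, yields $\hat{\Gamma}_{\textbf{X}\to\textbf{Y}\mid\textbf{Z}}^{\baseline}\overset{P}{\to}\Gamma^{\baseline}_{\textbf{X}\to\textbf{Y}\mid\mathcal{C}_0}:=\mathbb{E}[F(Y_{t+1})\mid Y_t=y_0,\textbf{Z}_t=\textbf{z}_0]$, which is strictly less than $1$ since $F(x)<1$ for all $x$. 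If $\textbf{X}\toverset{ext}{\to}\textbf{Y}$, then $h_Y$ is not constant in $x$, so Assumption~\ref{AssumptionA1} gives $h_Y(x,y_0,\textbf{z}_0,e)\to\infty$ as $x\to\infty$; using (\ref{IndependenceOfNoise}) to keep the conditional law of $\varepsilon^Y_{t+1}$ fixed under the conditioning and bounded convergence (as $F\le1$) --- the computation in the proof of Proposition~\ref{Proposition_CTC_iff_1}, with $\{Y_t=y_0,\textbf{Z}_t=\textbf{z}_0\}$ in place of $\mathcal{C}^{-\textbf{X}}_t$ --- gives $\Gamma_{\textbf{X}\to\textbf{Y}\mid\mathcal{C}_0}=1>\tfrac12(1+\Gamma^{\baseline}_{\textbf{X}\to\textbf{Y}\mid\mathcal{C}_0})$, so Algorithm~\ref{Algorithm1} returns $\textbf{X}\toverset{ext}{\to}\textbf{Y}$ with probability tending to one. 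If instead $\textbf{X}\toverset{ext}{\not\to}\textbf{Y}$, then $h_Y$ is constant in $x$, so $Y_{t+1}=h_Y(Y_t,\textbf{Z}_t,\varepsilon^Y_{t+1})$ and (\ref{IndependenceOfNoise}) gives $Y_{t+1}\indep X_t\mid Y_t,\textbf{Z}_t$; hence $\Gamma_{\textbf{X}\to\textbf{Y}\mid\mathcal{C}_0}=\Gamma^{\baseline}_{\textbf{X}\to\textbf{Y}\mid\mathcal{C}_0}<\tfrac12(1+\Gamma^{\baseline}_{\textbf{X}\to\textbf{Y}\mid\mathcal{C}_0})$, and the algorithm returns $\textbf{X}\toverset{ext}{\not\to}\textbf{Y}$ with probability tending to one.

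For the second statement ($S\equiv S_1$, SRE model with (S), (I), (P)), the case $\textbf{X}\toverset{ext}{\to}\textbf{Y}$ works for \emph{any} admissible $\boldsymbol{\tau}$: Theorem~\ref{Theorem2} gives $\hat{\Gamma}_{\textbf{X}\to\textbf{Y}\mid\textbf{Z}}\overset{P}{\to}1$, while by the central-limit argument in the text $\hat{\Gamma}_{\textbf{X}\to\textbf{Y}\mid\textbf{Z}}^{\baseline}\overset{P}{\to}\mu(\boldsymbol{\tau}):=\mathbb{E}[F(Y_{t+1})\mid(Y_t,\textbf{Z}_t)\le\boldsymbol{\tau}]<1$, so $1>\tfrac12(1+\mu(\boldsymbol{\tau}))$. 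The choice of a small $\boldsymbol{\tau}_0$ is needed only for $\textbf{X}\toverset{ext}{\not\to}\textbf{Y}$. There, since $h_Y(x,y,z,e)=a^y_1 z+a^y_2 x+a^y_3 y+b$ in the SRE and each coefficient is, by (P), either a.s.\ $0$ or a.s.\ positive, the argument of Proposition~\ref{Proposition_CTC_iff_1} applied to $\mathcal{C}$ (whose hypotheses (A1) and (\ref{IndependenceOfNoise}) follow from (I), (P)) forces $A^y_{2,t}=0$ a.s., so $Y_{t+1}=A^y_{1,t+1}Z_t+A^y_{3,t+1}Y_t+B^y_{t+1}$ with innovation $\varepsilon^y_{t+1}$ independent of $\textbf{W}_t$. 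Writing $g(y,z):=\mathbb{E}[F(A^y_1 z+A^y_3 y+B^y)]$ for a generic independent copy of the innovation, one has $\mathbb{E}[F(Y_{t+1})\mid X_t=x,Y_t=y,Z_t=z]=g(y,z)$; $g$ is continuous, valued in $[0,1)$, and by positivity (P) of $A^y_{1,t},A^y_{3,t}$ together with dominated convergence satisfies $g(y,z)\to0$ as $(y,z)\to(-\infty,-\infty)$ componentwise (the degenerate subcase $A^y_{1,t}=A^y_{3,t}=0$ a.s., where $g$ is a constant $<1$, being trivial). Then the consistency analysis in the proof of Theorem~\ref{Theorem2} gives $\hat{\Gamma}_{\textbf{X}\to\textbf{Y}\mid\textbf{Z}}\overset{P}{\to}\ell(\boldsymbol{\tau})$ with $\ell(\boldsymbol{\tau})\le\sup_{(y,z)\le\boldsymbol{\tau}}g(y,z)$; choosing $\boldsymbol{\tau}_0$ so that $\sup_{(y,z)\le\boldsymbol{\tau}_0}g(y,z)<\tfrac12$ yields $\ell(\boldsymbol{\tau})<\tfrac12\le\tfrac12(1+\mu(\boldsymbol{\tau}))$ for all $\boldsymbol{\tau}\le\boldsymbol{\tau}_0$, so the algorithm returns $\textbf{X}\toverset{ext}{\not\to}\textbf{Y}$ with probability tending to one.

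The main obstacle I anticipate is the last step: the \emph{statement} of Theorem~\ref{Theorem2} only gives the dichotomy $\hat{\Gamma}_{\textbf{X}\to\textbf{Y}\mid\textbf{Z}}\overset{P}{\to}1\iff\Gamma_{\textbf{X}\to\textbf{Y}\mid\mathcal{C}}=1$, whereas here I need, in the non-causal case, that $\hat{\Gamma}_{\textbf{X}\to\textbf{Y}\mid\textbf{Z}}$ concentrates below $\sup_{(y,z)\le\boldsymbol{\tau}}g(y,z)$. This is not extra substance but the easy ``$\le$'' half of the ergodic/empirical-process estimate already inside the proof of Theorem~\ref{Theorem2} --- including the verification that replacing the deterministic level $v$ by the sample quantile $\tau_k^X$ does not spoil the bound --- and it must be extracted and stated explicitly. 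A second, more routine, point is the shrinking-ball consistency $\hat{\Gamma}_{\textbf{X}\to\textbf{Y}\mid\textbf{Z}}^{\baseline}\overset{P}{\to}\Gamma^{\baseline}_{\textbf{X}\to\textbf{Y}\mid\mathcal{C}_0}$ used in the $S_2$ case, which follows from the same local-averaging argument as for $\hat{\Gamma}_{\textbf{X}\to\textbf{Y}\mid\textbf{Z}}$ in Theorem~\ref{Theorem2v2}.
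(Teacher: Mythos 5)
Your overall strategy is exactly the paper's: identify the probability limits of $\hat{\Gamma}_{\textbf{X}\to\textbf{Y}\mid\textbf{Z}}$ and $\hat{\Gamma}^{\baseline}_{\textbf{X}\to\textbf{Y}\mid\textbf{Z}}$, and compare them to the midpoint threshold case by case. The first statement ($S_2$) and the causal case of the second statement ($S_1$) are handled as in the paper (the paper likewise invokes Theorem~\ref{Theorem2v2}, Theorem~\ref{Theorem2} and Lemma~\ref{Law_of_large_numbers_lemma}, and isolates the non-causal $S_1$ case as a separate Consequence extracted from Step~3 of the proof of Theorem~\ref{Theorem2}); your extra care in verifying $\Gamma_{\textbf{X}\to\textbf{Y}\mid\mathcal{C}_0}=1$ in the causal $S_2$ case is a point the paper glosses over. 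You also correctly diagnose that the upper bound in the non-causal $S_1$ case must be pulled out of the proof of Theorem~\ref{Theorem2} rather than its statement.

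The genuine gap is in how you choose $\boldsymbol{\tau}_0$ in the non-causal $S_1$ case. You require $\sup_{(y,z)\leq\boldsymbol{\tau}_0} g(y,z) < \tfrac{1}{2}$, justified by $g(y,z)\to 0$ as $(y,z)\to(-\infty,-\infty)$. This fails whenever the support of $(Y_t,\textbf{Z}_t)$ is bounded below (e.g.\ positive noise, as in the Pareto examples the paper targets): admissible $\boldsymbol{\tau}$ must satisfy $P((Y_t,\textbf{Z}_t)\leq\boldsymbol{\tau})>0$, so $g$ can only be driven down to $g(s_Y,s_Z)$ where $s_Y,s_Z$ are the lower endpoints, and $g(s_Y,s_Z)$ may well exceed $\tfrac{1}{2}$ (already in your "degenerate" subcase it equals $\mathbb{E}[F(B^y)]$, which is $\tfrac{1}{2}$ for $F=F_Y$). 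The fix, which is what the paper's Consequence does, is to compare against $\tfrac{1}{2}(1+\mu(\boldsymbol{\tau}))$ rather than $\tfrac{1}{2}$: since $\mu(\boldsymbol{\tau})=\mathbb{E}[g(Y_t,\textbf{Z}_t)\mid (Y_t,\textbf{Z}_t)\leq\boldsymbol{\tau}]\geq g(s_Y,s_Z)$ by monotonicity of $g$, and $g$ is continuous at $(s_Y,s_Z)$, choosing $\boldsymbol{\tau}_0$ close enough to $(s_Y,s_Z)$ gives $\sup_{(y,z)\leq\boldsymbol{\tau}_0}g(y,z)<\tfrac{1}{2}\bigl(1+g(s_Y,s_Z)\bigr)\leq\tfrac{1}{2}\bigl(1+\mu(\boldsymbol{\tau})\bigr)$ for all admissible $\boldsymbol{\tau}\leq\boldsymbol{\tau}_0$. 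Your argument covers only the unbounded-below case (where it is fine); the bounded case needs this separate comparison.
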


\subsection{Testing for tail causality}
\label{Section_testing}
We develop a statistical test of the hypothesis $H_0^{tail}: \textbf{X} \toverset{tail}{\not\to} \textbf{Y}$ as follows. Using bootstrapping (described below), we construct $\alpha$-confidence intervals for ${\Gamma}_{\textbf{X}\to\textbf{Y}\mid\textbf{Z}} - \Gamma^{\baseline}_{\textbf{X}\to \textbf{Y}\mid\textbf{Z}}$, $\alpha\in (0,1)$, using the estimators described in Section~\ref{Section_estimation}. We reject the null hypothesis $H_0^{tail}$ if zero lies outside of this interval. Since $\textbf{Y}$ has support in a neighborhood of infinity, the baseline coefficient $\Gamma^{\baseline}_{\textbf{X}\to \textbf{Y}\mid\textbf{Z}}$ is strictly within the open interval $(0,1)$, ensuring the hypothesis test is well posed. 

Computing confidence intervals for an estimand, solely based on its estimator is a classical statistical problem \citep{Vaart_1998}. Out of all procedures for its estimation, we opt for using the moving block bootstrap technique \citep{kinsch1989jackknife, Vaart_1998}. As opposed to classical bootstrap, consecutive observation blocks are resampled with replacement to preserve the time series' temporal dependencies.

Data is split into $n - b + 1$ overlapping blocks of length $b$. Then from these $n - b + 1$ blocks, $n/b$ blocks will be drawn at random with replacement. Then aligning these $n/b$ blocks in the order they were picked, will give the bootstrap observations. The length $b$ is typically chosen as $b=\sqrt{n}$.  In the bootstrap observation, we compute $\tilde{\hat{\Gamma}}_{\textbf{X}\to\textbf{Y}\mid\textbf{Z}}$. 
Repeating this procedure $B\in\mathbb{N}$ times, we end up with $B$ estimations. Denoting the sample $\alpha$-quantile of these $B$ estimations by $\hat{\zeta}_\alpha^B$, the resulting block-bootstrap interval is $(\hat{\zeta}_{\alpha/2}^B, \hat{\zeta}_{1-\alpha/2}^B)$. 
See Algorithm~\ref{Algorithm_testing_appendix} in Supplement~\ref{SupplSect_blockBootstrapAlgo} and the code supplement for more details. 

It has been widely recognized that confidence intervals  $(\hat{\zeta}_{\alpha/2}^B, \hat{\zeta}_{1-\alpha/2}^B)$ maintain the correct confidence level as $B,n\to\infty$ under some regularity assumptions \citep{davison1997bootstrap, dehaan2024bootstrapping}. This has primarily been demonstrated through extensive simulation studies rather than theoretical proofs, which can be challenging even for simple statistics.

\section{Multivariate extension: estimating full causal graph}
\label{Section5}
\label{section_algorithm2_estimating_full_causal_graph}
One is often interested not only in the causal relation between $\textbf{X}$ and $\textbf{Y}$, but in a causal graph involving a collection of time series $\textbf{X}^1, \dots, \textbf{X}^m$, where $m\in\mathbb{N}$. 
We define the summary graph $\mathcal{G}=(V, \mathcal{E})$, where the vertices $V=\{1, \dots, m\}$ correspond to the respective series $\textbf{X}^1, \dots, \textbf{X}^m$, and an edge $(i, j)\in \mathcal{E}$ exists if and only if $\textbf{X}^i \toverset{ext}{\to} \textbf{X}^j$. 
An example of a summary graph is shown in Figure~\ref{Crypto_graphs}. Under Assumption \ref{AssumptionA1}, this summary graph $\mathcal{G}$ aligns with the classical Granger summary graph.

One approach to estimating $\mathcal{G}$ involves determining the presence of a direct causal link $\textbf{X}^i \toverset{ext}{\to} \textbf{X}^j$, while considering the influence of all other time series, for each distinct pair $i,j\in\{1, \dots, m\}$. However, a large number of time series $m$ may diminish statistical power.

In lieu of this, we propose a faster and more efficient algorithm leveraging the property~(\ref{OhYeah}). As demonstrated in Section~\ref{section3},  under relatively mild assumptions $\Gamma_{\textbf{X}\to\textbf{Y}\mid\emptyset} < 1 \implies \Gamma_{\textbf{X}\to\textbf{Y}\mid \textbf{Z}} < 1$. Consequently, we first initiate our analysis with a simple pairwise examination before accounting for the influence of the other time series in a second step. This procedure is detailed in Algorithm~\ref{Algorithm2}. 

\begin{algorithm}[tbph] %
  \KwData{$(x_1^1, \dots, x_1^m)^\top, \dots, (x_n^1, \dots, x_n^m)^\top$.}
  \KwOut{Summary graph $\hat{\mathcal{G}}$.}
  \vspace{4pt}
Start with a complete graph $\hat{\mathcal{G}}$, where a directed edge connects each pair of vertices (each vertex represents one distinct time series)\; 
\SetKwBlock{StepOne}{Step 1 (Pairwise):}{endstep}
\StepOne{
\ForAll{$i,j\in\{1,\ldots,m\} : i\neq j$}{
Determine if $\Gamma_{\textbf{X}^i\to\textbf{X}^j\mid \textbf{Z}} = 1$ given $\textbf{Z} = \emptyset$%
\tcp*{{\scriptsize using Algorithm~1 or  Section~\ref{Section_testing}}}
\lIf{$\Gamma_{\textbf{X}^i\to\textbf{X}^j\mid \textbf{Z}} < 1$}{remove edge $(i,j)$ from $\hat{\mathcal{G}}$} 
} 
${\hat{\mathcal{G}}^{\rm P}}\gets \hat{\mathcal{G}}$\; 
} 
\SetKwBlock{StepTwo}{Step 2 (Multivariate):}{endstep}
\StepTwo{
\ForEach{edge $(i,j)$ in ${\hat{\mathcal{G}}^{\rm P}}$}{
Determine if $\Gamma_{\textbf{X}^i\to\textbf{X}^j\mid \textbf{Z}} = 1$ given $\textbf{Z}=\pa_{{\hat{\mathcal{G}}^{\rm P}}}(i)\cap \pa_{{\hat{\mathcal{G}}^{\rm P}}}(j)$%
\tcp*{{\scriptsize where $\pa_{{\hat{\mathcal{G}}^{\rm P}}}(i)$ denotes the parents of~$i$ (set of vertices with an incoming edge to $i$ in ${\hat{\mathcal{G}}^{\rm P}}$)}}
\lIf{$\Gamma_{\textbf{X}^i\to\textbf{X}^j\mid \textbf{Z}} < 1$}{remove edge $(i,j)$ from $\hat{\mathcal{G}}$} 
} 
} 
\Return $\hat{\mathcal{G}}$\; 
\caption{Extreme causality: summary graph estimator}
  \label{Algorithm2}
\end{algorithm}

To determine whether $\Gamma_{\textbf{X}^i\to\textbf{X}^j\mid \textbf{Z}} = 1$, either Algorithm~\ref{Algorithm1} or the test procedure from Section~\ref{Section_testing} can be employed. Our primary focus lies on Algorithm~\ref{Algorithm1}. 

\begin{lemma}
\label{lemma_path_diagram}
Let \((\textbf{X}^1, \dots, \textbf{X}^m)\) be a collection of time series. Assume that, for each distinct pair \(i, j \in \{1, \dots, m\}\), Algorithm~\ref{Algorithm1} is consistent and that $\Gamma_{\textbf{X}^i \to \textbf{X}^j \mid \mathcal{C}} = 1 \implies \Gamma_{\textbf{X}^i \to \textbf{X}^j \mid \emptyset} = 1.$
Note that these conditions are satisfied under the assumptions of Lemma~\ref{Classification_lemma} and Theorem~\ref{Theorem1}. Then, Algorithm~\ref{Algorithm2} is consistent, meaning that \(P({\hat{\mathcal{G}}} = \mathcal{G}) \to 1\) as \(n \to \infty\).

Furthermore, if, for each distinct pair $i, j\in{1, \dots, m}$, $\Gamma_{\textbf{X}^i\to\textbf{X}^j\mid\mathcal{C}} = 1 \iff  \Gamma_{\textbf{X}^i\to\textbf{X}^j\mid \emptyset} = 1,   $
then $P({\hat{\mathcal{G}}^{\rm P}} = \mathcal{G})\to 1$ as $n\to\infty$, and Step 2 of the algorithm is asymptotically not necessary. 
\end{lemma}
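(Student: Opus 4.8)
\textbf{Proof plan for Lemma~\ref{lemma_path_diagram}.}

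The plan is to verify correctness of the two steps of Algorithm~\ref{Algorithm2} separately, exploiting the one-sided implication $\Gamma_{\textbf{X}^i\to\textbf{X}^j\mid\mathcal{C}} = 1 \implies \Gamma_{\textbf{X}^i\to\textbf{X}^j\mid\emptyset} = 1$ (equivalently, its contrapositive $\Gamma_{\textbf{X}^i\to\textbf{X}^j\mid\emptyset} < 1 \implies \Gamma_{\textbf{X}^i\to\textbf{X}^j\mid\mathcal{C}} < 1$). First I would fix $\varepsilon>0$ and a pair $(i,j)$ and recall that, by the consistency hypothesis on Algorithm~\ref{Algorithm1}, the pairwise call in Step~1 (run with $\textbf{Z}=\emptyset$) returns the correct verdict ``$\Gamma_{\textbf{X}^i\to\textbf{X}^j\mid\emptyset}=1$'' versus ``$<1$'' with probability tending to $1$. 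Since there are only finitely many ($m(m-1)$) pairs, a union bound gives that \emph{all} pairwise calls are simultaneously correct with probability tending to $1$. Call this event $E_1^{(n)}$; then $P(E_1^{(n)})\to 1$.

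Next I would argue that on $E_1^{(n)}$ the graph $\hat{\mathcal{G}}^{\rm P}$ produced at the end of Step~1 satisfies $\mathcal{E}\subseteq\mathcal{E}(\hat{\mathcal{G}}^{\rm P})$, i.e.\ $\hat{\mathcal{G}}^{\rm P}$ is a supergraph of the true summary graph $\mathcal{G}$: indeed an edge $(i,j)$ of $\mathcal{G}$ means $\textbf{X}^i\toverset{ext}{\to}\textbf{X}^j$, i.e.\ $\Gamma_{\textbf{X}^i\to\textbf{X}^j\mid\mathcal{C}}=1$, which by the assumed implication forces $\Gamma_{\textbf{X}^i\to\textbf{X}^j\mid\emptyset}=1$, so Step~1 does not delete $(i,j)$. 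This is the key use of the hidden-confounder robustness property~\eqref{OhYeah}. Now for Step~2: I would fix an edge $(i,j)$ surviving Step~1 and note that the conditioning set used, $\textbf{Z}=\pa_{\hat{\mathcal{G}}^{\rm P}}(i)\cap\pa_{\hat{\mathcal{G}}^{\rm P}}(j)$, is itself random but takes values in a finite collection of subsets of $\{1,\dots,m\}$; conditioning on $E_1^{(n)}$ we may further condition on the (finitely many possible) realizations of $\hat{\mathcal{G}}^{\rm P}$, reducing to finitely many deterministic consistency statements, each of which holds with probability tending to $1$ by the hypothesis that Algorithm~\ref{Algorithm1} is consistent for every choice of conditioning covariates. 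A second union bound over edges and over the finitely many realizations of $\hat{\mathcal{G}}^{\rm P}$ then yields an event $E_2^{(n)}$ with $P(E_2^{(n)})\to 1$ on which every Step~2 verdict is correct. On $E_1^{(n)}\cap E_2^{(n)}$, an edge $(i,j)$ is retained in $\hat{\mathcal{G}}$ iff it survives Step~1 \emph{and} $\Gamma_{\textbf{X}^i\to\textbf{X}^j\mid\pa_{\hat{\mathcal{G}}^{\rm P}}(i)\cap\pa_{\hat{\mathcal{G}}^{\rm P}}(j)}=1$; I would then observe that because $\hat{\mathcal{G}}^{\rm P}\supseteq\mathcal{G}$ contains all true parents, this intersection contains the common parents of $i$ and $j$ in $\mathcal{G}$, which is exactly the causally-relevant adjustment set, so that $\Gamma_{\textbf{X}^i\to\textbf{X}^j\mid\pa_{\hat{\mathcal{G}}^{\rm P}}(i)\cap\pa_{\hat{\mathcal{G}}^{\rm P}}(j)}=1$ iff $\Gamma_{\textbf{X}^i\to\textbf{X}^j\mid\mathcal{C}}=1$ iff $(i,j)\in\mathcal{E}$. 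Hence $\hat{\mathcal{G}}=\mathcal{G}$ on an event of probability tending to $1$, which is the first claim.

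For the second claim, I would simply redo the Step~1 analysis under the stronger two-sided hypothesis $\Gamma_{\textbf{X}^i\to\textbf{X}^j\mid\mathcal{C}}=1\iff\Gamma_{\textbf{X}^i\to\textbf{X}^j\mid\emptyset}=1$: on $E_1^{(n)}$, Step~1 deletes $(i,j)$ iff $\Gamma_{\textbf{X}^i\to\textbf{X}^j\mid\emptyset}<1$ iff $\Gamma_{\textbf{X}^i\to\textbf{X}^j\mid\mathcal{C}}<1$ iff $(i,j)\notin\mathcal{E}$, so $\hat{\mathcal{G}}^{\rm P}=\mathcal{G}$ already, and Step~2 can only leave $\mathcal{G}$ unchanged (it removes no true edge and has no false edge to remove), so it is asymptotically superfluous. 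The main obstacle I anticipate is the bookkeeping around the \emph{randomness of the conditioning set} in Step~2: one must be careful that ``Algorithm~\ref{Algorithm1} is consistent'' is invoked with a data-dependent covariate set, and the clean way around this is precisely the finiteness-of-configurations argument above (condition on the realization of $\hat{\mathcal{G}}^{\rm P}$, apply consistency for that fixed covariate set, union-bound) together with the observation that passing from the true adjustment set $\pa_{\mathcal{G}}(i)\cap\pa_{\mathcal{G}}(j)$ to the possibly larger $\pa_{\hat{\mathcal{G}}^{\rm P}}(i)\cap\pa_{\hat{\mathcal{G}}^{\rm P}}(j)$ does not change the value of the causal tail coefficient — this last point uses causal sufficiency of $\textbf{Z}$ in the sense discussed after Definition~\ref{DEF1} and should be stated explicitly as the reason Step~2's verdict equals the ``idealized-world'' verdict $\Gamma_{\textbf{X}^i\to\textbf{X}^j\mid\mathcal{C}}$.
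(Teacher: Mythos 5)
Your proposal is correct and follows essentially the same route as the paper's proof: the one-sided implication gives $\hat{\mathcal{G}}^{\rm P}\supseteq\mathcal{G}$ with probability tending to one, the surviving conditioning set $\pa_{\hat{\mathcal{G}}^{\rm P}}(i)\cap\pa_{\hat{\mathcal{G}}^{\rm P}}(j)$ therefore contains the true common parents and yields the same verdict as conditioning on $\mathcal{C}$, and consistency of Algorithm~\ref{Algorithm1} finishes both claims. Your explicit union bounds and conditioning on the finitely many realizations of $\hat{\mathcal{G}}^{\rm P}$ make the handling of the data-dependent covariate set more careful than the paper's terse chain of equivalences, but the underlying argument is identical.
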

The proof of Lemma~\ref{lemma_path_diagram} is presented in Supplement~\ref{proof_of_lemma_path_diagram}. 

Algorithm~\ref{Algorithm2} is highly efficient, with a time complexity of $O(m^2\, n\,\log(n))$. The term $n\log(n)$ accounts for the time complexity of Algorithm~\ref{Algorithm1}, as computing $\hat{\Gamma}_{\textbf{X}\to\textbf{Y}\mid\textbf{Z}}$ requires a sorting algorithm, while the $m^2$ term arises from iterating over each pair of $i$ and $j$. To our knowledge, our algorithm lies among the most efficient algorithms for causal discovery.

\section{Simulations}
\label{Section6}

\subsection{Hyperparameter analysis}
\label{Section_choice_of_hyperparameters}

In the estimation of  $\hat{\Gamma}_{\textbf{X}\to\textbf{Y}\mid\mathcal{\textbf{Z}}}$, we need to make specific practical choices of several hyperparameters. We discuss the values we use in our computations, which could be reasonable default choices. However, the optimal choice might vary depending on the specific characteristics of each time series. 

\begin{itemize}
    \item  $F$: One needs to choose the cumulative distribution function $F$ in the definition of ${\Gamma}_{\textbf{X}\to\textbf{Y}\mid\mathcal{C}}$ in \eqref{CTC_definition}. Although the choice of $F$ is not important for theoretical properties of ${\Gamma}_{\textbf{X}\to\textbf{Y}\mid\mathcal{C}}$ demonstrated in this paper, it may affect the finite sample properties of its estimator. A natural choice for $F$ is the empirical marginal distribution function of $Y$ denoted as $\hat{F}_Y$. However, we opt for     \[  \hat{F}_Y^{truc}(t):= \begin{cases} \hat{F}_Y(t) & \text{if } t \geq \operatorname{median}(Y) \\ 0 & \text{if } t < \operatorname{median}(Y). \end{cases} \]
Simulations in Section~\ref{Section_choice_F_{}} suggest that the choice $ \hat{F}_Y^{truc}(t)$ leads to better finite sample properties. We also experimented with various alternatives, including $F(x) = \mathbbm{1}(x > \tau)$ for large  $\tau\in\mathbb{R}$, which induces causality-in-high-quantile \citep{Candelon2013}. However, all considered alternatives resulted in inferior finite sample behavior. 
    \item $S$: Sets $S_1$ or $S_2$ are equivalent if the supports of  $Y$ and  $\textbf{Z}$ are bounded from below (which is true in most of our simulations setups and in the application) and $(y_0, \textbf{z}_0)$ is chosen as the lower endpoint. In such a case, choosing an optimal $\boldsymbol{\tau}$ and an optimal $r$ are equivalent tasks. In the other cases, we use the set $S_1$ when causality in the upper tail only is of interest and $S_2$ for causality in both tails.  
    \item  $\tau^X_k$ (or equivalently $k_n$) : We choose \(\tau_n^X\) as the \(k_n\)-th largest value of \(X_t\) among the indices
\(
    \widetilde S_1
    :=
    \left\{
    t\in\{1,\dots,n\}:
    \begin{pmatrix}
        Y_t\\
        \mathbf Z_t
    \end{pmatrix}
    \leq \boldsymbol{\tau}
    \right\}
\).  If the presence of a strong hidden confounder is suspected, $k_n = n^{\frac{1}{2}}$ appears to be a reasonable choice, as in \cite{bodik}. If one does not suspect strong hidden confounding $k_n=n^{\frac{1}{3}}$ yields better results. This is concluded from the simulations in Section~\ref{Section_choice_k_n}. 
    \item $\tau_Y$: We choose $\tau_Y$ to be a $q_Y\in (0,1)$ quantile of $Y$. The choice  leads to a bias-variance trade-off, as smaller  $\tau_Y$ leads to more strict conditioning while reducing the effective sample size. We choose $q_Y=0.8$, as this choice is optimal under a specific autoregressive data-generating process, as discussed in Simulations~\ref{Section_choice_tau_Y}. However, under large auto-correlation in $Y$, larger quantiles $q_Y$ may lead to a better finite sample behavior. 
    \item $\boldsymbol{\tau}_Z$: Recall that we assume a $d$-dimensional confounder $\textbf{Z}\in\mathbb{R}^d$, and we denote $\boldsymbol{\tau}_Z = (\tau_Z^1, \dots, \tau^d_Z)$. We select each $\tau_Z^i$ to represent the $q_Z^i\in (0,1)$ quantile of $Z_i$. The optimal choice of $q^i_Z$ depends on the strength of the confounding effect of $Z_i$: the stronger the confounding effect, the smaller the optimal $q^i_Z$. As discussed in Simulations~\ref{Section_choice_tau_Z}, a quantile at level $0.9$ appears to be a suitable choice in the univariate case, while we opt for a quantile at level $1- \frac{0.2}{d}$ whenever $d>1$ to prevent the effective sample size from becoming too small after conditioning on confounders being non-extreme. It is important to note that we should decrease $q^i_Z$ when a strong confounding effect of $Z_i$ is expected. 
    \item $p$: Causal lag from Section~\ref{lag_extention_in_section2}. Increasing the lag relaxes the assumptions regarding the structure of (\ref{structural_generation_lag}), albeit at the cost of reducing statistical power. 
    The selection of an appropriate lag presents a common challenge in time series analysis \citep{LagSelection,Runge2019PCMCI}, for which classical approaches such as analyzing auto-correlation plots or extremograms \citep{Extremogram} are available. Alternatively, conclusions can be drawn across a range of lag choices.
\end{itemize}

\subsection{Comparative performance study}
\label{Section_Sim_perform}

We assess the performance of our methodology through a series of comparative simulations. 
We generate time series data with various choices for parameters of interest: 1) the number of variables $m$ in the randomly generated underlying causal graph, 2) the sample size $n$, 3) heavy-tailed versus non-heavy-tailed noise variables, and 4) a VAR versus a GARCH dependence model. 
We compare our methodology to the state-of-the-art causality methods \citep{assaad2022survey}. Following the Tigramite package \citep{runge2023overview_causal}, we use the PCMCI method \citep{Runge2019PCMCI}, with the independence tests ``RobustParCorr'' and ``GPDC'', which we believe are the most appropriate. 
For each method and dataset, we measure the estimated causal graph's error as the number of edge additions or removals required to obtain the true graph, standardised between 0 and 1 by dividing by $m(m-1)$. 
For each combination of data parameters, the time series were generated as follows.

\textbf{Step 1:} We generated a random graph $\mathcal{G}$ with $m \in \mathbb{N}$ vertices, where each edge is present independently with probability $\frac{1}{m}$. We defined $\delta^{\mathcal{G}}_{j,i} = 1$ if $(j,i) \in \mathcal{G}$ and $\delta^{\mathcal{G}}_{j,i} = 0$ otherwise (i.e., $\delta^{\mathcal{G}}_{j,i} = 1$ if there is a directed edge $j \to i$ in $\mathcal{G}$).

This graph-generating mechanism produces sparse graphs, as the expected number of directed edges is \(m-1\), whereas the total number of possible directed edges is \(m(m-1)\). 
Since the average error is normalised by \(m(m-1)\), it is increasingly dominated by absent edges as \(m\) grows. 
As it is typically easier to detect the absence of an edge than the existence of an edge, we expect methods with low relative false-positive rates to display lower average errors with large $m$. 
Interpreting the results in terms of the difference between the approaches for each individual $m$ might, thus, be more relevant than interpreting them as a function of $m$.

\textbf{Step 2 (VAR case):} We initialized $X^1_1, \dots, X^m_1 = 0$ and iteratively generated the series for each $t \in \{2, \dots, n\}$ and $i \in \{1, \dots, m\}$ as follows:
\begin{equation*}
X_{t+1}^i = 0.1X_t^i + \sum_{j \neq i} \delta^{\mathcal{G}}_{j,i} 0.5 X_t^j + \varepsilon_t^i,
\end{equation*}
where $\varepsilon_t^i \toverset{iid}{\sim} \mathcal{N}(0,1)$ in the non-heavy-tailed case and $\varepsilon_t^i \toverset{iid}{\sim} {\rm Pareto}(1)$ in the heavy-tailed case. 
An inbetween noise distribution $\varepsilon_t^i \toverset{iid}{\sim} {\rm Pareto}(2)$ was also considered; see Section~\ref{appendix_simulations_Pa2} of the supplementary material.
The constants 0.1 and 0.5 were chosen to ensure that the time series remains stationary and does not explode, even for random graphs $\mathcal{G}$ with $m=20$ vertices.

\textbf{Step 2 (GARCH case):} We initialized $X^1_1, \dots, X^m_1 = 0$ and iteratively generated the series for each $t \in \{2, \dots, n\}$ and $i \in \{1, \dots, m\}$ as follows:
\begin{equation*}
X_{t+1}^i =  \bigg(0.1 + \sum_{j \neq i} \delta^{\mathcal{G}}_{j,i} 0.5 (X_t^j)^2\bigg)^{1/2}\varepsilon_t^i,
\end{equation*}
where $\varepsilon_t^i \toverset{iid}{\sim} \mathcal{N}(0,1)$ in the non-heavy-tailed case and $\varepsilon_t^i \toverset{iid}{\sim} {\rm Cauchy}$ in the heavy-tailed case. We chose 0.1 as the auto-correlation constant to prevent exponential increases in the time series, and 0.5 for the effect strength as it did not affect the stationarity.

\textbf{Step 3:} We repeat the experiments for each combination of data parameters by generating 100 instances of the time series according to steps 1 and 2 and estimating $\mathcal{G}$ for each of those 100 repetitions.

The code and instructions to reproduce the study are available as supplementary material.

\begin{figure}[!bt]
\centering
\includegraphics[width=1\textwidth]{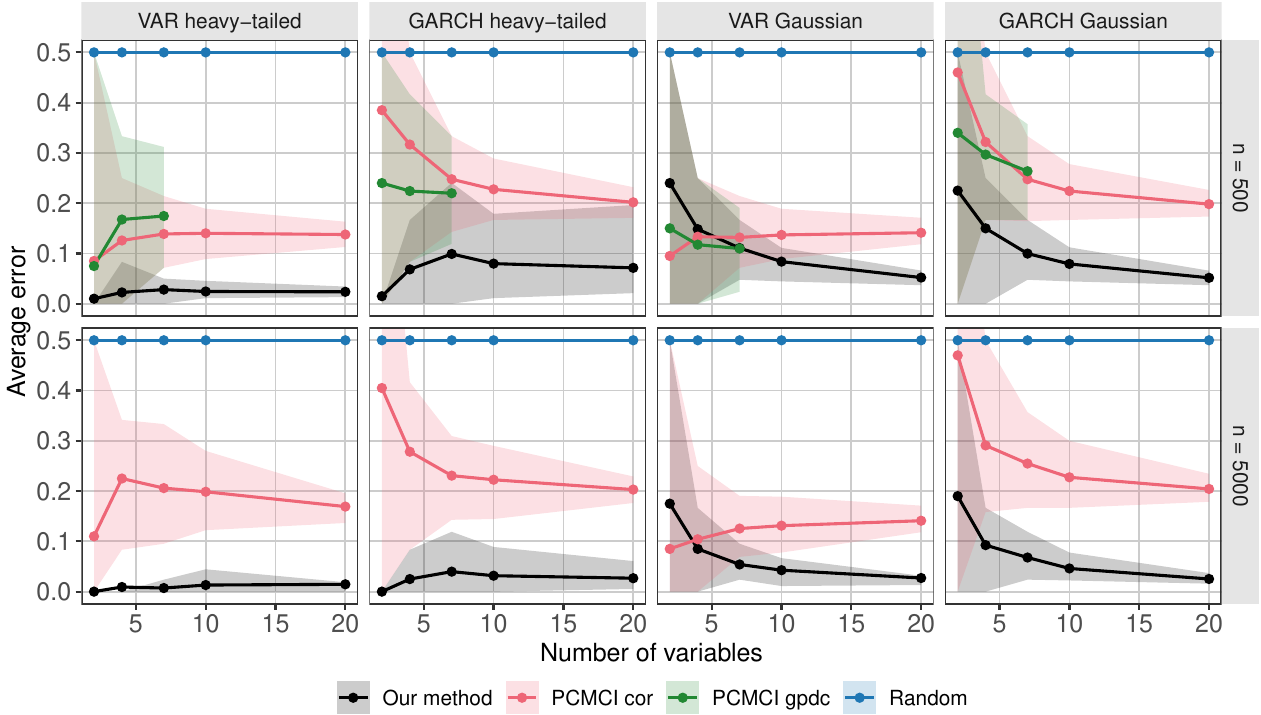}
\caption{Comparison of the average model errors between our approach and the competitors for different numbers of variables (x-axis), data processes (columns) and sample sizes (rows).
The variability bands show the 10--90\% inter-quantile spread across repetitions.
The ``random'' algorithm generates a random graph with each edge present with probability $\frac{1}{2}$. Due to time complexity constraints, PCMCI with GPDC independence test is estimated only for $n=500, m\leq 7$. 
}
\label{Sim_6_results}
\end{figure}

Figure~\ref{Sim_6_results} summarises the results. 
Overall, our causality-in-extremes approach shows robust performance across all settings. 
It achieves significantly lower average error than the other state-of-the-art methods in all cases except in the low-dimensional VAR Gaussian setting, although the relative performance seems to improve with sample size in the latter case. %
In the Gaussian cases, our method seems to outperform competitors by a larger margin in high dimensions, which could be a valuable property, as high-dimensional settings are typically harder for most methods. 
Again, the relative error between methods should be compared for each value of $m$, rather than their evolution with $m$, due to the nature of the evaluation metric and the relative edge sparsity for graphs with large $m$ values.
Furthermore, the error variability of our method across repetitions is, also, comparatively much smaller, in most cases.

For noise distributions less heavy-tailed than $\text{Pareto}(1)$ and Cauchy, additional experiments seem to indicate that the performance of Algorithm 2 is close to the $\text{Pareto}(1)$ and Cauchy cases. 
The results for the VAR structure with $\text{Pareto}(2)$ noise distribution, presented in Section~\ref{appendix_simulations_Pa2} and Figure~\ref{Sim_6_results_Pa2} of the supplementary material, are almost identical to the $\text{Pareto}(1)$ results. 
Overall, it, thus, seems that our causality-in-extremes approach significantly outperforms the considered competitors in the scenarios considered in this study.

Our approach also offers a significant advantage in computational efficiency. For a dataset with \( n = 500 \) and \( m = 20 \), our algorithm estimates the causal graph in about 5.96 seconds (Intel Core i5-6300U 2.5 GHz, 16 GB RAM), compared to PCMCI with RobustParCorr at 13.34 seconds and to PCMCI with GPDC at over an hour of compute time.

\section{Application to real-data scenarios}
\label{Section_application}

\subsection{Causality in extreme hydrological events}
\label{section_river_application}

We apply our methodology to infer the causal relationship between extreme precipitation and extreme river discharge.
We analyze discharge data recorded by the Swiss Federal Office for the Environment (\url{hydrodaten.admin.ch}), which were studied and provided by the authors of \cite{Pasche, engelke2021sparse, Pasche2024}, with preliminary insights. Precipitation data are sourced from the Swiss Federal Office of Meteorology and Climatology, MeteoSwiss (\url{gate.meteoswiss.ch/idaweb}).

\begin{figure}[tb]
\centering
\includegraphics[width=0.95\textwidth]{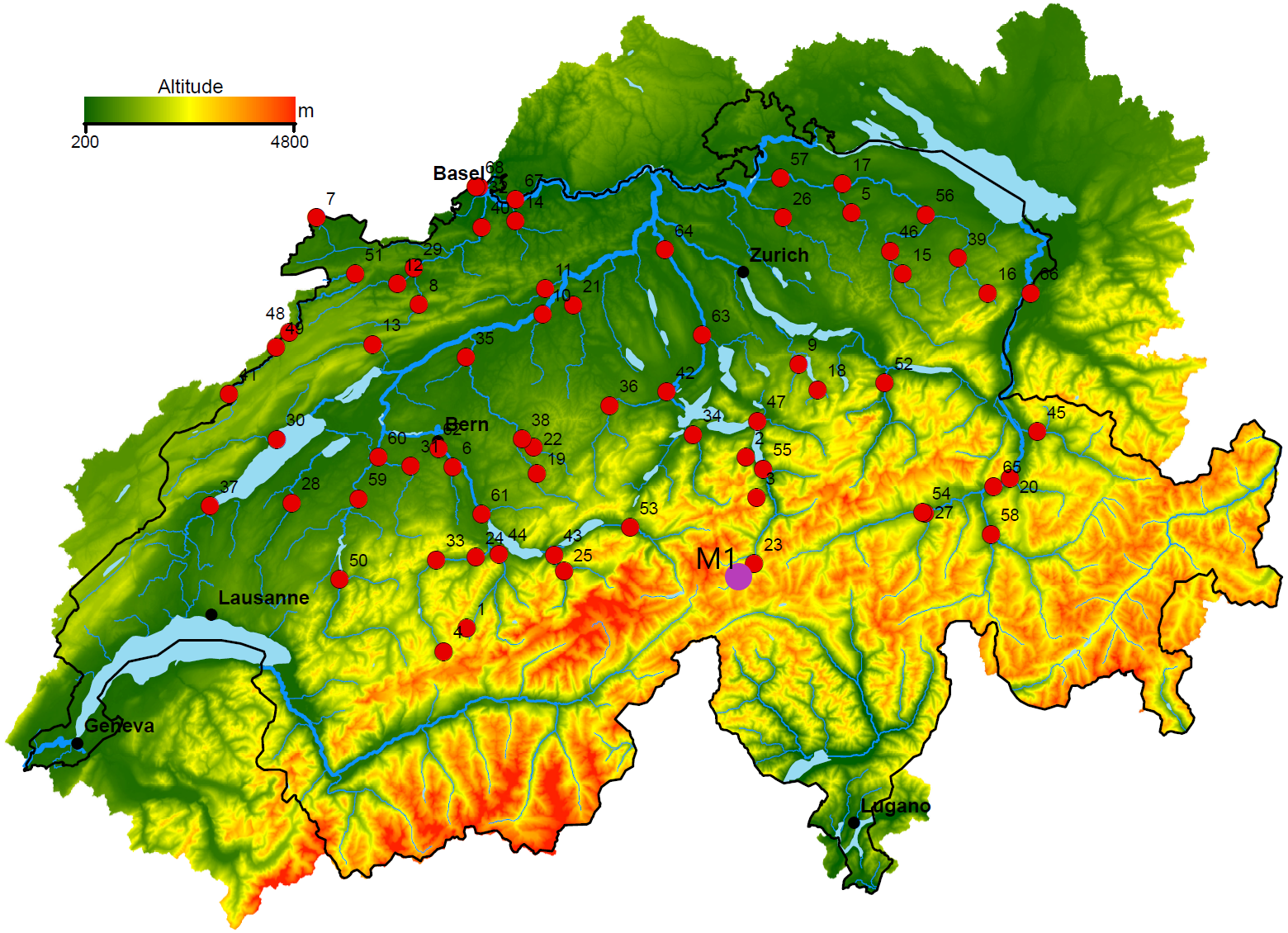}
\caption{Topographic map showing all 68 gauging stations in Switzerland \citep{Pasche}. The purple dot ``M1'' represents the meteorological station.}
\label{map}
\end{figure}

Figure~\ref{map} displays a map of all gauging stations with river discharge measurements, along with the meteorological station M1 located at the source of the Reuss river in Switzerland. Let $\textbf{X} = (X_t)_{t\in\mathbb{Z}}$ represent the daily total precipitation at M1 and $\textbf{Z} = (\textbf{Z}_t)_{t\in\mathbb{Z}}$ denote other meteorological measurements, in particular the daily maximum temperature and the relative air humidity 2m above the surface. Let $\textbf{Y}^k = (Y_t^k)_{t\in\mathbb{Z}}$ represent the daily average river discharge at station $k\in\{1, \dots, 68\}$. Most river stations have been monitored for over 50 years, providing extensive historical data. Following \cite{Pasche}, we only focus on the summer months. 

\subsubsection{Difference between the two types of causality in extremes}
A natural working assumption is that precipitation is the cause of river discharge and river discharge is not the cause of the precipitation. However, the causal relations vary across the river stations. For instance, consider station number 23 located close to M1. We posit that the ground truth is $\textbf{X}\toverset{ext}{\to}\textbf{Y}^{23}$ since extreme precipitation at M1 should always lead to large discharge values at station 23. This also applies to all stations along the Reuss river, as extreme discharge at station 23 propagates downstream to stations 3, 55, and so on.

Conversely, consider station 7 in the northwest of Switzerland. We posit that $\textbf{X}\toverset{ext}{\not\to}\textbf{Y}^{7}$, since extreme precipitation at M1 {does not always} lead to extreme discharge levels at station 7, but that $\textbf{X}\toverset{tail}{\to}\textbf{Y}^{7}$, as the cloud causing extreme precipitation at M1 may sometimes reach station 7, but not always. 
In summary, our hypothesis for the ground truth is the following: $\textbf{X}\toverset{tail}{\to}\textbf{Y}^{k}$ for all $k$, while $\textbf{X}\toverset{ext}{\to}\textbf{Y}^{k}$ only for stations located downstream of M1, on the Reuss river.

\subsubsection{Testing for causality in the tails}

We test whether $\textbf{X}\toverset{tail}{\to}\textbf{Y}^{k}$ and whether $\textbf{Y}^k\toverset{tail}{\to}\textbf{X}$ for all $k\in\{1, \dots, 68\}$ using the procedure outlined in Section~\ref{Section_testing}, with significance level $\alpha = 0.05$. This results in $2\cdot68 = 136$ tests. 
Choosing hyper-parameters as detailed in Section~\ref{Section_choice_of_hyperparameters}, and considering the temperature and humidity $\textbf{Z}$ as potential confounders, we obtain the following results.

Out of $136$ tests conducted, $134$ yielded outcomes supporting the assumed ground truth. There were two instances of disagreements: for station $k=65$ the null hypothesis $H_0: \textbf{Y}^{65}\toverset{tail}{\not\to}\textbf{X}$ was rejected, and for station $k=4$ the converse $H_0: \textbf{X}\toverset{tail}{\not \to}\textbf{Y}^{4}$ was not rejected. As some of the tests can have false positives with a significance level lower than $\alpha=0.05$ simply by randomness, the first case is expected over $68$ such tests. The second case suggests that extreme precipitation in M1 does not lead to an increased chance of extreme precipitation in station 4.  As the highest peaks of the Swiss Alps mountains are situated between these two stations, clouds at M1 may be prevented from moving to the catchment of station~4, which could explain this outcome. 
 
An intriguing observation emerges when examining the coefficients $\hat{\Gamma}_{\textbf{X}\to\textbf{Y}^k \mid \textbf{Z}}$: all stations situated to the east of meteorological station M1 demonstrate notably high values of $\hat{\Gamma}_{\textbf{X}\to\textbf{Y}^k \mid \textbf{Z}}$, whereas stations to the west exhibit comparatively lower values, often just reaching the threshold of significance. 
This phenomenon might be due to a prevailing movement of clouds from west to east, a phenomenon well-known in the meteorological community as the `westerlies'. 
However, a further dedicated analysis, including additional meteorological data from other locations, would be necessary to confirm this hypothesis.

\subsection{Causality in extreme events of cryptocurrency returns}
\label{section_crypto_application}
We analyse data sourced from the G-Research Crypto Forecasting competition\footnote{See \url{https://www.kaggle.com/c/g-research-crypto-forecasting}.}. 
The dataset comprises 14 high-frequency time series representing various cryptocurrencies' return performances. 
We focus solely on a subset of the data, examined in \cite{g-research-crypto-forecastingII}: that is, adopting minute-wise time intervals and considering the closing price at the end of each minute, transformed into negative log returns. 
However, we consider the last ten days of data rather than only the last day, resulting in $n=14400$ closing log-return observations for each of the 14 variables.
Our goal is to identify any causal relationships in extremes among these 14 time series, to determine which cryptocurrency might serve as the primary driver, causing extreme events in returns for the others.

\begin{figure}[!tb]
\centering
\includegraphics[scale=0.7]{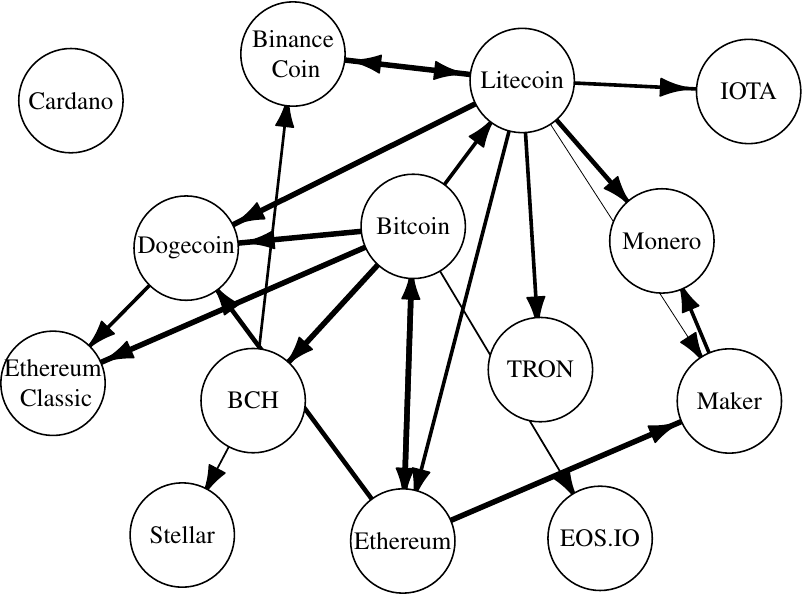}
\caption{Estimated summary causal graph indicating Granger causality in extremes among the negative log returns of cryptocurrencies. 
The graph is obtained using Algorithm~\ref{Algorithm2} incorporating the testing procedure outlined in Section~\ref{Section_testing}.  
The width of each edge represents the magnitude of the p-value; a value close to $0$ results in a wider edge.}
\label{Crypto_graphs}
\end{figure}

We apply Algorithm~\ref{Algorithm2} using the testing procedure described in Section~\ref{Section_testing}.
The findings are presented in Figure~\ref{Crypto_graphs}. 
Applying Algorithm~\ref{Algorithm1} instead of the testing procedure leads to similar conclusions; %
see Figure~\ref{Crypto_graphs_full} in Supplement~\ref{appendix_crypto}.
We choose hyper-parameters as detailed in Section~\ref{Section_choice_of_hyperparameters}, with a lag of 1 min, recognizing the high-speed nature of the market, where changes can propagate within seconds.

The findings highlight Bitcoin and Litecoin as the main drivers,
with Monero, Stellar, Ethereum, Classic, EOS.IO or BCH appearing to be influenced by the others.  
These outcomes align with expectations, as Bitcoin is commonly regarded as a leading indicator in the cryptocurrency market, and Litecoin was the second cryptocurrency launched after Bitcoin.
As Ethereum is a token standardisation leader, network hub, protocol innovator, and market influencer, one would expect it to be an important driver.
Our results align only in part with this expectation, as they show a causal feedback loop with Bitcoin, making it the only currency seemingly influencing the latter, but seem to show direct influence on fewer currencies compared to Bitcoin or Litecoin.

Additionally, we present results when using a lag of 30 min in Supplement~\ref{appendix_crypto}, for comparison.
Although having many similar patterns, like bitcoin being the main driver, more causal relationships seem to be detected.
Given the high-speed nature of the cryptocurrency market, unobserved confounders are more likely to influence the results, and indirect causal effects are more likely to be detected, for such a comparatively large lag.

As a practical takeaway from these findings, if an extreme drop or increase is observed in one of the causal variables, 
we can anticipate a corresponding extreme event in the affected variables. For instance, if there is a notable drop or increase in Bitcoin returns, it may be advisable to promptly consider selling or buying BCH, respectively. 
It is crucial to note that our analysis is based on a fixed period of 10 days, and thus only represents the behaviour of this marker during that period. 
Within this time frame, some causal relationships may not fully manifest, and some observed relationships could be spurious, potentially influenced by unobserved events influencing the market during those days. 
Therefore, for more robust and representative results, a more extensive analysis of the market, coupled with background knowledge, would be necessary.

\section{Conclusion}

We formalized the concept of causality in extremes of time series through two intuitive definitions. Under weak assumptions, we demonstrated that these definitions are equivalent and represent a special case of Granger, Sims and structural causality. We proposed several estimation, causal discovery and testing strategies, which we showed to yield correct results as the sample size grows to infinity. Additionally, our framework can manage hidden confounders under the Grey assumptions. Through simulations, we established the effectiveness and efficiency of our methods, that outperform current state-of-the-art approaches in both accuracy and speed. When applied to real-world cases, our methods successfully uncovered coherent causal relationships between precipitation and river discharge at different locations, as well as between cryptocurrency returns.

However, several open questions remain. Can our framework be useful for other causal inference tasks besides causal discovery? For instance, can we quantify the effect of $X_t$ on $Y_{t+p}$ in extremes? Is our framework robust against hidden confounders under the Kesten-Goldie assumptions? Can we replace the bootstrap testing procedure from Section~\ref{Section_testing} with a faster and more theoretically justifiable alternative? Alternatively to the presented approach, we have also considered a permutation test; we ultimately did not retain this approach due to its lack of a well-functioning generalization for longer causal lags.

Granger causality in mean and Granger causality in variance are prominent concepts within the causal literature, applied across various scientific disciplines in thousands of research articles. The formalization of Granger causality in extremes could significantly advance research by complementing the other two types. It could particularly prove useful in practical applications where understanding the drivers of extreme events is increasingly central, for example in meteorology, weather, finance and insurance.

\section*{Supplementary material}
\subsection*{Supplementary results}
The Supplement discusses generalizations of the results presented in the main paper to non-unit causal lags and to both tails, details about the simulation studies, and the mathematical proofs.
It is provided in appendix to this paper.

\subsection*{Code and data}
The implementation of the methods discussed in this manuscript is available as an open-source R package at \url{https://github.com/opasche/ExtremeGranger}. 
The code to reproduce the simulations, as well as the cryptocurrency data analyzed in Section~\ref{section_crypto_application} is available at \url{https://github.com/jurobodik/Granger-causality-in-extremes}. 
While the hydro-meteorological data analysed in Section~\ref{section_river_application} are not publicly available, they can be ordered through \url{hydrodaten.admin.ch} and \url{gate.meteoswiss.ch/idaweb} after registration or by requesting the formatted data from the authors of \cite{Pasche}.

\section*{Declarations}

\subsection*{Acknowledgements}
The authors would like to thank Val\'erie Chavez-Demoulin and Sebastian Engelke for their support and advice. 
Part of this research was completed while the first author was a visiting scholar at the Department of Statistics, UC Berkeley, and while the second was a visiting scholar at the Department of Industrial Engineering and Operations Research, Columbia University. Both authors thank the departments for their hospitality during this period. 

\subsection*{Funding}
The first author was supported by the Swiss National Science Foundation grant number 201126. 
The second author was supported by the Swiss National Science Foundation Eccellenza Grant 186858.

\begin{footnotesize}
\bibliography{bibliography}
\end{footnotesize}

\newpage

\appendix

\begin{center}
{\LARGE\bf 
SUPPLEMENT TO\\ 
\vspace{12pt} 
``Granger Causality in Extremes''}
\end{center}

\vspace{10mm}

\setcounter{section}{0}
\setcounter{subsection}{0}
\setcounter{equation}{0}
\setcounter{figure}{0}
\setcounter{table}{0}

\setcounter{lemma}{0}
\setcounter{proposition}{0}
\setcounter{definition}{0}
\setcounter{example}{0}
\setcounter{algocf}{0}

\renewcommand{\thesection}{S.\arabic{section}}
\renewcommand{\thesubsection}{S.\arabic{section}.\arabic{subsection}}
\renewcommand{\theequation}{S.\arabic{equation}}
\renewcommand{\thefigure}{S.\arabic{figure}}
\renewcommand{\thetable}{S.\arabic{table}}
\renewcommand{\thealgocf}{S.\arabic{algocf}}

\renewcommand{\theHsection}{S.\thesection}
\renewcommand{\theHsubsection}{S.\thesubsection}
\renewcommand{\theHequation}{S.\theequation}
\renewcommand{\theHfigure}{S.\thefigure}
\renewcommand{\theHtable}{S.\thetable}
\renewcommand{\theHalgocf}{S.\thealgocf}

\renewcommand{\thelemma}{S.\arabic{lemma}}
\renewcommand{\theproposition}{S.\arabic{proposition}}
\renewcommand{\thedefinition}{S.\arabic{definition}}
\renewcommand{\theexample}{S.\arabic{example}}

\renewcommand{\theHlemma}{S.\thelemma}
\renewcommand{\theHproposition}{S.\theproposition}
\renewcommand{\theHdefinition}{S.\thedefinition}
\renewcommand{\theHexample}{S.\theexample}

This supplement is organized as follows.
\begin{itemize}
    \item \ref{Appendix_A} formalises the extensions sketched in the main text: non-unit causal lags, instantaneous effects, causality in both tails, and time series with bounded support.
    \item \ref{s:HardnessTesting} shows a no-free-lunch theorem: without restricting the model class, no valid-level test exists for Granger non-causality or for non-causality in extremes. This is established as a time-series counterpart of the i.i.d.\ conditional-independence hardness result of \cite{HARDNESS_OF_CONDITIONAL_TESTING}, obtained by embedding the i.i.d.\ problem into the one-step transition of a stationary process.
    \item \ref{appendix_all_simulations} gathers the additional numerical material: the block-bootstrap test for causality in tail, the hyper-parameter analysis, further comparative-performance results, and supplementary figures for the cryptocurrency application.
    \item \ref{Appendix_proofs} collects all the proofs.
\end{itemize}

\section{Extensions}
\label{Appendix_A}

In Section~\ref{section_extensions}, we briefly introduced several extensions of
Definition~\ref{definiton_of_tail_causality}. This section provides the formal details.

\begin{itemize}
    \item \textbf{Supplement~\ref{Appendix_A_lag_greater_than_1} Non-unit causal lags:} we allow an extreme event in $X_t$ to affect
    $Y$ at any time $t+1,\ldots,t+p$.

    \item \textbf{Supplement~\ref{Appendix_A_Instantaneous} Instantaneous effects:} we include $Y_t$ in the response window, which
    requires stronger structural assumptions for a causal interpretation.

    \item \textbf{Supplement~\ref{Appendix_A_both_tails} Both tails:} we extend the definitions from upper-tail events to
    two-sided extremes by considering $|X|$ and $|Y|$.

    \item \textbf{Supplement~\ref{Appendix_A_support} Bounded support:} we replace limits at infinity by limits toward the
    corresponding endpoint of the support.
\end{itemize}

\subsection{Causality in extremes with non-unit causal lags}
\label{Appendix_A_lag_greater_than_1}

The assumption that the effect of an extreme event in \(X_t\) must appear already at
time \(t+1\) can be restrictive. In many time series systems, causal effects may propagate
with a delay. We therefore extend Definition~\ref{definiton_of_tail_causality} by allowing
an extreme event in \(X_t\) to affect \(Y\) at any time \(t+1,\ldots,t+p\), where
\(p\in\mathbb N\) is a fixed max-lag.

This extension is motivated by structural time series of the form
\begin{equation}\label{structural_generation_lagged}
\begin{split}
    X_t
    &=h_{X}\big(
        X_{t-1},\dots,X_{t-q_X},
        Y_{t-1},\dots,Y_{t-q_X},
        \mathbf Z_{t-1},\dots,\mathbf Z_{t-q_X},
        \varepsilon^X_t
    \big),\\
    Y_t
    &=h_{Y}\big(
        X_{t-1},\dots,X_{t-q_Y},
        Y_{t-1},\dots,Y_{t-q_Y},
        \mathbf Z_{t-1},\dots,\mathbf Z_{t-q_Y},
        \varepsilon^Y_t
    \big),
\end{split}
\end{equation}
for all \(t\in\mathbb Z\), where \(q_X,q_Y\in\mathbb N\) are structural lags. The
parameter \(p\) below should be understood as the maximal future lag at which we assess
an extremal effect, and does not need to be identical to \(q_X\) or \(q_Y\).

\begin{definition}[Causality in extremes with max-lag \(p\)]
\label{definition_CTC_with_lag}
Let \(\mathbf W=(\mathbf X,\mathbf Y,\mathbf Z)=((X_t,Y_t,\mathbf Z_t)^\top,t\in\mathbb Z)\)
be a finite-dimensional stochastic process. Let \(F\) be a distribution function satisfying
\(F(x)<1\) for all \(x\in\mathbb R\). Let
\[
    \mathcal C_t
    :=
    \sigma(X_s,Y_s,\mathbf Z_s:s\le t),
    \qquad
    \mathcal C_t^{-X_t}
    :=
    \sigma(X_s:s<t,\;Y_s,\mathbf Z_s:s\le t).
\]
For \(p\in\mathbb N\), define the causal tail coefficient adjusted for \(\mathbf Z\), at time
\(t\) and max-lag \(p\), by
\[
    \Gamma^t_{\mathbf X\to\mathbf Y\mid\mathcal C}(p)
    :=
    \lim_{v\to\infty}
    \mathbb E\!\left[
        \max\{F(Y_{t+1}),\ldots,F(Y_{t+p})\}
        \mid X_t>v,\mathcal C_t^{-X_t}
    \right],
\]
provided that the limit exists almost surely. The corresponding baseline coefficient is
\[
    \Gamma^{t,\baseline}_{\mathbf X\to\mathbf Y\mid\mathcal C}(p)
    :=
    \mathbb E\!\left[
        \max\{F(Y_{t+1}),\ldots,F(Y_{t+p})\}
        \mid \mathcal C_t^{-X_t}
    \right].
\]
If these quantities do not depend on \(t\), for instance under stationarity, we omit the
superscript \(t\).

We say that the upper tail of \(\mathbf X\) causes \(\mathbf Y\) at time \(t\), adjusted for
\(\mathbf Z\), with max-lag \(p\), if
\[
    \Gamma^t_{\mathbf X\to\mathbf Y\mid\mathcal C}(p)
    \not\equiv
    \Gamma^{t,\baseline}_{\mathbf X\to\mathbf Y\mid\mathcal C}(p)
    \quad \text{a.s.}
\]
We write
\(
    \mathbf X\overset{{\rm tail}(p)}{\longrightarrow}\mathbf Y\mid\mathbf Z
\)
if this holds for some \(t\in\mathbb Z\).

We say that an upper extreme in \(\mathbf X\) causes an extreme in \(\mathbf Y\) at time
\(t\), adjusted for \(\mathbf Z\), with max-lag \(p\), if
\[
    \Gamma^t_{\mathbf X\to\mathbf Y\mid\mathcal C}(p)=1
    \quad \text{a.s.}
\]
We write
\(
    \mathbf X\overset{{\rm ext}(p)}{\longrightarrow}\mathbf Y\mid\mathbf Z
\)
if this holds for some \(t\in\mathbb Z\).

As in Definition~\ref{definiton_of_tail_causality}, we omit ``\(\mid\mathbf Z\)'' from the
notation when the conditioning set \(\mathcal C_t^{-X_t}\) is causally sufficient.
\end{definition}

For \(p=1\), Definition~\ref{definition_CTC_with_lag} reduces to
Definition~\ref{definiton_of_tail_causality} under 1-Markov property. The results in the main text can be adapted to this max-lag setting by replacing \(F(Y_{t+1})\) with \(     \max\{F(Y_{t+1}),\ldots,F(Y_{t+p})\}\) throughout.

\subsubsection{Results from Section 2 adjusted to extremal lag}
\label{appendix_lag_section2}
Before we delve into the connections between the concept of causality in extremes adjusted to extremal lag $p$ and classical causality, we introduce a new type of causal notion, called Sims causality \citep{sims,GrangerSims2, GrangerSims}. In contrast to Granger causality, it takes in account not only direct but also indirect causal effects.

\begin{definition}[Sims causality]\label{Definition_Sims}
Following the notation from Definition~\ref{DEF1}, we say that the process \(\mathbf X\)
Sims-causes the process \(\mathbf Y\), adjusted for \(\mathbf Z\), if
\[
    \mathbf Y_{\future(t)}
    :=
    \{Y_{t+s}:s\ge 1\}
    \notindep X_t
    \mid \mathcal C_t^{-X_t}
    \quad \text{for some } t\in\mathbb Z.
\]
We write \(\mathbf X\toverset{\mathrm{Sims}}{\to}\mathbf Y\mid\mathbf Z\). If the conditioning
set \(\mathcal C_t^{-X_t}\) is causally sufficient, we simply write
\(\mathbf X\toverset{\mathrm{Sims}}{\to}\mathbf Y\).
\end{definition}

Granger causality and Sims causality are related, but not equivalent \citep{GrangerSims}. Notable difference is that if $\textbf{X}$ Granger-causes $\textbf{Y}$ only via a mediator ( $\textbf{X}\toverset{G}{\to}\textbf{Z}\toverset{G}{\to}\textbf{Y}$, but $\textbf{X}\toverset{G}{\not\to}\textbf{Y}$), Sims causality typically captures this relation ($\textbf{X}\toverset{Sims}{\to}\textbf{Y}$). Note that for $1$-Markov time series,
\begin{equation*}
    \begin{split}
Y_{t+1}\notindep X_t \mid \mathcal{C}^{-\textbf{X}}_t \iff Y_{t+1} \notindep X_t \mid   \mathcal{C}^{-X_t}_t \implies 
\textbf{Y}_{\future(t)} \notindep X_t \mid    \mathcal{C}^{-X_t}_t,
    \end{split}
\end{equation*}
 and hence, Granger causality implies Sims causality \citep{GrangerSims}. 

 An analogous result to Proposition~\ref{Proposition_Granger_equivalent_tail} can be stated. The proof is presented in Supplement \ref{proof_of_proposition_with_lag}. 

\begin{proposition}\label{proposition_with_lag_1}
Consider the data-generating process (\ref{structural_generation_lagged}). Then for every $p\in\mathbb{N}$, 
\begin{equation*}
    \begin{split}
   \textbf{X}\overset{{\rm ext}(p)}{\longrightarrow}\textbf{Y}    &\implies \textbf{X}\overset{{\rm tail}(p)}{\longrightarrow}\textbf{Y}    
    \implies \textbf{X}\toverset{Sims}{\to}\textbf{Y}.  
    \end{split}
\end{equation*}
\end{proposition}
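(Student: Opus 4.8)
The claim has two implications chained together. The plan is to prove each separately.

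For the first implication $\textbf{X}\overset{{\rm ext}(p)}{\longrightarrow}\textbf{Y}\implies\textbf{X}\overset{{\rm tail}(p)}{\longrightarrow}\textbf{Y}$, I would argue exactly as in Proposition~\ref{Proposition_CTC_iff_1}, now with the $\max$-aggregated quantity $M_p := \max\{F_{}(Y_{t+1}),\dots,F_{}(Y_{t+p})\}$ in place of $F_{}(Y_{t+1})$. Since $M_p$ is a bounded random variable taking values in $[0,1]$, and since $\textbf{X}\overset{{\rm ext}(p)}{\longrightarrow}\textbf{Y}$ means $\Gamma_{\textbf{X}\to\textbf{Y}\mid\mathcal{C}}(p)=\lim_{v\to\infty}\mathbb{E}[M_p\mid X_t>v,\mathcal{C}^{-X_t}_t]=1$, while $\Gamma^{\baseline}_{\textbf{X}\to\textbf{Y}\mid\mathcal{C}}(p)=\mathbb{E}[M_p\mid\mathcal{C}^{-X_t}_t]$, it suffices to observe that the baseline cannot also equal $1$: if $\mathbb{E}[M_p\mid\mathcal{C}^{-X_t}_t]=1$ then $M_p=1$ almost surely (being bounded by $1$), i.e. $\max_j F_{}(Y_{t+j})=1$ a.s., which is impossible because $F_{}(x)<1$ for all $x\in\mathbb{R}$ and the $Y_{t+j}$ are real-valued. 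Hence $\Gamma(p)\neq\Gamma^{\baseline}(p)$, giving $\textbf{X}\overset{{\rm tail}(p)}{\longrightarrow}\textbf{Y}$. (One should note the measure-zero / conditional-version caveats, but these are the same as in the unlagged case.)

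For the second implication $\textbf{X}\overset{{\rm tail}(p)}{\longrightarrow}\textbf{Y}\implies\textbf{X}\toverset{Sims}{\to}\textbf{Y}$, I would argue by contraposition. Suppose $\textbf{X}\toverset{Sims}{\not\to}\textbf{Y}$, i.e. $\textbf{Y}_{\future(t)}=\{Y_{t+s}:s\geq 1\}\indep X_t\mid\mathcal{C}^{-X_t}_t$ for all $t$. Then in particular the finite subvector $(Y_{t+1},\dots,Y_{t+p})$ is conditionally independent of $X_t$ given $\mathcal{C}^{-X_t}_t$, so the measurable function $M_p$ of that subvector satisfies $M_p\indep X_t\mid\mathcal{C}^{-X_t}_t$. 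Consequently the conditional distribution of $M_p$ given $\{X_t>v,\mathcal{C}^{-X_t}_t\}$ coincides with that given $\mathcal{C}^{-X_t}_t$ alone (for every threshold $v$ with $P(X_t>v\mid\mathcal{C}^{-X_t}_t)>0$), whence $\mathbb{E}[M_p\mid X_t>v,\mathcal{C}^{-X_t}_t]=\mathbb{E}[M_p\mid\mathcal{C}^{-X_t}_t]$ for all such $v$; taking $v\to\infty$ gives $\Gamma_{\textbf{X}\to\textbf{Y}\mid\mathcal{C}}(p)=\Gamma^{\baseline}_{\textbf{X}\to\textbf{Y}\mid\mathcal{C}}(p)$, i.e. $\textbf{X}\overset{{\rm tail}(p)}{\not\longrightarrow}\textbf{Y}$. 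This is the contrapositive of the desired implication.

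The main obstacle I anticipate is purely technical rather than conceptual: making the conditioning on the event $\{X_t>v\}$ jointly with the sigma-algebra $\mathcal{C}^{-X_t}_t$ rigorous, since $\mathcal{C}^{-X_t}_t$ is an infinite-dimensional information set and the conditional independence statement must be manipulated at the level of regular conditional distributions (and one must handle the null set of $\omega$ where $P(X_t>v\mid\mathcal{C}^{-X_t}_t)=0$, and check that "$>v$" conditioning passes to the limit). This is exactly the same measure-theoretic bookkeeping already carried out in the proofs of Propositions~\ref{Proposition_CTC_iff_1} and~\ref{Proposition_Granger_equivalent_tail}, so I would invoke those arguments and only indicate the (trivial) changes needed to replace $F_{}(Y_{t+1})$ by $M_p$ and $\mathcal{C}^{-\textbf{X}}_t$ by $\mathcal{C}^{-X_t}_t$; no genuinely new idea is required beyond noticing that Sims non-causality supplies exactly the conditional independence of the whole future block $\textbf{Y}_{\future(t)}$ that the lagged coefficient needs.
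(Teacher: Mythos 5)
Your proposal is correct and follows essentially the same route as the paper's proof: the first implication is obtained by noting that the baseline $\mathbb{E}[\max_j F_{}(Y_{t+j})\mid\mathcal{C}^{-X_t}_t]$ is strictly below $1$ since $F_{}(x)<1$ everywhere, and the second is the same contraposition from Sims non-causality (conditional independence of the future block) to equality of the conditional and baseline expectations for every threshold $v$.
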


\begin{proposition}[$   \textbf{X}\overset{{\rm ext}(p)}{\longrightarrow}\textbf{Y}  \impliedby \textbf{X}\toverset{Sims}{\to}\textbf{Y}  $]\label{proposition_with_lag_2}
Consider the structural time series
\[
\begin{split}
   \mathbf Z_t
   &= h_Z(X_{t-1},\ldots,X_{t-q_Z},
          Y_{t-1},\ldots,Y_{t-q_Z},
          \mathbf Z_{t-1},\ldots,\mathbf Z_{t-q_Z},
          \varepsilon^Z_t),\\
   X_t
   &= h_X(X_{t-1},\ldots,X_{t-q_X},
          Y_{t-1},\ldots,Y_{t-q_X},
          \mathbf Z_{t-1},\ldots,\mathbf Z_{t-q_X},
          \varepsilon^X_t),\\
   Y_t
   &= h_Y(X_{t-1},\ldots,X_{t-q_Y},
          Y_{t-1},\ldots,Y_{t-q_Y},
          \mathbf Z_{t-1},\ldots,\mathbf Z_{t-q_Y},
          \varepsilon^Y_t).
\end{split}
\]
Assume that \(h_X,h_Y,h_Z\), are upper-tail preserving (Definition~\ref{upper_tail_preserving_definition}). Assume further that,
for every \(t\) and every \(m\ge 1\),
\[
   (\varepsilon^X_{t+1},\ldots,\varepsilon^X_{t+m},
    \varepsilon^Y_{t+1},\ldots,\varepsilon^Y_{t+m},
    \varepsilon^Z_{t+1},\ldots,\varepsilon^Z_{t+m})
   \indep X_t \mid \mathcal C_t^{-X_t}.
\]
Then, if $
   \ell_t:=\min\{s\ge 1:
   Y_{t+s}\not\indep X_t\mid \mathcal C_t^{-X_t}\}<\infty$, then
\[
   \Gamma_{\mathbf X\to \mathbf Y\mid\mathcal C}^t(p)=1
   \qquad\text{for every }p\ge \ell_t.
\]
\end{proposition}

\begin{definition}[Upper-tail preserving functions]\label{upper_tail_preserving_definition}
Let \(h:\mathbb R^m\to\mathbb R\) be continuous. We say that \(h\) is upper-tail preserving if, for every non-empty
\(I\subset\{1,\ldots,m\}\), either \(h\) is constant in the coordinates
\((x_i)_{i\in I}\), or, for every compact set
\(K\subset \mathbb R^{m-|I|}\),
\[
    \lim_{r\to\infty}
    \inf_{\substack{x_i\ge r,\ i\in I\\ x_{-I}\in K}}
    h(x_1,\ldots,x_m)=\infty .
\]
\end{definition}

\subsubsection{Results from Section 3 adjusted to extremal lag}
\label{supplement_section3_lagged}
We assert that a lagged version of Theorem~\ref{Theorem1}, specifically:
\begin{equation}\label{OhYeah_lagged}
\Gamma_{\textbf{X}\to\textbf{Y}\mid\mathcal{C}}(p) = 1 \iff  \Gamma_{\textbf{X}\to\textbf{Y}\mid \emptyset}(p) = 1,
\end{equation}
can be established. 

\begin{assumptions*}
Consider the SRE with a lag of $p$ \citep[Chapter 5]{SRE}:
\begin{equation*}\label{generalSRE_lagged}
    \textbf{W}_t = \sum_{i=1}^p\textbf{A}_{t}^{(i)}\textbf{W}_{t-i} +  \textbf{B}_t,\qquad t\in\mathbb{Z},
\end{equation*}
where $(\textbf{A}_t^{(1)},\dots, \textbf{A}_t^{(p)}, \textbf{B}_t)$ is an iid sequence, $\textbf{A}_t^{(1)}$ are $d\times d$ matrices and $\textbf{B}_t$ are $d$ dimensional vectors. We will work with the following assumptions.
\begin{itemize}
    \item[(A1.2)]  $\mathbb{E}\log||\textbf{A}_t^{(i)}|| < 0$ and $\mathbb{E}\log_+|\textbf{B}_t| < \infty$ (ensuring stationarity of our time series),
    \item[(A2.2)]  $\varepsilon_t^{z}, \varepsilon_t^{x}, \varepsilon_t^{y}$ are independent for all $t\in\mathbb{Z}$ (i.e. no instantaneous causality).
    \item[(A3.2)] $\textbf{B}_t\indep \textbf{A}_t^{(i)}$ \footnote{We follow the convention that a deterministic variable is independent with any other variable} for all $t\in\mathbb{Z}$.
    \item[(A5.2)] If $P(A_{j,t}^{(k),i}=0)\neq1$ then $A_{j,t}^{(k),i}\toverset{a.s.}{>}0$  for all  $t\in\mathbb{Z}$ and $j=1,2,3$ and  $i=z,x,y$ and $k = 1, \dots, p$ (positivity assumption)
\end{itemize}
\end{assumptions*} 

We conjecture that Equation (\ref{OhYeah_lagged}) holds true under the aforementioned assumptions and with the condition of appropriate regular variation in our time series. However, proving this claim falls beyond the scope of this work. The tails of a lagged SRE remain relatively understudied in the literature.

\subsubsection{Results from Section 4 adjusted to extremal lag}
\label{supplement_section4_lagged}
In the following, we present an estimator of the coefficient $\Gamma_{\textbf{X}\to\textbf{Y}\mid\mathcal{C}}(p)$ based on a random sample, $p\in\mathbb{N}$. We denote by  $\textbf{Z} = (\textbf{Z}_t, t\in\mathbb{Z})$ a vector of other relevant time series (possible confounders).  We assume that we observe $(x_1, y_1, \textbf{z}_1)^\top, \dots, (x_T, y_T, \textbf{z}_T)^\top$, with the maximum observed time $T\in\mathbb{N}$. 

\begin{definition}
  We propose an estimator
  \begin{equation*}
\hat{\Gamma}_{\textbf{X}\to\textbf{Y}\mid\mathcal{C}}(p_x, p_y):= \frac{1}{|S_{p_x}|}\sum_{t\in S_{p_x}} \max\{F(y_{t+1}),\dots, F(y_{t+p_y})\},
  \end{equation*}
 where $S_{p_x}\subseteq\{1, \dots, T\}$ is a set described below. If $p_x=p_y=:p$, we simply write $\hat{\Gamma}_{\textbf{X}\to\textbf{Y}\mid\mathcal{C}}(p)$.
\end{definition}

 \begin{definition}
We propose the following definition:  \begin{equation*}
      S_{p_x} := \{i\in\{1, \dots, T\}: X_i\geq \tau_X, 
      \begin{pmatrix}
    Y_i \\
    \textbf{Z}_i 
\end{pmatrix}   \leq  \boldsymbol{\tau},
 \begin{pmatrix}
    Y_{i-1} \\
    \textbf{Z}_{i-1} 
\end{pmatrix}   \leq  \boldsymbol{\tau}, 
\dots, 
 \begin{pmatrix}
    Y_{i-p_x+1} \\
    \textbf{Z}_{i-p_x+1} 
\end{pmatrix}   \leq  \boldsymbol{\tau} \},
  \end{equation*}
where $\boldsymbol{\tau}$ is a hyperparameter and  $\tau_X$ is the $k$-th largest value in the set $\{X_t: t\in\tilde{S}_{p_x} \}$, where
$
      \tilde{S}_{p_x} := \{i\in\{1, \dots, T\}: 
      \begin{pmatrix}
    Y_i \\
    \textbf{Z}_i 
\end{pmatrix}   \leq  \boldsymbol{\tau},
 \begin{pmatrix}
    Y_{i-1} \\
    \textbf{Z}_{i-1} 
\end{pmatrix}   \leq  \boldsymbol{\tau}, 
\dots, 
 \begin{pmatrix}
    Y_{i-p_x+1} \\
    \textbf{Z}_{i-p_x+1} 
\end{pmatrix}   \leq  \boldsymbol{\tau} \},
$ and where $k$ satisfies (\ref{k_deleno_n}). 
\end{definition}
In other words, we condition on $X_i$ being extreme, while we require all variables in the past $p_x$ steps to be not extreme. 

Algorithms~\ref{Algorithm1} and \ref{Algorithm2}, along with the testing procedure outlined in Section~\ref{Section_testing}, can be straightforwardly adapted to incorporate the notion of the extremal lag.

\subsubsection{Alternative approach for defining causality in extremes with non-unit lag}

An alternative approach to extending $\Gamma_{\textbf{X}\to\textbf{Y}\mid\mathcal{C}}$, distinct from Definition \ref{definition_CTC_with_lag}, involves conditioning on the lagged values of $\textbf{X}$. 

\begin{definition}[Alternative definition of the causality in extremes---lagged version]\label{definition_CTC_with_lag_alternative}
    $$  \tilde{\Gamma}_{\textbf{X}\to\textbf{Y}\mid\mathcal{C}}(p) :=  \lim_{v\to\infty} \mathbb{E}[F(Y_{t+1})\mid X_{t-k}>v, \textbf{X}_{[t, t-q_y]\setminus\{p\}}, \mathcal{C}^{-\textbf{X}}_t],$$
      $$  \tilde{\Gamma}^{\baseline}_{\textbf{X}\to\textbf{Y}\mid\mathcal{C}}(p) :=  \lim_{v\to\infty} \mathbb{E}[F(Y_{t+1})\mid  \textbf{X}_{[t, t-q_y]\setminus\{p\}}, \mathcal{C}^{-\textbf{X}}_t],$$
where $0\leq p\leq q_y$ and where we used the notation $[t, t-q_y]\setminus\{p\} := (t-q_y, t-q_y+1, \dots, p-1, p+1, \dots, t)$. 

We define the causality in extremes and in tail up to lag $p$ analogously to the Definition~\ref{definition_CTC_with_lag}. 
\end{definition}
This approach offers a more intricate characterization of the causal structure, revealing which lagged value $X_{t-k}$ causes $Y_{t+1}$. However, that this option is not be well-suited for inference as it typically assumes that an extreme value in $X_{t-k}$ is observed while $X_{t-k+1},X_{t-k-1} $ are not extreme. This may be very impractical in real scenarios.

\subsection{Instantaneous Causality}\label{Appendix_A_Instantaneous}

We now allow the response window to include the contemporaneous value \(Y_t\). This is useful when the sampling frequency is too coarse to separate very short-lag effects from instantaneous effects, or when the model contains a meaningful contemporaneous causal ordering.

\begin{definition}[Instantaneous causality in extremes]
\label{definition_instantaneous_causality}
Let \(\mathbf W=(\mathbf X,\mathbf Y,\mathbf Z)=((X_t,Y_t,\mathbf Z_t)^\top,t\in\mathbb Z)\) be a finite-dimensional stochastic process. Let \(F\) be a distribution function satisfying \(F(x)<1\) for all \(x\in\mathbb R\), and let \(p\in\mathbb N\cup\{0\}\). Define
\[
\Gamma^{t}_{\mathbf X\to \mathbf Y\mid\mathcal C}([0,p])
:=
\lim_{v\to\infty}
\mathbb E\!\left[
\max_{0\le j\le p} F(Y_{t+j})
\,\middle|\,
X_t>v,\mathcal C_t^{-\{X_t,Y_t\}}
\right],
\]
provided that the limit exists almost surely, where
\(\mathcal C_t^{-\{X_t,Y_t\}}:=\sigma(X_s,Y_s:s<t,\;\mathbf Z_s:s\le t)\). The corresponding baseline coefficient is
\(\Gamma^{t,\baseline}_{\mathbf X\to \mathbf Y\mid\mathcal C}([0,p])
:=\mathbb E[\max_{0\le j\le p}F(Y_{t+j})\mid \mathcal C_t^{-\{X_t,Y_t\}}]\).

We say that \(\mathbf X\) tail-causes \(\mathbf Y\) on the window \([0,p]\), adjusted for \(\mathbf Z\), if
\(\Gamma^{t}_{\mathbf X\to \mathbf Y\mid\mathcal C}([0,p])
\not\equiv
\Gamma^{t,\baseline}_{\mathbf X\to \mathbf Y\mid\mathcal C}([0,p])\) a.s. for some \(t\). We write
\(\mathbf X\overset{{\rm tail}([0,p])}{\longrightarrow}\mathbf Y\mid\mathbf Z\).
We say that an extreme in \(\mathbf X\) causes an extreme in \(\mathbf Y\) on \([0,p]\), adjusted for \(\mathbf Z\), if
\(\Gamma^{t}_{\mathbf X\to \mathbf Y\mid\mathcal C}([0,p])=1\) a.s. for some \(t\), and write
\(\mathbf X\overset{{\rm ext}([0,p])}{\longrightarrow}\mathbf Y\mid\mathbf Z\).
As before, we omit ``\(\mid\mathbf Z\)'' when the conditioning set is causally sufficient.
\end{definition}

The case \(p=0\) corresponds to purely instantaneous extremal dependence. For \(p\ge1\), the coefficient detects an effect somewhere in the window \(Y_t,\ldots,Y_{t+p}\); it does not identify the exact lag without further restrictions.

\subsubsection{Results from Section 2 adjusted to instantaneous causality}

Definition~\ref{definition_instantaneous_causality} is \textbf{not} equivalent to Granger causality without strong additional assumption. The following example illustrates this point: Suppose that, at a one-second resolution, extremes propagate through one-step lagged effects \(X_t\to Y_{t+1}\) and \(Y_{t+1}\to X_{t+2}\). If only every other second is observed, or if observations are recorded as two-second block maxima, these one-step effects are collapsed into the same observed time point. The resulting instantaneous coefficients may therefore be large in both directions, even though the underlying effects are lagged rather than contemporaneous.    

\begin{proposition}[Identifiability with instantaneous effects]
\label{theorem_instantaneous}
Fix \(t\in\mathbb Z\). Let \(
\tilde{\mathcal C}_t:=\mathcal C_t^{-\{X_t,Y_t\}}
\) denote the admissible information available at time \(t\), excluding \(X_t\) and \(Y_t\).
Suppose that, conditionally on \(\tilde{\mathcal C}_t\), the contemporaneous structural equations are
\[
X_t=\mu_X(\tilde{\mathcal C}_t)+\varepsilon_t^X,
\qquad
Y_t=\mu_Y(\tilde{\mathcal C}_t)+\beta X_t+\varepsilon_t^Y,
\]
where \(\mu_X\) and \(\mu_Y\) are \(\tilde{\mathcal C}_t\)-measurable and finite almost surely, and $\beta>0$. 

Assume that \(\varepsilon_t^X\), \(\varepsilon_t^Y\), and \(\tilde{\mathcal C}_t\) are mutually independent,  \(\varepsilon_t^X, \varepsilon_t^Y\in RV(\alpha)\) are compatible and non-negative and define the $p=0$ coefficient 
\[
\Gamma^{t,\mathrm{inst}}_{\mathbf X\to\mathbf Y\mid\tilde{\mathcal C}}
:=
\lim_{v\to\infty}
\mathbb E\!\left[
F_Y(Y_t)\mid X_t>v,\tilde{\mathcal C}_t
\right],
\]
assuming that the limit exists a.s. Then,
\[
\Gamma^{t,\mathrm{inst}}_{\mathbf X\to\mathbf Y\mid\tilde{\mathcal C}}=1, \quad  \text{ and } \quad \Gamma^{t,\mathrm{inst}}_{\mathbf Y\to\mathbf X\mid\tilde{\mathcal C}}<1,
\quad
\quad\text{a.s.}
\]
\end{proposition}
Proof is located in Supplement~\ref{section_proof_instantaneous}.

\subsubsection{Results from Section 4 adjusted to instantaneous causality}

The estimation procedure from Section~\ref{Section_estimation} is modified by replacing the score \(F(y_{t+1})\) with \(M_t^{[0,p]}:=\max_{0\le j\le p}F(y_{t+j})\), using only indices \(t\le n-p\). Thus,
\[
\widehat\Gamma_{\mathbf X\to\mathbf Y\mid\mathbf Z}([0,p])
:=
\frac{1}{|S^{[0,p]}|}
\sum_{t\in S^{[0,p]}} M_t^{[0,p]} .
\]
The set \(S^{[0,p]}\) is chosen as in Section~\ref{Section_estimation}, with one important change: since \(Y_t\) is part of the response window, it must not be used as a conditioning variable. For example, the analogue of \(S_1\) conditions on \(x_t\) being extreme and on the variables in \(\mathcal C_t^{-\{X_t,Y_t\}}\), such as \(\mathbf z_t\), being non-extreme, but it does not impose a restriction on \(y_t\). The baseline estimator is obtained by using the same conditioning restrictions, but without the event \(x_t\ge \tau_k^X\).
Algorithm~\ref{Algorithm1} and the bootstrap test from Section~\ref{Section_testing} can then be applied without further changes, after replacing \(F(y_{t+1})\) by \(M_t^{[0,p]}\). 

\subsection{Causality in both tails}\label{Appendix_A_both_tails}

We discuss the modification of our framework for causality-in-both-tails. Recall that (both) tails of $\textbf{X}$ cause $\textbf{Y}$ if 
\begin{equation*}\label{CTC_both_tails}
\begin{split} &\Gamma_{|\textbf{X}|\to|\textbf{Y}|\mid\mathcal{C}}:=  \lim_{v\to\infty} \mathbb{E}[ F^{\pm}(|Y_{t+1}|)\,\,\,\bigr\vert\,\,\, |X_t|>v,\mathcal{C}^{-\textbf{X}}_t]\\ 
&\neq\Gamma_{|\textbf{X}|\to|\textbf{Y}|\mid\mathcal{C}}^{\baseline}:=\mathbb{E}[F^{\pm}(|Y_{t+1}|)\,\,\,\bigr\vert\,\,\,  \mathcal{C}^{-\textbf{X}}_t],    
\end{split}
\end{equation*}
where $F^{\pm}$ is a distribution function satisfying $F^{\pm}(x)<1$ for all $x\in\mathbb{R}$.

\subsubsection{Results from Section 2 adjusted to both tails}
For completeness, we reformulate the results from the main part of the manuscript for causality in both tails. Proposition~\ref{Proposition_both_tails} shows the modification of the results presented in Section~\ref{Section_Reformulating}.  We discuss the modification of Theorem~\ref{Theorem1} in Section~\ref{appendix_both_tails_section3}. Finally, we modify the inference procedure to be able to handle both-tails in Section~\ref{appendix_both_tails_section4}.

\begin{proposition}\label{Proposition_both_tails}The following statements are true:
\begin{itemize}
    \item If 
\begin{equation}\label{eq3522}
\Gamma_{|\textbf{X}|\to|\textbf{Y}|\mid\mathcal{C}} = 1,
\end{equation}
then $\textbf{X}\toverset{tail^{\pm}}{\longrightarrow}\textbf{Y}$. Under Assumption \ref{AssumptionA2},  $\textbf{X}\toverset{tail^{\pm}}{\longrightarrow}\textbf{Y}$ implies (\ref{eq3522}).
    \item  $\textbf{X}\toverset{tail^{\pm}}{\longrightarrow}\textbf{Y}$ implies $\textbf{X}\toverset{G}{\longrightarrow}\textbf{Y}$. Under Assumption \ref{AssumptionA2},  $\textbf{X}\toverset{G}{\longrightarrow}\textbf{Y}$ implies $\textbf{X}\toverset{tail^{\pm}}{\longrightarrow}\textbf{Y}$. 
    \item  Under Assumption \ref{AssumptionA2}, the definition of  $\textbf{X}\toverset{tail^{\pm}}{\longrightarrow}\textbf{Y}$ is invariant with the choice of $F^{\pm}$. 
\end{itemize}
\end{proposition}
The proof is presented in Supplement \ref{proof_of_proposition_both_tails}.

\subsubsection{Results from Section 3 adjusted to both tails}
\label{appendix_both_tails_section3}
Theorem~\ref{Theorem1} can be restated to account for causality in both tails. We require the following two-sided analogue of (B3):
\[
\tag{B3$^\pm$}
\lim_{v\to\infty}
P\bigl(|Z_t|\leq a|X_t| \mid |X_t|>v,Y_{past(t)}\bigr)=1
\qquad\text{for every }a>0.
\]

\begin{lemma}\label{Theorem1_both_tails_lemma}
Consider time series following the SRE model defined in~(\ref{sre2}) satisfying
(B1), (B2), and (B5). 

\begin{itemize}
    \item Under $ (B3^{\pm})$ holds \(
\Gamma_{|\mathbf X|\to|\mathbf Y|\mid\mathcal C}=1
\implies
\Gamma_{|\mathbf X|\to|\mathbf Y|\mid\emptyset}=1.
\)

 \item Assume a two-sided analogue of the Grey-type tail condition: There exist
\(\alpha_x>0\) and \(\nu>0\) such that \(|B_t^x|\in RV(\alpha_x)\) and
\(
E|A^x_{j,t}|^{\alpha_x+\nu}<\infty,
 j=1,2,3,
\)
and assume the conditional tail-dominance condition
\(
\limsup_{u\to\infty}
\frac{P(|X_t|>u \mid Y_{past(t)})}{P(|B_t^x|>u)}
<\infty.
\) Then \(
\Gamma_{|\mathbf X|\to|\mathbf Y|\mid\emptyset}=1
\implies
\Gamma_{|\mathbf X|\to|\mathbf Y|\mid\mathcal C}=1.
\)
\end{itemize}
\end{lemma}
The proof is presented in Supplement \ref{proof_of_Theorem1_both_tails_lemma}. 

\subsubsection{Results from Section 4 adjusted to both tails}
\label{appendix_both_tails_section4}

In the following, we present an estimator of the coefficient \(\Gamma_{|\textbf{X}|\to|\textbf{Y}|\mid \textbf{Z}}\) based on a random sample. Specifically, one can directly work with the estimator (\ref{Gamma_hat_original}), substituting \(|X_t|\) and \(|Y_t|\) for \(X_t\) and \(Y_t\), respectively. However in various real-world scenarios, asymmetric tail importance holds significant relevance. This is particularly important in contexts involving investment behavior or policy decision-making, where the concept of 'loss aversion' plays an important role. Loss aversion denotes a cognitive bias wherein individuals assign higher importance to evading losses as opposed to attaining equivalent gains. In simpler terms, the emotional impact of losing $100$ dollars is psychologically more pronounced than the satisfaction derived from gaining the same amount. Therefore, we generalize the coefficient by employing asymmetric thresholds. This results in capturing the asymmetric emphasis on positive and negative values.

We denote by  $\textbf{Z} = (\textbf{Z}_t, t\in\mathbb{Z})$ a vector of other relevant time series (possible confounders).  We assume that we observe $(x_1, y_1, \textbf{z}_1)^\top, \dots, (x_T, y_T, \textbf{z}_T)^\top$, with the maximum observed time $T\in\mathbb{N}$. We propose the following estimator:
 \begin{equation*}
\hat{\Gamma}_{|\textbf{X}|\to|\textbf{Y}|\mid \textbf{Z} }:=  \frac{1}{|S^{\pm}|}\sum_{t\in S^{\pm}}  
F^{\pm}(|Y_{t+1}|),
  \end{equation*}
 where
   \begin{equation*}
      S^{\pm} := \{t\in\{1, \dots, T\}: X_i\not\in [\tau_{X}^{-}, \tau_{X}^{+}], 
\begin{pmatrix}
    Y_i \\
    \textbf{Z}_i 
\end{pmatrix}  \in \begin{pmatrix}
   [\tau_{Y}^{-}, \tau_{Y}^{+}] \\
   [\tau_{Z}^{-}, \tau_{Z}^{+}]
\end{pmatrix}
\},
  \end{equation*}
 where $\boldsymbol{\tau} = ( \begin{pmatrix}
   \tau_{X}^{-} \\
   \tau_{X}^{+}
\end{pmatrix},
\begin{pmatrix}
   \tau_{Y}^{-} \\
   \tau_{Y}^{+}
\end{pmatrix},
\begin{pmatrix}
   \tau_{Z}^{-} \\
   \tau_{Z}^{+}
\end{pmatrix})$ are some hyperparameters.

\begin{example*}[Symmetric thresholds]
    For the symmetric choice $ \tau_{X}^{-} =- \tau_{X}^{+}, \tau_{Y}^{-} =- \tau_{Y}^{+}, \tau_{Z}^{-} =- \tau_{Z}^{+}$, we obtain 
   \begin{equation*}
      S^{\pm} = \{t\in\{1, \dots, T\}: |X_t|> \tau_{X}^{+},  |Y_t|\leq  \tau_{Y}^{+},|Z_t|\leq  \tau_{Z}^{+} \}. 
  \end{equation*}
In the case of symmetric thresholds, this estimator matches the one from Definition~\ref{Gamma_hat_equation}, where \(|X_t|\) and \(|Y_t|\) are used instead of \(X_t\) and \(Y_t\), respectively.
\end{example*}

\subsection{Time series with finite upper endpoints}
\label{Appendix_A_support}

The assumption that \(X_t\) and \(Y_t\) are supported on a neighborhood of
\(+\infty\) is mainly a normalization. If \(X\) has a finite upper endpoint,
upper-tail events should instead be interpreted as events in which \(X_t\)
approaches this endpoint. Let
\(
    r_X:=\operatorname{ess\,sup} X_0,
    r_Y:=\operatorname{ess\,sup} Y_0 .
\)
Throughout this subsection we assume, for simplicity, that
\(X_t<r_X\) and \(Y_t<r_Y\) almost surely. Endpoint atoms can be handled
separately. 

Let \(F_Y:\mathbb R\to[0,1]\) be a distribution function satisfying
\[
    F_Y(y)<1 \quad\text{for all } y<r_Y,
    \qquad
    \lim_{y\uparrow r_Y}F_Y(y)=1.
\]
We define the bounded-version of the causal tail coefficient by
\[
\Gamma^{\rm bd}_{\mathbf X\to\mathbf Y\mid\mathcal C}
:=
\lim_{u\uparrow r_X}
\mathbb E\!\left[
    F_Y(Y_{t+1})
    \,\middle|\,
    X_t>u,\mathcal C_t^{-\mathbf X}
\right],
\]
provided that the limit exists almost surely, and define the corresponding baseline
coefficient by
\[
\Gamma^{{\rm bd},baseline}_{\mathbf X\to\mathbf Y\mid\mathcal C}
:=
\mathbb E\!\left[
    F_Y(Y_{t+1})
    \,\middle|\,
    \mathcal C_t^{-\mathbf X}
\right].
\]
We say that \(\mathbf X\) causes \(\mathbf Y\) in the upper tail, in the
bounded-endpoint sense, if
\[
    \Gamma^{\rm bd}_{\mathbf X\to\mathbf Y\mid\mathcal C}
    \not\equiv
    \Gamma^{{\rm bd},baseline}_{\mathbf X\to\mathbf Y\mid\mathcal C}
    \quad\text{a.s.}
\]
We say that an upper extreme in \(\mathbf X\) causes an upper extreme in
\(\mathbf Y\), in the bounded-endpoint sense, if
\[
    \Gamma^{\rm bd}_{\mathbf X\to\mathbf Y\mid\mathcal C}=1
    \quad\text{a.s.}
\]

This formulation is equivalent to applying an increasing endpoint transformation.
Indeed, let \(T_X\) and \(T_Y\) be increasing one-to-one Borel maps such that
\[
    T_X(x)\to\infty \quad\text{as } x\uparrow r_X,
    \qquad
    T_Y(y)\to\infty \quad\text{as } y\uparrow r_Y.
\]
Set \(\widetilde X_t=T_X(X_t)\), \(\widetilde Y_t=T_Y(Y_t)\), and
\[
    \widetilde F_Y(s):=F_Y(T_Y^{-1}(s)).
\]
Then \(\widetilde F_Y(s)<1\) for all finite \(s\), and
\[
\Gamma^{\rm bd}_{\mathbf X\to\mathbf Y\mid\mathcal C}
=
\lim_{w\to\infty}
\mathbb E\!\left[
    \widetilde F_Y(\widetilde Y_{t+1})
    \,\middle|\,
    \widetilde X_t>w,\mathcal C_t^{-\mathbf X}
\right],
\]
after the reparametrization \(w=T_X(u)\). Since \(T_X\) and \(T_Y\) are
one-to-one, they preserve the relevant sigma-fields and conditional-independence
relations.

\begin{proposition}[Bounded analogue of Propositions~\ref{Proposition_CTC_iff_1} and
\ref{Proposition_Granger_equivalent_tail} ]
Consider the setup of Definition~\ref{Definition_str} and the bounded-endpoint
coefficient defined above. Then, with the notions of tail and extreme causality
understood in the bounded-endpoint sense,
\[
    X \stackrel{\mathrm{ext}}{\longrightarrow} Y
    \quad\Longrightarrow\quad
    X \stackrel{\mathrm{tail}}{\longrightarrow} Y
    \quad\Longrightarrow\quad
    X \stackrel{G}{\longrightarrow} Y .
\]
If, in addition, we assume that for every admissible \(y,z,e\), either
\[
(A1_{\rm bd})\qquad\qquad    \lim_{x\uparrow r_X} h_Y(x,y,z,e)=r_Y,
\]
or \(h_Y(\cdot,y,z,e)\) is constant in \(x\) holds, then all three notions are equivalent.
\end{proposition}

\begin{proof}
Let \(\widetilde X_t=T_X(X_t)\) and \(\widetilde Y_t=T_Y(Y_t)\), where
\(T_X,T_Y\) are increasing one-to-one endpoint transformations as above.
Because these transformations are one-to-one, they preserve the relevant
sigma-fields; hence Granger causality is unchanged.

The transformed structural equation for \(Y\) is
\[
    \widetilde Y_{t+1}
    =
    \widetilde h_Y(\widetilde X_t,\widetilde Y_t,Z_t,\varepsilon^Y_{t+1}),
\]
where
\[
    \widetilde h_Y(\widetilde x,\widetilde y,z,e)
    :=
    T_Y\!\left(
        h_Y(T_X^{-1}(\widetilde x),T_Y^{-1}(\widetilde y),z,e)
    \right).
\]
Under \((A1_{\rm bd})\), this transformed structural equation satisfies
Assumption~A1. Moreover, the bounded-endpoint coefficient above is exactly the
coefficient from Definition~\ref{CTC_definition} applied to
\((\widetilde X,\widetilde Y)\) with score
\(\widetilde F_Y=F_Y\circ T_Y^{-1}\). The conclusion follows by applying
Propositions~\ref{Proposition_CTC_iff_1} and
\ref{Proposition_Granger_equivalent_tail} to the transformed process and then
transforming back.
\end{proof}

\newpage

\section{Hardness of testing for causality in time series}\label{s:HardnessTesting}
In this section, we argue that testing for causality in extremes is fundamentally more difficult than testing for causality in tail, as discussed in Section~\ref{Section_testing}. In particular, we show that testing for causality in extremes is not possible without imposing stronger assumptions; specifically, a more restrictive statistical model than the one in Definition~\ref{Definition_str}.

First, we show that it is impossible to find a test for the null hypothesis
$H_0: \textbf{X}\toverset{G}{\not\to}\textbf{Y}$, 
with a valid level, without restricting the structure of the time series. Then, we show that the same holds for $H_0: \textbf{X}\toverset{ext}{\not\to}\textbf{Y}$.  This is a time series counterpart of an i.i.d. concept presented in \cite{HARDNESS_OF_CONDITIONAL_TESTING}. Although an i.i.d. sequence is a special case of
a time series, the results from \cite{HARDNESS_OF_CONDITIONAL_TESTING} are not directly applicable since an i.i.d. sequence \((X_t,Y_t,Z_t)_{t\in\mathbb Z}\) does not by itself create a useful Granger-causality alternative ($Y_{t+1}$ is automatically independent on the past). Instead, we embed an i.i.d. conditional-independence problem into the one-step transition of a stationary time series. \cite{wiecksosa2026conditionalindependencetestingsingle} consider more detailed discussion about conditional independence testing in time series. 

Denote  \(\Xi_0\) the class of stationary time series generated according to Definition~\ref{Definition_str}, with all variables absolutely continuous with respect to Lebesgue measure, and define  
\[
    \mathcal P_0^G
    :=
    \left\{
        P\in\Xi_0:
        \mathbf X\toverset{G}{\not\to}\mathbf Y\mid\mathbf Z
    \right\}. 
\]
Consider the following subclass \(\mathcal A_0\) of \(\Xi_0\) containing  simple one-lag non-auto-correlated processes. Its elements are generated, for some mutually independent i.i.d. continuous innovation sequences \((\varepsilon^X_t)_{t\in\mathbb Z}\), \((\varepsilon^Y_t)_{t\in\mathbb Z}\), \((\varepsilon^Z_t)_{t\in\mathbb Z}\) and some measurable function \(f:\mathbb R^3\to\mathbb R\), as
\[
Z_t=\varepsilon^Z_t,\qquad    X_t=\varepsilon^X_t,\qquad  Y_t=f(X_{t-1}, Z_{t-1}, \varepsilon^Y_t),
\]
and satisfy \(\mathbf X\toverset{G}{\to}\mathbf Y\mid\mathbf Z\). 

Let \(
    \psi_n:\mathbb R^{3n}\to\{0,1\}
\) be a test, where \(\psi_n=1\) denotes rejection of the null hypothesis of
Granger non-causality based on a sample $((x_1, y_1, z_1), \dots, (x_n, y_n, z_n))$. Throughout, \(P(\psi_n=1)\) denotes the rejection probability under the \(n\)-sample marginal of \(P\).

\begin{proposition}[No-free-lunch reduction]
\label{no_free_lunch}
Let \(n\in\mathbb N\), let \(\alpha\in(0,1)\), and let
\(\psi_n:\mathbb R^{3n}\to\{0,1\}\) be any test of
\[
    H_0^G:\mathbf X\toverset{G}{\not\to}\mathbf Y\mid\mathbf Z .
\]
If \(\psi_n\) has level \(\alpha\) uniformly over the unrestricted Granger non-causality null, then it cannot have nontrivial finite-sample power even against the embedded alternatives \(\mathcal A_0\); that is, 
\[
    \sup_{Q\in\mathcal P_0^G} Q(\psi_n=1)\le \alpha  \quad \implies \quad       \sup_{P\in\mathcal A_0}   P(\psi_n=1)\le \alpha. 
\]
\end{proposition}

The proof is presented in Supplement~\ref{proof_of_no_free_lunch}. We now state the analogous corollary for causality in extremes. Let
\[
    \mathcal P_0^{ext}
    :=
    \left\{
        P\in\Xi_0:
        \Gamma_{\mathbf X\to\mathbf Y\mid\mathbf Z}\overset{a.s.}{<}1
    \right\}.
\]
We implicitly assume that \(\Gamma_{\mathbf X\to\mathbf Y\mid\mathbf Z}\) exists and all variables are supported on some neighborhood of infinity. 

\begin{corollary}
\label{corollary_no_free_lunch}
Let \(\psi_n:\mathbb R^{3n}\to\{0,1\}\) be any test of
\[
    H_0^{ext}:
    \Gamma_{\mathbf X\to\mathbf Y\mid\mathbf Z}<1, \qquad \text{ against} \qquad 
    H_1^{ext}:
    \Gamma_{\mathbf X\to\mathbf Y\mid\mathbf Z}=1.
\]
For every embedded alternative \(P\in\mathcal A_0\) holds 
\[
\sup_{Q\in\mathcal P_0^{ext}}Q(\psi_n=1)\le \alpha \qquad 
\text{ then } \qquad 
    P(\psi_n=1)\le \alpha. 
\]
\end{corollary}

The conclusion should not be interpreted as implying that the tests developed in the
structured parts of the paper are impossible. Rather, it shows that some
additional model assumptions are unavoidable. The structural
assumptions used in the paper are precisely the type of additional structure under which the
proposed estimators and testing procedures become meaningful.

\newpage
\section{Additional details and numerical results}
\label{appendix_all_simulations}

\subsection{Block bootstrap test for causality in tail}
\label{SupplSect_blockBootstrapAlgo}

Algorithm~\ref{Algorithm_testing_appendix} details the block-bootstrap procedure used to obtain the confidence intervals for the tail causality test describes in Section~\ref{Section_testing}.

\begin{algorithm}[tbh]
    \caption{Block Bootstrap test for causality in tail}\label{Algorithm_testing_appendix}
    \KwIn{Time series data $(x_1, y_1, \textbf{z}_1)^\top, \dots, (x_n, y_n, \textbf{z}_n)^\top$, block size $b$ (default $b=\sqrt{n})$, number of bootstrap samples $B$, significance level  $\alpha\in(0,1)$.}
    \KwOut{Test of the hypothesis $H_0: \Gamma_{\textbf{X}\to\textbf{Y} \mid \textbf{Z}} - \Gamma^{baseline}_{\textbf{X}\to\textbf{Y} \mid \textbf{Z}} = 0$.}
    \vspace{4pt}
Denote $\textbf{w}_i:=(x_i, y_i, \textbf{z}_i)^\top$ for $i=1, \dots, n$\;

    \For{$k = 1,\ldots,B$}{
        $\tilde{\textbf{w}}^{(k)} \gets \emptyset$\;
        \For{$i=1,\ldots,\left\lceil{n/b}\right\rceil$}{
            Randomly select a starting point $s$ from $\{1, 2, \dots, n-b+1\}$\;
            Extract block $B_s = \{\textbf{w}_s, \textbf{w}_{s+1}, \dots, \textbf{w}_{s+b-1}\}$\;
            Append block $B_s$ to $\tilde{\textbf{w}}^{(k)}$\;
        }
        Truncate $\tilde{\textbf{w}}^{(k)}$ to length $n$ if necessary\;

        Compute $\hat{\Delta}^{(k)}:=\hat{\Gamma}^{(k)}_{\textbf{X}\to\textbf{Y} \mid \textbf{Z}} - \hat{\Gamma}^{baseline {(k)}}_{\textbf{X}\to\textbf{Y} \mid \textbf{Z}}$ on the bootstrapped sample $\tilde{\textbf{w}}^{(k)}$\;
    }
    \eIf{\text{the $\alpha$-quantile of $\hat{\Delta}^{(1)},\ldots,\hat{\Delta}^{(B)}$ is strictly positive}}
    {
    \Return ``$H_0$ is rejected''\;
    }{
    \Return ``$H_0$ is not rejected''\;
    }
\end{algorithm}

\subsection{Hyper-parameters analysis}
\label{section_simulations_hyperparameters}

In this section, we outline our simulation study aimed at determining the optimal hyper-parameters discussed in Section~\ref{Section_choice_of_hyperparameters}. We employ two of the most prominent time series models: VAR and GARCH, to generate data, and always consider both $\textbf{X}\toverset{ext}{\to}\textbf{Y}$ and $\textbf{Y}\toverset{ext}{\not\to}\textbf{X}$. 
To assess the comparative efficacy of different hyper-parameters, we utilize Algorithm~\ref{Algorithm1} on the aforementioned models, computing their respective performance. Here, performance is measured by the percentage of correct outputs, when both $\textbf{X}\toverset{ext}{\to}\textbf{Y}$ and $\textbf{Y}\toverset{ext}{\not\to}\textbf{X}$ are inferred correctly, over $100$ repetitions for each of the four models. 
We, here, only focus on the classification algorithm's performance, since the results obtained from testing $H_0^{\rm tail}$ using a p-value, as discussed in Section~\ref{Section_testing}, yielded similar outcomes.

\begin{model}[VAR]
\label{VAR_time_series_definition}
Let $(\mathbf{X,Y,Z})^\top$ follow data-generating model
\begin{align*}
Z_t&=0.5Z_{t-1}+\varepsilon_t^Z,\\
X_t&=0.5X_{t-1}+ \alpha_Z Z_{t-1}+\varepsilon_t^X,\\
Y_t&=\alpha_YY_{t-1}+ \alpha_ZZ_{t-1}+ \alpha_X X_{t-1}+\varepsilon_t^Y,
\end{align*}
with independent noise variables $\varepsilon_t^X,  \varepsilon_t^Y, \varepsilon_t^Z$ and some hyper-parameters $\boldsymbol{\alpha}:=(\alpha_X, \alpha_Y, \alpha_Z)\in\mathbb{R}^3$. We refer to `heavy-tailed model \ref{VAR_time_series_definition}' when we generate  $\varepsilon_t^X,  \varepsilon_t^Y, \varepsilon_t^Z\sim {\rm Pareto}(1)$. We refer to `non-heavy-tailed model \ref{VAR_time_series_definition}' when we generate  $\varepsilon_t^X,  \varepsilon_t^Y, \varepsilon_t^Z\sim \mathcal{N}(0,1)$. 
\end{model}

\begin{model}[GARCH]
\label{GARCH_time_series_definition}
Let $(\mathbf{X,Y,Z})^\top$ follow data-generating model
\begin{align*}
Z_t&= \bigg(\frac{1}{10}+\frac{1}{10}Z_{t-1}^2 \bigg)^{1/2} \varepsilon^Z_t \\
X_t&= \bigg(\frac{1}{10}+\frac{1}{10}X_{t-1}^2 +  \alpha_Z Z_{t-1}^2  \bigg)^{1/2}\varepsilon^X_t \\
Y_t&= \bigg(\frac{1}{10} + \frac{\alpha_Y}{5}Y_{t-1}^2 +  \alpha_Z Z_{t-1}^2 +  \alpha_{X}X_{t-1}^2   \bigg)^{1/2}\varepsilon^Y_t,
\end{align*}
with independent noise variables $\varepsilon_t^X,  \varepsilon_t^Y, \varepsilon_t^Z$ and some hyper-parameters  $\boldsymbol{\alpha}:=(\alpha_X, \alpha_Y, \alpha_Z)\in\mathbb{R}^3$. We refer to `heavy-tailed model \ref{GARCH_time_series_definition}' when we generate  $\varepsilon_t^X,  \varepsilon_t^Y, \varepsilon_t^Z\sim {\rm Cauchy}(0,1)$. We refer to `non-heavy-tailed model \ref{GARCH_time_series_definition}' when we generate  $\varepsilon_t^X,  \varepsilon_t^Y, \varepsilon_t^Z\sim \mathcal{N}(0,1)$. 
\end{model}
Although not explicitly addressed, similar outcomes were observed with non-unit causal lags and when $\varepsilon_t^X, \varepsilon_t^Y, \varepsilon_t^Z$ exhibit distinct tail behaviors. In this scenario, $\alpha_X$ represents the causal effect of $\textbf{X}$ on $\textbf{Y}$, $\alpha_Y$ describes the auto-correlation of $\textbf{Y}$ and $\alpha_Z$ the confounding effect of $\textbf{Z}$ on $\textbf{X}$ and $ \textbf{Y}$.

\subsubsection[Choice of F]{Choice of $F$}
\label{Section_choice_F_{}}
In this section, we discuss the selection of the distribution function $F$ used in the estimators. We not only compare the choices $F = \hat{F}_Y(t)$ and $F = \hat{F}^{truc}_Y(t)$ as delineated in Section~\ref{Section_choice_of_hyperparameters}, but also more broadly examine
    \[  \hat{F}_Y^{truc(q_F)}(t):= \begin{cases} \hat{F}_Y(t) & \text{if } t \geq q_F \text{  quantile of }Y \\ 0 & \text{if } t < q_F \text{  quantile of }Y \end{cases} \]
across a range of $q_F\in [0,1]$. Note that $q_F=0$ corresponds to the choice $F = \hat{F}_Y(t)$, while $q_F=0.5$ corresponds to $F = \hat{F}^{truc}_Y(t)$.

We generate diverse datasets with a sample size of $n=500$ according to both heavy-tailed and non-heavy-tailed Models \ref{VAR_time_series_definition} and \ref{GARCH_time_series_definition}, with parameters $\alpha_Y=\alpha_Z=0.5$ and $\alpha_Y=\alpha_Z=0.1$, respectively. Employing Algorithm~\ref{Algorithm1} and selecting $F = \hat{F}_Y^{truc(q_F)}(t)$, we repeat the process $100$ times to assess the algorithm's performance as a function of $\alpha_X$. 

\begin{figure}[!t]
\centering
\includegraphics[width=0.77\textwidth]{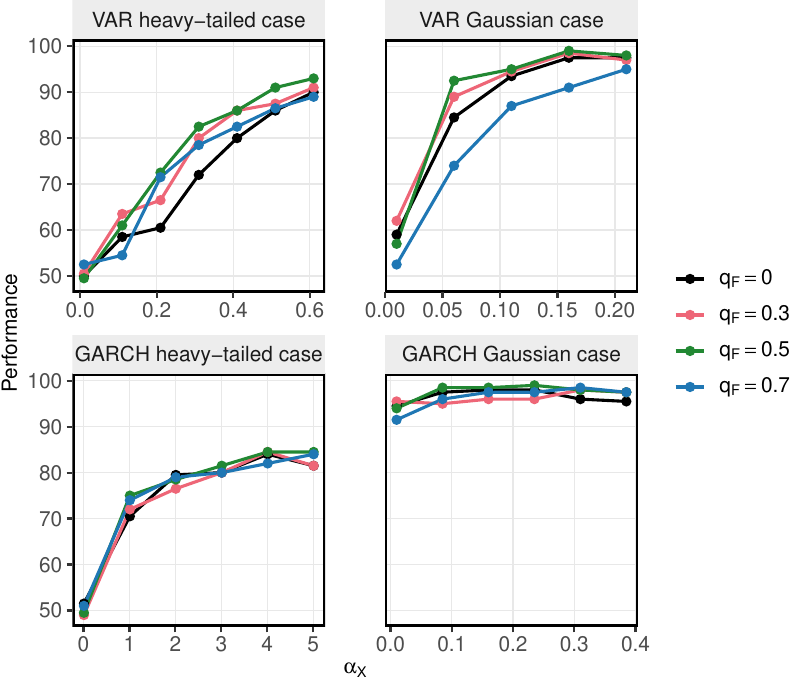}
\caption{Performance of Algorithm~\ref{Algorithm1} for a range of causal strengths $\alpha_X$, for different choices of $q_F$ in $\hat{F}_{}^{truc(q_F)}$, and for all four considered data models (Models~\ref{VAR_time_series_definition}  and \ref{GARCH_time_series_definition} with heavy- and non-heavy-tailed noise distributions).} %
\label{q_F}
\end{figure}
The results are depicted in Figure~\ref{q_F}. They
indicate that choosing $q_F  = 0.5$ results in optimal performance. Specifically, among all simulations conducted, the choice $q_F=0$ yielded correct outputs in $77\%$ of cases, $q_F=0.3$ yielded correct outputs in $80\%$ of cases, the $q_F=0.5$ choice led to correct outputs in $81\%$ of cases, and the $q_F=0.7$ choice resulted in correct outputs in $73\%$ of cases. Consequently, we opt for $F = \hat{F}^{truc}_Y(t)$. Nevertheless, we note that the differences between the different choices were small.

\subsubsection[Choice of tau-X (k)]{Choice of $\tau_X$ ($k_n$)}
\label{Section_choice_k_n}
A natural construction for $k_n$ is $k_n = \left \lfloor{n^\nu}\right \rfloor$ for some $\nu\in(0,1)$, since $k_n$ must satisfy (\ref{k_deleno_n}). \cite{Gnecco}, who considered i.i.d. random variables following a SCM, found that, in certain simulation setups, the value $\nu = 0.4$ is optimal. Conversely, \cite{bodik} used $\nu=\frac{1}{2}$ and argued that a lower $\nu$ may result in choosing all extreme values in the same cluster in a time series setting.

For the heavy-tailed and non-heavy-tailed Model \ref{VAR_time_series_definition} we use $\boldsymbol{\alpha}=(0.1, \frac{1}{2}, \frac{1}{2})$ and $\boldsymbol{\alpha}=(\frac{1}{2}, \frac{1}{2}, \frac{1}{2})$, respectively. For the heavy-tailed and non-heavy-tailed Model \ref{GARCH_time_series_definition} we use $\boldsymbol{\alpha}=(\frac{1}{2}, \frac{1}{2}, \frac{1}{2})$ and  $\boldsymbol{\alpha}=(1, \frac{1}{2}, \frac{1}{2})$, respectively.

For each of the four models, we generate time series with a sample size $n\in\{200,400,600\}$. %
Then, we apply Algorithm~\ref{Algorithm1} with $k_n = \left \lfloor{n^\nu}\right \rfloor$ with and without adjusting for $\textbf{Z}$, to assess robustness against hidden confounders. %
Figure~\ref{k_n} shows the algorithm's performance as a function of $\nu$. As all four considered models exhibited similar performance trends with respect to $\nu$, we present their aggregated performance. We observe that the optimal value seems to be around $\nu\approx \frac{1}{3}$ when the confounder is accounted for, and around $\nu\approx \frac{1}{2}$ when there is hidden confounding.

\begin{figure}[!t]
\centering
\includegraphics[width=0.7\textwidth]{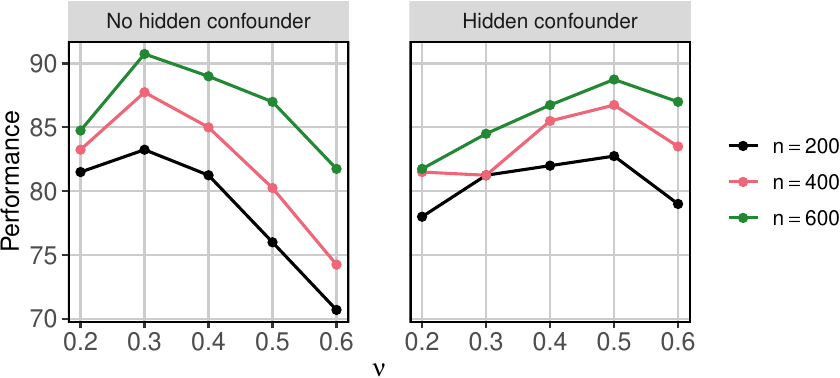}
\caption{Aggregated performance of Algorithm~\ref{Algorithm1} with $k_n = \left \lfloor{n^\nu}\right \rfloor $ as a function of $\nu$ over all data models, when the confounder is accounted for in the estimation (left) or ignored to simulate hidden confounding (right).} %
\label{k_n}
\end{figure}

We conclude that in scenarios where several potential confounders are modeled and strong hidden confounding is not expected, choosing $\nu\approx \frac{1}{3}$ seems preferable. Conversely, if a strong unmeasured confounder is anticipated, opting for a larger value around $\nu\approx \frac{1}{2}$ might be a better choice. Furthermore, from additional exploratory experiments, it seems that smaller values of $\nu$ might be advantageous when dealing with large sample sizes ($n\geq 10000$).

\subsubsection[Choice of tau-Y]{Choice of $\tau_Y$}
\label{Section_choice_tau_Y}

Recall that we define $\tau_Y$ as a $q_Y$-quantile of $Y$. 
For the heavy-tailed and non-heavy-tailed Model \ref{VAR_time_series_definition} we use $(\alpha_X, \alpha_Z)=(0.1, \frac{1}{2})$ and $(\alpha_X, \alpha_Z)=(\frac{1}{2}, \frac{1}{2})$, respectively. For the heavy-tailed and non-heavy-tailed Model \ref{GARCH_time_series_definition} we use $(\alpha_X, \alpha_Z)=(\frac{1}{2}, \frac{1}{2})$ and $(\alpha_X, \alpha_Z)=(1, \frac{1}{2})$, respectively. 

\begin{figure}[!t]
\centering
\includegraphics[width=0.7\textwidth]{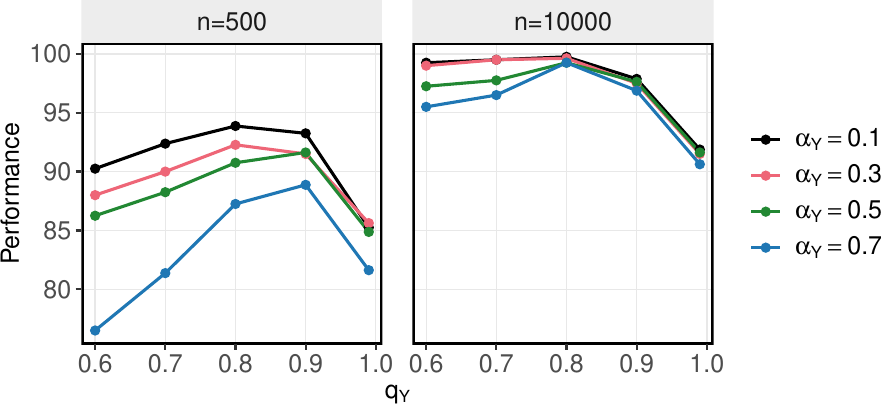}
\caption{Aggregated performance of Algorithm~\ref{Algorithm1} over all data models for different threshold choices $\tau_Y$ in set $S_1$, defined as $q_Y$-quantiles of $\textbf{Y}$, for various auto-correlation values~$\alpha_Y$ and sample sizes~$n$.}
\label{q_Y}
\end{figure}
Figure~\ref{q_Y} illustrates the obtained performance as a function of $q_Y$ for $n\in\{500, 10000\}$. 
The performances across all four considered models again displayed similar trends, hence we show their aggregated performance. 
We observe a seemingly optimal value around $q_Y\approx 0.8$, in most cases, with larger $q_Y$ values preferred under a significant autocorrelation structure of $Y$ and smaller sample size. 
Using different tail indexes and not accounting for the confounder in the model resulted in similar conclusions. %

\subsubsection[Choice of tau-Z]{Choice of $\tau_Z$}
\label{Section_choice_tau_Z}

Recall that for a $d$-dimensional confounder $\textbf{Z}\in\mathbb{R}^d$, we define $\tau_Z^i$ as the $q_Z^i\in (0,1)$ quantile of $Z_i, i=1, \dots, d$. We consider the case $d=1$, for simplicity.

For the heavy-tailed and non-heavy-tailed Model \ref{VAR_time_series_definition} we use $(\alpha_X, \alpha_Y)=(0.1, \frac{1}{2})$ and $(\alpha_X, \alpha_Y)=(\frac{1}{2}, \frac{1}{2})$, respectively. For the heavy-tailed and non-heavy-tailed Model \ref{GARCH_time_series_definition} we use $(\alpha_X, \alpha_Y)=(\frac{1}{2}, \frac{1}{2})$ and $(\alpha_X, \alpha_Y)=(1, \frac{1}{2})$, respectively. The sample size is $n=1000$. %
Figure~\ref{q_Z} illustrates the algorithm's performance as a function of $q_Z$, for various underlying confounding strengths.

\begin{figure}[!t]
\centering
\includegraphics[width=0.46\textwidth]{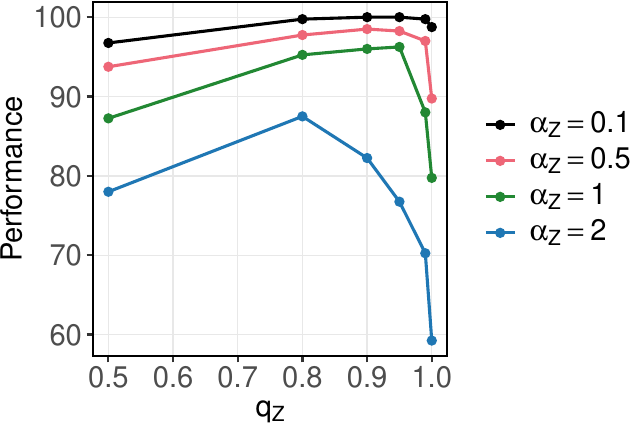}
\caption{Aggregated performance of Algorithm~\ref{Algorithm1} over all data models for different threshold choices $\tau_Z$ in set $S_1$, defined as $q_Z$-quantiles of $\textbf{Z}$ and for various confounding strengths $\alpha_Z$.}
\label{q_Z}
\end{figure}

Again, all four considered models exhibited similar performance trends, thus we present their aggregated performance. The results were consistent across different values of $\alpha_X, \alpha_Y$, different lags, and different tail indexes of the time series. We observe that the optimal value of $q_Z$ strongly depends on the strength of the confounding effect; as $\alpha_Z$ increases, the optimal $q_Z$ decreases. Particularly, under a very small confounding effect $\alpha_Z=0.1$, values around $q_Z\approx 0.99$ seem optimal, whereas under $\alpha_Z=2$ (where the effect of $Z$ is several times stronger than the effect of $X$), values around $q_Z\approx 0.8$ seem optimal.

We choose $q_Z = 0.9$ as a default, which seems to be a reasonable trade-off when the true strength is unknown. 
Furthermore, from additional informal experiments, larger values of $q_Z^i$ seem advantageous when dealing with dimensions $d>1$. Thus, we select $q_Z^i = 1- \frac{0.2}{d}$ for $i=1, \dots, d$ when $d>1$. However, it is worth noting that lower values of $q_Z^i$ should be chosen when a strong confounder is expected, in particular if its effect is stronger than that of $\textbf{X}$.

\clearpage

\subsection{Comparative performance study: additional results}
\label{appendix_simulations_Pa2}
Figure~\ref{Sim_6_results_Pa2} shows the results of the comparative simulation study on the VAR data process, with $\text{Pareto}(2)$ noise distribution.
This noise distribution choice is in between the `heavy-tailed' and the Gaussian cases discussed in Section~\ref{Section_Sim_perform} and shown in Figure~\ref{Sim_6_results}, in terms of heavy-tailness.

\begin{figure}[!tb]
\centering
\includegraphics[width=0.65\textwidth]{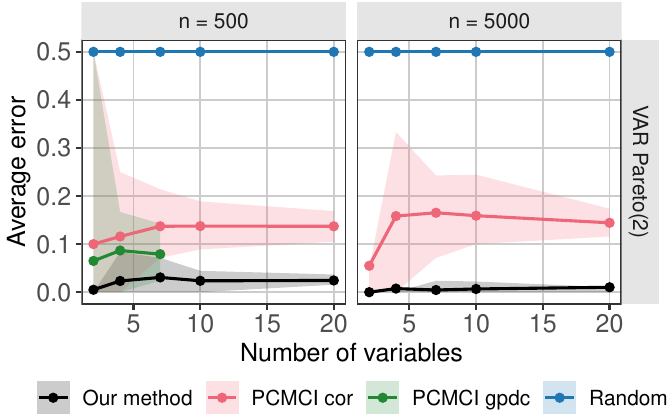}
\caption{Comparison of the average model errors between our approach and the competitors on the VAR data process, with $\text{Pareto}(2)$ noise distribution, for different numbers of variables (x-axis) and sample sizes (columns).
The variability bands show the 10--90\% inter-quantile spread across repetitions.
The ``random'' algorithm generates a random graph with each edge present with probability $\frac{1}{2}$. Due to time complexity constraints, PCMCI with GPDC independence test is estimated only for $n=500, m\leq 7$. 
}
\label{Sim_6_results_Pa2}
\end{figure}

\newpage

\subsection{Additional figures for Section~\ref{section_crypto_application}}
\label{appendix_crypto}

Figure~\ref{Crypto_graphs_full} shows additional results for the application to cryptocurrencies in Section~\ref{section_crypto_application}, when using Algorithm~\ref{Algorithm1} instead of the testing procedure, and when using a lag $p = 30$ instead of $p = 1$.

\begin{figure}[!tb]
\centering
\includegraphics[scale=0.62]{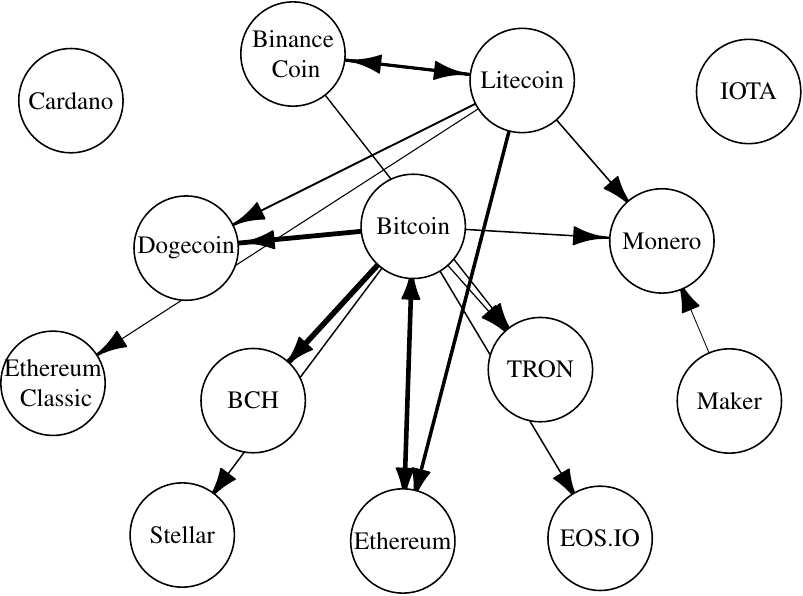}
\includegraphics[scale=0.62]{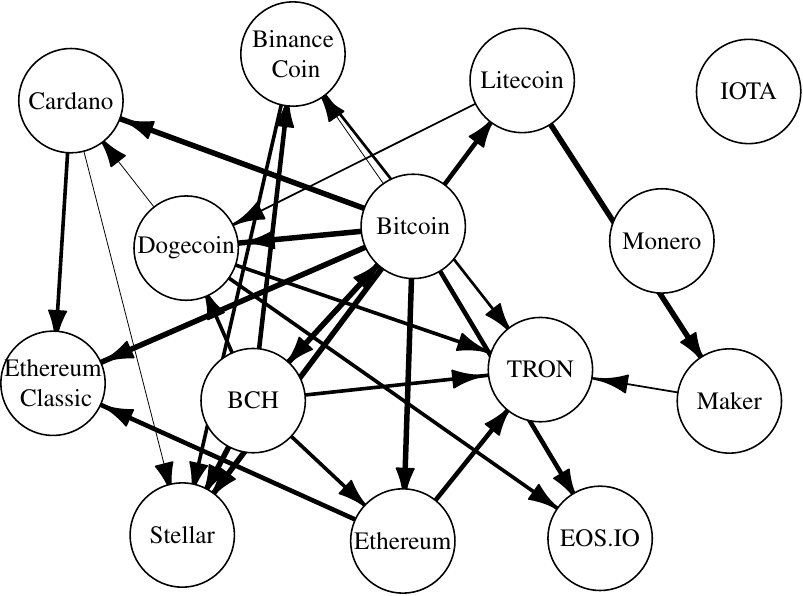}
\caption{Estimated causal graphs indicating Granger causality in extremes among the log returns of cryptocurrencies. {Top:}  Graph generated using Algorithm~\ref{Algorithm2} employing Algorithm~\ref{Algorithm1} with a lag of 1 min. The width of each edge represents the magnitude of $\hat{\Gamma}_{\textbf{X}^i\to\textbf{X}^j \mid \textbf{Z}}$; a value close to $1$ results in a wider edge, while a value close to $(1 + \hat{\Gamma}_{\textbf{X}^i\to\textbf{X}^j \mid \textbf{Z}}^{\baseline})/2$ is depicted with a narrower width. 
{Bottom:} Graph generated similarly to the Figure~\ref{Crypto_graphs}, but using a lag of 30 min.
}
\label{Crypto_graphs_full}
\end{figure}

\clearpage

\newpage
\section{Proofs} \label{Appendix_proofs}

\subsection{Auxiliary results}
\label{Appendix_auxiliary_results}
In this section, we provide auxiliary results, namely Lemmas~\ref{Observation}--\ref{LemmaAboutInfinity2}, that are used in subsequent proofs in the following sections.

\begin{lemma}\label{Observation}
Let $F$ be a distribution function satisfying $F(x)<1$ for all $x\in\mathbb{R}$.  Let $Z_1, Z_2$ be random variables supported on some neighborhood of infinity. Then, the following is equivalent:
\begin{itemize}
    \item    $
        \lim_{v\to\infty} \mathbb{E}[ F(Z_1)\mid Z_2>v] = 1,
   $
    \item for any $c\in\mathbb{R}$, $ \lim_{v\to\infty}P(Z_1> c\mid Z_2>v) = 1.$
\end{itemize}
\end{lemma}
\begin{proof}
``$\implies$''  Fix \(c\in\mathbb R\). Since \(F(c)<1\),
\[
1-\mathbb E[F(Z_1)\mid Z_2>v]
\ge
(1-F(c))P(Z_1\le c\mid Z_2>v).
\]
If the Left Side converges to 0 as $v\to\infty$, then necessary
\(P(Z_1\le c\mid Z_2>v)\to0\). Therefore, 
\(P(Z_1>c\mid Z_2>v)\to1\), which is what we wanted to prove. 

``$\Longleftarrow$'' Let  $\varepsilon>0$ and find $c$  such that $F(c)>1-\varepsilon$.  Then, 
\begin{align*}
   & \lim_{v\to\infty}P(Z_1> c\mid Z_2>v) = 1 \implies 
   \lim_{v\to\infty}P(F(Z_1)> 1-\varepsilon\mid Z_2>v) = 1\\&\implies \lim_{v\to\infty}\mathbb{E}[F(Z_1)\mid Z_2>v] >1-\varepsilon. 
\end{align*}
Sending $\varepsilon\to 0 $ finishes the proof. 
\end{proof}

\begin{lemma}\label{LemmaAboutInfinity}
Consider two independent real random variables $Z_1, Z_2$ and a measurable real function $h:\mathbb{R}^2\to\mathbb{R}$ such that $\lim_{v\to\infty}h(v, z_2) = \infty$ for any $z_2\in\mathbb{R}$. Let $Z_1$ be supported on some neighborhood of infinity. Then, for any $c\in\mathbb{R}$, 
$$
\lim_{v\to\infty} P(h(Z_1, Z_2)> c\mid Z_1>v)=1.
$$
\end{lemma}

\begin{proof}
Let $c\in\mathbb R$. It holds that

\begin{align*}
 P(h(Z_1,Z_2)\leq c\vert Z_1>v)&=\frac{ P(h(Z_1,Z_2)\leq c,Z_1>v)}{ P(Z_1>v)}\\
&=\int_{\mathbb{R}}\frac{ P(h(Z_1,z_2)\leq c,Z_1>v)}{ P(Z_1>v)}\,F_{2}(dz_2),
\end{align*}
where $F_2$ is the distribution of $Z_2$.

For a given $z_2\in\mathbb{R}$, the integrand is $0$ for $v$ large enough. We deduce that the integrand converges point-wise to $0$ as $v\to \infty$. As it is bounded by $1$, the dominated convergence theorem yields that the integral converges to $0$ as $v\to\infty$. Hence $\lim_{v\to\infty} P(h(Z_1,Z_2)\leq c\vert Z_1>v) = 0$, which concludes the proof. 
\end{proof}

\begin{lemma}\label{Observation2}
Let  $F^{\pm}$ be a continuous distribution function with $F^{\pm}(x)<1$ for all $x\in\mathbb{R}$. 
Let $Z_1, Z_2$ be random variables, where $Z_2$ is supported on some neighborhood of infinity. Then, the following are equivalent:
\begin{itemize}
    \item    $
        \lim_{|v|\to\infty} \mathbb{E}[ F^{\pm}(|Z_1|)\mid |Z_2|>v] = 1,
   $
    \item for any $c\in\mathbb{R}$, $ \lim_{v\to\infty}P(|Z_1|> c\mid |Z_2|>v) = 1.$
\end{itemize}
\end{lemma}
\begin{proof}
``$\implies$'' Fix $c\in\mathbb{R}$. Find $\varepsilon>0$ such that $F^{\pm}(c)<1-\varepsilon$ and find $v$ such that $\mathbb{E}[ F^{\pm}(|Z_1|)\mid |Z_2|>v] >1-\varepsilon.$ Then,  
\begin{align*}
    &P(F^{\pm}(|Z_1|)>1-\varepsilon\mid |Z_2|>v)>1-\varepsilon\\
    &P(|Z_1|>c\mid |Z_2|>v)>1-\varepsilon.
\end{align*}
Sending $\varepsilon\to 0 $ gives us the first implication. 

``$\Longleftarrow$'' Let  $\varepsilon>0$ and find $c$  such that  $F^{\pm}(|c|)<1-\varepsilon$.  We have 
\begin{align*}
   & \lim_{v\to\infty}P(|Z_1|> c\mid |Z_2|>v) = 1 \\&
   \lim_{v\to\infty}P(F^{\pm}(|Z_1|)> 1-\varepsilon\mid |Z_2|>v) = 1.
\end{align*}
Hence, we get $\lim_{v\to\infty}\mathbb{E}[F^{\pm}(|Z_1|)\mid |Z_2|>v] >1-\varepsilon$. Sending $\varepsilon\to 0 $ finishes the proof. 
\end{proof}

\begin{lemma}\label{LemmaAboutInfinity2}
Consider two independent real random variables $Z_1, Z_2$ and a measurable real function $h:\mathbb{R}^2\to\mathbb{R}$ such that $\lim_{|v|\to\infty}|h(v, z_2)| = \infty$ for any $z_2\in\mathbb{R}$. Let $Z_1$ be supported on some neighborhood of $\pm\infty$. Then, for any $c\in\mathbb{R}$, 
$$
\lim_{v\to\infty} P(|h(Z_1, Z_2)|> c\mid |Z_1|>v)=1.
$$
\end{lemma}

\begin{proof}
Let $c\in\mathbb R$. It holds that

\begin{align*}
 P(|h(Z_1,Z_2)|\leq c\vert |Z_1|>v)&=\frac{ P(|h(Z_1,Z_2)|\leq c,|Z_1|>v)}{ P(|Z_1|>v)}\\
&=\int_{\mathbb{R}}\frac{ P(|h(Z_1,z_2)|\leq c,|Z_1|>v)}{ P(|Z_1|>v)}\,F_{2}(dz_2),
\end{align*}
where $F_2$ is the distribution of $Z_2$.

For a given $z_2\in\mathbb{R}$, the integrand is $0$ for $v$ large enough. We deduce that the integrand converges point-wise to $0$ as $v\to \infty$. As it is bounded by $1$, the dominated convergence theorem yields that the integral converges to $0$ as $v\to\infty$. Hence $\lim_{v\to\infty} P(|h(Z_1,Z_2)|\leq c\vert |Z_1|>v) = 0$, what we wanted to show. 
\end{proof}

\subsection{Proofs of Propositions \ref{Proposition_CTC_iff_1},  \ref{Proposition_Granger_equivalent_tail} and \ref{Proposition_both_tails}}
\label{Proof_of_propositions_1_and_2}

\begin{customprop}{\ref{Proposition_CTC_iff_1} and  \ref{Proposition_Granger_equivalent_tail}}
$X\toverset{ext}{\to}Y \implies X\toverset{tail}{\to}Y \implies X\toverset{G}{\to}Y$. \textit{Under Assumption} \ref{AssumptionA1},   $X\toverset{G}{\to}Y \implies X\toverset{tail}{\to}Y \implies X\toverset{ext}{\to}Y$. 
\end{customprop}

\begin{proof}
In this proof, we use Lemma \ref{Observation} and Lemma \ref{LemmaAboutInfinity}. 
We prove the following three implications  
\begin{equation*}
    X\toverset{tail}{\to}Y\implies X\toverset{G}{\to}Y \overset{A1}{\implies}X\toverset{ext}{\to}Y\implies X\toverset{tail}{\to}Y.
\end{equation*}

\textbf{FIRST IMPLICATION $X\toverset{tail}{\to}Y$ implies $X\toverset{G}{\to}Y$:} 
We show the negation; that is, we show $X\toverset{G}{\not\to}Y$ implies $X\toverset{tail}{\not\to}Y$.  

If $X\toverset{G}{\not\to}Y$ then $Y_{t+1} \indep X_{\past(t)}\mid \mathcal{C}^{-\textbf{X}}_{t}$, which directly implies
\[
\lim_{v\to\infty} \mathbb{E}[ F(Y_{t+1})\mid X_t>v,\mathcal{C}^{-\textbf{X}}_t]=\mathbb{E}[ F(Y_{t+1})\mid  \mathcal{C}^{-\textbf{X}}_t].
\]
Hence $X\toverset{G}{\not\to}Y$ implies $X\toverset{tail}{\not\to}Y$. 

\textbf{THIRD IMPLICATION $X\toverset{ext}{\to}Y$ implies $X\toverset{tail}{\to}Y$:} 
Generally, it always holds that 
\[
\mathbb{E}[ F(Y_{t+1})\mid  \mathcal{C}^{-\textbf{X}}_t] < 1,
\]
since $F(y) < 1$ for all $y \in \mathbb{R}$. In more detail, if the distribution of $Y_{t+1} \mid \mathcal{C}^{-\textbf{X}}_t$ is well-defined and almost surely less than infinity, we also have that $F(Y_{t+1}) \mid \mathcal{C}^{-\textbf{X}}_t$ is almost surely less than 1. 
If $X\toverset{ext}{\to}Y$,  then
\[
\lim_{v\to\infty} \mathbb{E}[ F(Y_{t+1})\mid X_t>v,\mathcal{C}^{-\textbf{X}}_t] = 1 \neq \mathbb{E}[ F(Y_{t+1})\mid  \mathcal{C}^{-\textbf{X}}_t],
\]
which is what we wanted to prove. 

\textbf{SECOND IMPLICATION $X\toverset{G}{\to}Y$ implies $X\toverset{ext}{\to}Y$:}
We know that Granger causality implies structural causality. Due to  Lemma \ref{Observation}, if we show that for any $c \in \mathbb{R}$:
\begin{equation*}
 \lim_{v\to\infty}P(Y_{t+1} > c \mid X_t > v, \mathcal{C}^{-\textbf{X}}_t) = 1,   
\end{equation*}
then $X\toverset{ext}{\to}Y$, see Lemma \ref{Observation}.

Using the structural equation for $Y_{t+1}$, we rewrite
\[
\lim_{v\to\infty} P(Y_{t+1} > c \mid X_t > v, \mathcal{C}^{-\textbf{X}}_t) = \lim_{v\to\infty} P(h_{Y,t+1}(X_t, Y_t, \textbf{Z}_t, \varepsilon^Y_{t+1}) > c \mid X_t > v, Y_{\past(t)}, \textbf{Z}_{\past(t)}).
\]

Fix $y, \textbf{z}$ and define a function $\tilde{h}(x,e) := h_{Y,t+1}(x, y, \textbf{z}, e)$. Since $\varepsilon^Y_{t+1} \indep X_t \mid Y_{\past(t)}, \textbf{Z}_{\past(t)}$ and $\lim_{x\to\infty}\tilde{h}(x,e) = \infty$ for any $e$, we can directly use Lemma \ref{LemmaAboutInfinity}, which gives us $\lim_{v\to\infty}P(Y_{t+1} > c \mid X_t > v, \mathcal{C}^{-\textbf{X}}_t) = 1$. 
\end{proof}

\begin{customprop}{\ref{Proposition_both_tails}}
    \item If 
$
\Gamma_{|\textbf{X}|\to|\textbf{Y}|\mid\mathcal{C}} = 1,
$
then $\textbf{X}\toverset{tail^{\pm}}{\longrightarrow}\textbf{Y}$. Under Assumption \ref{AssumptionA2},  $\textbf{X}\toverset{tail^{\pm}}{\longrightarrow}\textbf{Y}$ implies $
\Gamma_{|\textbf{X}|\to|\textbf{Y}|\mid\mathcal{C}} = 1,
$.
    \item  $\textbf{X}\toverset{tail^{\pm}}{\longrightarrow}\textbf{Y}$ implies $\textbf{X}\toverset{G}{\longrightarrow}\textbf{Y}$. Under Assumption \ref{AssumptionA2},  $\textbf{X}\toverset{G}{\longrightarrow}\textbf{Y}$ implies $\textbf{X}\toverset{tail^{\pm}}{\longrightarrow}\textbf{Y}$. 
    \item  Under Assumption \ref{AssumptionA2}, the definition of  $\textbf{X}\toverset{tail^{\pm}}{\longrightarrow}\textbf{Y}$ is invariant with the choice of $F^{\pm}$. 
\end{customprop}
\begin{proof}\label{proof_of_proposition_both_tails}
The proof is fully analogous with the proof concerning the upper tail. Both-tail counterparts of Lemma \ref{Observation} and Lemma \ref{LemmaAboutInfinity} are restated in Lemma \ref{Observation2} and Lemma \ref{LemmaAboutInfinity2}. Apart of that, proving  $X\toverset{tail^{\pm}}{\to}Y\implies X\toverset{G}{\to}Y \overset{A2}{\implies}$ $\Gamma_{|\textbf{X}|\to|\textbf{Y}|\mid\mathcal{C}} = 1$ $\implies X\toverset{tail^{\pm}}{\to}Y$ is fully analogous to the proof of the upper tail, by substituting $|\textbf{X}|$ for $\textbf{X}$ and $|\textbf{Y}|$ for $\textbf{Y}$.  
\end{proof}

\subsection{Proof of Lemma \ref{lemma_o_invariance_about_F}  }

\begin{customlem}{\ref{lemma_o_invariance_about_F}}
Under Assumption \ref{AssumptionA1}, the definition of  $\textbf{X}\toverset{tail}{\longrightarrow}\textbf{Y}$ is invariant with the choice of $F$. That is, for any distribution functions $F_1, F_2$ satisfying $F_i(x)<1$ for all $x\in\mathbb{R}$, $i=1,2$, 
\begin{equation*}
    \begin{split}
        \lim_{v\to\infty} \mathbb{E}[ F_1(Y_{t+1})\mid X_t>v,&  \, \mathcal{C}^{-\textbf{X}}_t]\neq\mathbb{E}[ F_1(Y_{t+1})\mid  \mathcal{C}^{-\textbf{X}}_t]\\
        & \iff \\
        \lim_{v\to\infty} \mathbb{E}[ F_2(Y_{t+1})\mid X_t>v,&\,  \mathcal{C}^{-\textbf{X}}_t]\neq\mathbb{E}[ F_2(Y_{t+1})\mid  \mathcal{C}^{-\textbf{X}}_t]. 
    \end{split}
\end{equation*}
\end{customlem}

\begin{proof}\label{proof_of_lemma_o_invariance_about_F}
This is a direct consequence of Propositions \ref{Proposition_CTC_iff_1} and \ref{Proposition_Granger_equivalent_tail}, since  
\begin{equation*}
    \begin{split}
      &  \lim_{v\to\infty} \mathbb{E}[ F_1(Y_{t+1})\mid X_t>v,\mathcal{C}^{-\textbf{X}}_t]\neq\mathbb{E}[ F_1(Y_{t+1})\mid  \mathcal{C}^{-\textbf{X}}_t]\\&\iff \lim_{v\to\infty} \mathbb{E}[ F_1(Y_{t+1})\mid X_t>v,\mathcal{C}^{-\textbf{X}}_t]=1\\&
      \iff X\toverset{G}{\to}Y\\&
      \iff \lim_{v\to\infty} \mathbb{E}[ F_2(Y_{t+1})\mid X_t>v,\mathcal{C}^{-\textbf{X}}_t]=1
      \\&\iff  \lim_{v\to\infty} \mathbb{E}[ F_2(Y_{t+1})\mid X_t>v,\mathcal{C}^{-\textbf{X}}_t]\neq\mathbb{E}[ F_2(Y_{t+1})\mid  \mathcal{C}^{-\textbf{X}}_t].
    \end{split}
\end{equation*}
\end{proof}

\newpage
\subsection{Proof of Theorem~\ref{Theorem1} and Lemma~\ref{Theorem1_both_tails_lemma}}
\label{proof_of_thm1}
Before we prove Theorem \ref{Theorem1}, we first introduce some auxiliary lemmas. 

\begin{lemma}\label{PropositionLemma}
 \begin{itemize}
     \item Let a pair $(A,B)$ satisfy the Grey assumption with index $\alpha>0$, where $A$ is positive. Let $X$ be a random variable independent of $(A,B)$ which is either regularly varying or $P(X>t) = o(P(B>t))$. Then,$$P(AX+B>t)\sim P(AX>t)+P(B>t), \quad \text{ as }t\to\infty.  $$
     \item Let $A_1, A_2, A_3$ be positive random variables such that all pairs $(A_1, B)$, $(A_2, B)$, $(A_3,B)$ satisfy the Grey assumption with index $\alpha>0$, and $(X,Y,Z)$ are random variables independent of $(A_1, A_2, A_3,B)$. Then,
 $$P(A_1X+A_2Y+A_3Z+ B>t)\sim P(A_1X+A_2Y+A_3Z>t)+P(B>t),$$
as $t\to\infty. $\end{itemize}
\end{lemma}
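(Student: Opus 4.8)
The statement is a standard "tail of a sum equals sum of tails" result under the Grey assumption, and I would prove it by reducing everything to the first bullet, then iterating. The key analytic facts in play are: (i) under the Grey assumption, $P(B>t)\sim p_\alpha t^{-\alpha}l(t)$ is regularly varying with index $-\alpha$; (ii) if $X$ is regularly varying (or lighter than $B$) and independent of $(A,B)$ with $A$ positive satisfying $\mathbb{E}A^{\alpha+\nu}<\infty$, then by a Breiman-type lemma $P(AX>t)$ is also regularly varying with index $-\alpha$ (or $o(P(B>t))$); and (iii) for two independent non-negative-in-the-tail contributions whose tails are each regularly varying with the same index $-\alpha$, the tail of the sum is asymptotically the sum of the tails — this is the "tail-equivalence / subexponential-type" addition rule, valid because regularly varying distributions are long-tailed and the relevant class is closed under such convolutions when the indices match.

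\emph{First bullet.} I would first record that $P(B>t)\sim p_\alpha t^{-\alpha}l(t)$ is regularly varying. Then I distinguish two cases for $X$. If $P(X>t)=o(P(B>t))$, I need to show $P(AX+B>t)\sim P(B>t)$, i.e.\ the $AX$ contribution is negligible: here $P(AX>t)\le \mathbb{E}[P(X>t/A)\mid A] $-type estimates together with $\mathbb{E}A^{\alpha+\nu}<\infty$ and a uniform (Potter) bound give $P(AX>t)=o(P(B>t))$, and then a standard splitting argument (at level $\delta t$ for small $\delta$, using that $B$ is long-tailed, i.e.\ $P(B>t-s)\sim P(B>t)$ for fixed $s$, and letting $\delta\to0$) yields $P(AX+B>t)\sim P(B>t)=P(AX>t)+P(B>t)$. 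If instead $X$ is regularly varying, Breiman's lemma (applicable since $\mathbb{E}A^{\alpha+\nu}<\infty$ dominates the tail index of $X$; if $X$ has a strictly heavier tail the same conclusion holds with $P(AX>t)$ of that heavier order and $P(B>t)=o(P(AX>t))$) gives that $P(AX>t)$ is regularly varying, and then the tail-of-a-sum-of-two-independent-regularly-varying rule finishes it: $P(AX+B>t)\sim P(AX>t)+P(B>t)$. In all sub-cases one uses the same long-tailedness splitting; the only care needed is that $X$ may take negative values, but since we are looking at the upper tail $t\to\infty$ the negative part is harmless (it can only decrease $AX+B$, and $P(AX\le -\text{(const)})$ is $O(1)$, absorbed in the $\delta t$ splitting).

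\emph{Second bullet.} This follows by applying the first bullet once with $X$ replaced by the aggregate $W:=A_1X+A_2Y+A_3Z$ and $A\equiv 1$, $B\equiv B$ — but to invoke the first bullet I must verify its hypothesis on $W$, namely that $W$ is either regularly varying or $o(P(B>t))$. Here I would argue: by Breiman's lemma each $A_iX_i$ (taking the positive part / upper tail) is either regularly varying with index $-\alpha$ or $o(P(B>t))$; a finite sum of independent summands each of which is RV$(-\alpha)$ or $o(t^{-\alpha}l(t))$ is again RV$(-\alpha)$ or $o(t^{-\alpha}l(t))$ — this is the same addition rule iterated three times — so $W$ satisfies the hypothesis, and $(1,B)$ trivially satisfies the Grey assumption with index $\alpha$. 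Applying bullet one then gives $P(W+B>t)\sim P(W>t)+P(B>t)$, which is exactly the claim since $W=A_1X+A_2Y+A_3Z$.

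\textbf{Main obstacle.} The routine-but-delicate part is the Breiman-type step and the "sum of tails" addition rule when the summands are allowed to be two-sided (not non-negative) random variables and when one of the regularly varying objects may a priori have a \emph{heavier} tail than $B$ (the statement's phrasing "$X$ is regularly varying" does not pin its index to $\alpha$). I would handle this by splitting into the cases "tail index of $X$ is $>\alpha$" (then $P(AX>t)=o(P(B>t))$ by Breiman, reducing to the negligible case), "$=\alpha$" (genuine tail-equivalence, both terms of the same order), and "$<\alpha$" (then $B$ is negligible relative to $AX$, symmetric argument); in every case the final asymptotic equivalence $P(\text{sum})\sim P(AX>t)+P(B>t)$ holds because the smaller term is then $o(\cdot)$ of the larger and still appears (harmlessly) in the sum. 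All three cases rest on the single technical lemma that a regularly varying distribution is long-tailed, together with Potter's bounds to control $\mathbb{E}[A^{\alpha+\nu}]<\infty$ against the tail, so once that is set up the iteration for the second bullet is immediate.
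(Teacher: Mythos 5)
There is a genuine gap: your argument treats $AX$ and $B$ as independent summands, but they need not be. The Grey assumption is stated for the \emph{pair} $(A,B)$ and imposes only marginal conditions; only $X$ is assumed independent of $(A,B)$, while $A$ and $B$ may be arbitrarily dependent (and in the application to Theorem~\ref{Theorem1} they are components of the same noise vector $\varepsilon^x_t$, with no within-vector independence unless (I2) is additionally assumed). Consequently the ``tail of a sum of two independent regularly varying contributions equals the sum of the tails'' rule that your plan leans on is not applicable, and the long-tailedness splitting argument for the negligible case also needs joint control of $(AX,B)$, not just marginal control. The paper's proof instead invokes a max-sum equivalence theorem for \emph{dependent} summands \citep[Lemma~B.6]{SRE}: it suffices to show $P(|AX|>t,\,|B|>t)=o\bigl(P(AX>t)+P(B>t)\bigr)$, and this joint-exceedance condition is verified purely from the marginal moment bound $\mathbb{E}|A|^{\alpha+\nu}<\infty$ (which gives $P(cA>t\mid |B|>t)\le P(cA>t)/P(|B|>t)\to 0$) together with tightness of $X$; no case analysis on the tail index of $X$ is needed.

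The same issue defeats your reduction in the second bullet: applying the first bullet with $X$ replaced by $W:=A_1X+A_2Y+A_3Z$ and $A\equiv 1$ requires $W\indep B$, which fails because $W$ involves $A_1,A_2,A_3$, and these may be dependent with $B$. The paper instead repeats the joint-exceedance argument directly for $W$ and $B$. Your peripheral observations (handling of negative values of $X$, Breiman-type bounds via $\mathbb{E}A^{\alpha+\nu}<\infty$) are fine, but the central device --- controlling the joint tail of two dependent summands rather than assuming independence --- is missing.
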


\begin{proof}
Essentially, this is a non-trivial consequence of  \citep[Lemma B.6]{SRE}. Before we prove the Lemma, we first show three true statements. 

\textbf{Claim 1: }  $\lim_{t\to\infty}P(cA>t\mid |B|>t)= 0$ for any $c\in\mathbb{R}$.

This claim is true, since $$
P(cA>t\mid |B|>t) = P(|B|>t\mid cA>t)\frac{P(cA>t)}{P(|B|>t)}\leq \frac{P(cA>t)}{P(|B|>t)}\overset{t\to\infty}{\to} 0,
$$
as $cA$ has finite $\alpha$th expectation and $|B|$ is regularly varying. 

\textbf{Claim 2: } $\lim_{t\to\infty}P(A|X|>t\mid |B|>t)= 0.$ 

Let $\varepsilon>0$, and let $c>0$ such that $P(|X|>c)<\varepsilon$. Then, $$P(A|X|>t\mid |B|>t)\leq P(cA>t\mid |B|>t)+\varepsilon.$$
From Claim 1, the right side converges to $0+\varepsilon$. This claim is proven by taking $\varepsilon\to 0$. 

\textbf{Claim 3:}  If $
    P(|AX|>t, |B|>t) = o\big(P(AX>t)+P(B>t)\big),
    $    then $P(AX+B>t)\sim P(AX>t)+P(B>t)$. 

    This result is known as the max-sum equivalence theorem for dependent variables  \citep[Lemma B.6]{SRE}. 

\textbf{Proof of the lemma:     } Using Bayes theorem, we rewrite $$ \frac{P(|AX|>t, |B|>t)}{P(AX>t)+P(B>t)} = \frac{P(A|X|>t\mid |B|>t)}{\frac{P(B>t)}{P(|B|>t)}+\frac{P(AX>t)}{P(|B|>t)}}\overset{t\to\infty}{\to}0,$$
since the denominator if non-zero and the nominator converges to $0$ from Claim 2. Finally, Claim 3 concludes the proof.  

The second part of the lemma is a direct generalization of the first part. 

\end{proof}

\begin{customthm}{\ref{Theorem1}}
Consider a time series following the SRE model (\ref{sre2}), satisfying (B1), (B2), (B4). 
\begin{itemize}
\item Under (B3), $\Gamma_{\textbf{X}\to\textbf{Y}\mid\mathcal{C}} = 1 \implies \Gamma_{\textbf{X}\to\textbf{Y}\mid \emptyset} = 1$. That is, 
\begin{equation*}
\lim_{v\to\infty} \mathbb{E}[ F(Y_{t+1})\mid X_t>v,\mathcal{C}^{-\textbf{X}}_t]=1 \implies \lim_{v\to\infty} \mathbb{E}[ F(Y_{t+1})\mid X_t>v, \textbf{Y}_{\past(t)}]=1. 
\end{equation*}
\item Let the pairs  $(A^x_{1,t},B^x_t)^\top,(A^x_{2,t},B^x_t)^\top,(A^x_{3,t},B^x_t)^\top$ satisfy the Grey assumption with index $\alpha_x$, and $\operatorname{lim sup}_{u\to\infty}\frac{P(X_t>u\mid Y_{past(t)})}{P(B_t^x>u)}\overset{a.s.}{<}\infty$. Then, $\Gamma_{\textbf{X}\to\textbf{Y}\mid\mathcal{C}} = 1 \;\Longleftarrow\; \Gamma_{\textbf{X}\to\textbf{Y}\mid \emptyset} = 1$. That is, 
\begin{equation*}
\lim_{v\to\infty} \mathbb{E}[ F(Y_{t+1})\mid X_t>v,\mathcal{C}^{-\textbf{X}}_t]=1 \;\Longleftarrow\; \lim_{v\to\infty} \mathbb{E}[ F(Y_{t+1})\mid X_t>v, \textbf{Y}_{\past(t)}]=1. 
\end{equation*}
\end{itemize}
\end{customthm}
\begin{proof}
We consider $t$ to be fixed in the entire proof, and all limits below are understood almost surely. We will extensively use the notation 
\begin{equation*}
    \begin{split}
Y_{t+1}&=A_{1,t+1}^yZ_t + A_{2,t+1}^yX_t + A_{3,t+1}^yY_t+B_{t+1}^y 
    \end{split}
\end{equation*}
\textbf{Claim 1.} Under (B2) and (B4),
\[
\lim_{v\to\infty}
\mathbb E\!\left[
F(Y_{t+1})\mid X_t>v,\mathcal C_t^{-\mathbf X}
\right]=1
\quad\Longleftrightarrow\quad
P(A^y_{2,t+1}=0)\neq 1 .
\]

\textbf{Proof of Claim 1: } ``$\implies$'' For a contradiction, assume  $A_{2,t+1}^y\toverset{a.s.}{=}0$ and write 
\begin{equation}\label{eq97}
  Y_{t+1}=A_{1,t+1}^yZ_t + A_{2,t+1}^yX_t + A_{3,t+1}^yY_t+B_{t+1}^y\toverset{a.s.}{=}A_{1,t+1}^yZ_t + A_{3,t+1}^yY_t+B_{t+1}^y.   
\end{equation}
Notice that $A^y_{j,t+1}\indep X_t$ for $j=1,2,3$.  Together, we have that $\mathbb{E}[ F(Y_{t+1})\mid X_t>v,\mathcal{C}^{-\textbf{X}}_t]= \mathbb{E}[ F(Y_{t+1})\mid \mathcal{C}^{-\textbf{X}}_t]$, since $Y_{t+1}$ is only a function of $Y_t, Z_t, A^y_{j,t+1}, B^y_{t+1}$ which are independent of $X_t$ given $Y_t, Z_t$. However it is always $ \mathbb{E}[ F(Y_{t+1})\mid \mathcal{C}^{-\textbf{X}}_t]\neq 1$ (since $F(y)<1$ for all $y\in\mathbb{R}$, see the same argument in the proof of Proposition \ref{Proposition_CTC_iff_1}).  That is a contradiction.

``$\impliedby$'' Suppose \(P(A^y_{2,t+1}=0)\neq1\). By (B4), \(A^y_{2,t+1}>0\) a.s. Fix \(c\in\mathbb R\), condition on
\(\mathcal C_t^{-\mathbf X}\), and then fix a realization \(\varepsilon^y_{t+1}=(a_1,a_2,a_3,b)\) with \(a_2>0\). Since \(Y_t\) and \(Z_t\) are then fixed,
\[
a_1Z_t+a_2X_t+a_3Y_t+b \to \infty
\qquad \text{as } X_t\to\infty .
\]
More formally, the following holds:
\[
P(Y_{t+1}>c
\mid X_t>v,\mathcal C_t^{-\mathbf X},\varepsilon^y_{t+1})
\to 1 .
\]
Dominated convergence over \(\varepsilon^y_{t+1}\), followed by Lemma~\ref{Observation}, proves the claim.

\textbf{Proof of the first bullet-point. }
Assume \(\Gamma_{\mathbf X\to\mathbf Y\mid\mathcal C}=1\). By the equivalence above and \((B4)\), we have
\[
  \Gamma_{\mathbf X\to\mathbf Y\mid\mathcal C}=1 \iff    A^y_{2,t+1}>0
    \qquad\text{a.s.}
\]

Due to Lemma~\ref{Observation}, in order to prove \(
    \Gamma_{X\to Y\mid \emptyset}=1
\), it is enough to show that, for every \(c\in\mathbb R\),
\begin{equation}
    \label{eq432}
    P(Y_{t+1}>c\mid X_t>v,\mathbf Y_{\past(t)})\to 1 .
\end{equation}

Fix \(c\in\mathbb{R}\). Since
\(\varepsilon^y_{t+1}\) is independent of \((X_t,Z_t,\mathbf Y_{\past(t)})\), condition
on a realization \(\varepsilon^y_{t+1}=(a_1,a_2,a_3,b)\) with \(a_2>0\).
We can assume \(a_1>0\), because otherwise \(Y_{t+1}=a_2X_t+a_3Y_t+b\), so the conditional probability \eqref{eq432} converges trivially to one.

Now let \(a_1>0\), and put \(a:=\frac{a_2}{2a_1}\). For all sufficiently large \(v\), depending only on \(c,Y_t,a_1,a_2,a_3,b\),
\[
\{X_t>v,\ Z_t>-av\}
\subseteq
\{Y_{t+1}>c\}.
\]
Therefore
\[
P(Y_{t+1}>c
\mid X_t>v,\mathbf Y_{\past(t)},\varepsilon^y_{t+1})
\ge
P(Z_t>-av\mid X_t>v,\mathbf Y_{\past(t)}).
\]
The right-hand side converges to one by the assumption (B3). Dominated
convergence over \(\varepsilon^y_{t+1}\) gives
\[
P(Y_{t+1}>c\mid X_t>v,\mathbf Y_{\past(t)})\to1 .
\]
Hence \(\Gamma_{\mathbf X\to\mathbf Y\mid\emptyset}=1\).

\textbf{Proof of the second bullet-point. }
We show the negated claim. Assume \(\Gamma_{\mathbf X\to\mathbf Y\mid\mathcal C}\neq1\). By the Claim 1, \(A^y_{2,t+1}=0\) a.s., and hence
\[
Y_{t+1}
=
A^y_{1,t+1}Z_t
+
A^y_{3,t+1}Y_t
+
B^y_{t+1}.
\]
Choose a threshold \(c_t\) such that \(
p_t:=P(Y_{t+1}\le c_t\mid\mathbf Y_{\past(t)})>0. 
\) By Bayes' formula,
\begin{equation*}
    \begin{split}
        P(Y_{t+1}\le c_t\mid X_t>v,\mathbf Y_{\past(t)})
&=
p_t\,
\frac{
P(X_t>v\mid Y_{t+1}\le c_t,\mathbf Y_{\past(t)})
}{
P(X_t>v\mid\mathbf Y_{\past(t)})
} \\& = p_t\,
\frac{
P(X_t>v\mid Y_{t+1}\le c_t,\mathbf Y_{\past(t)})
}{P(B^x_t>v)} \frac{P(B^x_t>v)}{
P(X_t>v\mid\mathbf Y_{\past(t)})
}.
    \end{split}
\end{equation*}
Write
\[
X_t
=
\underbrace{
A^x_{1,t}Z_{t-1}
+
A^x_{2,t}X_{t-1}
+
A^x_{3,t}Y_{t-1}
}_{=:Q_t}
+
B^x_t .
\]
Notice that, since \(A^y_{2,t+1}=0\), the event
\(\{Y_{t+1}\le c_t\}\) does not involve \(\varepsilon^x_t\). Let
\(\widetilde W_{t-1}\) have the conditional law of \(W_{t-1}\) given
\(\{Y_{t+1}\le c_t\}\) and \(\mathbf Y_{\past(t)}\); however, let it be chosen conditionally independent of
\(\varepsilon^x_t\). Define
\[
\widetilde Q_t
=
A^x_{1,t}\widetilde Z_{t-1}
+
A^x_{2,t}\widetilde X_{t-1}
+
A^x_{3,t}\widetilde Y_{t-1}.
\]
Then
\[
P(X_t>v\mid Y_{t+1}\le c_t,\mathbf Y_{\past(t)})
=
P(B^x_t+\widetilde Q_t>v\mid\mathbf Y_{\past(t)}).
\]
Now apply Lemma~\ref{PropositionLemma} conditionally on \(\mathbf Y_{\past(t)}\):
\[
P(B^x_t+\widetilde Q_t>v\mid\mathbf Y_{\past(t)})
\sim
P(\widetilde Q_t>v\mid\mathbf Y_{\past(t)})
+
P(B^x_t>v),
\]
and in particular
\[
\liminf_{v\to\infty}
\frac{
P(X_t>v\mid Y_{t+1}\le c_t,\mathbf Y_{\past(t)})
}{
P(B^x_t>v)
}
\ge 1 .
\]
By the assumption $\operatorname{lim sup}_{u\to\infty}\frac{P(X_t>u\mid Y_{past(t)})}{P(B_t^x>u)}\overset{a.s.}{<}\infty$, there is a finite \(M_t>0\) such that, for all sufficiently large \(v\),
\[
P(X_t>v\mid\mathbf Y_{\past(t)})
\le
M_t P(B^x_t>v).
\]
Consequently, merging the two results, we obtain
\[
\liminf_{v\to\infty}
P(Y_{t+1}\le c_t\mid X_t>v,\mathbf Y_{\past(t)})
\ge
p_t \cdot 1 \cdot \frac{1}{M_t}
>0 .
\]
Using the argument from Lemma~\ref{Observation}, since \(F(c_t)<1\) a.s., we have
\[
1-
\mathbb E[F(Y_{t+1})\mid X_t>v,\mathbf Y_{\past(t)}]
\ge
(1-F(c_t))
P(Y_{t+1}\le c_t\mid X_t>v,\mathbf Y_{\past(t)}).
\]
Taking the limit inferior on the right gives
\[
\limsup_{v\to\infty}
\mathbb E[F(Y_{t+1})\mid X_t>v,\mathbf Y_{\past(t)}]
<1 .
\]
Thus \(\Gamma_{\mathbf X\to\mathbf Y\mid\emptyset}\neq1\). This proves the theorem. 
\end{proof}

\begin{customlem}{\ref{Theorem1_both_tails_lemma}}
Consider a time series following an SRE model as in~(\ref{sre2}) satisfying
(B1), (B2), and (B5). Assume that all limits below exist almost surely.

\begin{itemize}
    \item Define the following two-sided analogue of (B3):
\[
\tag{B3$^\pm$}
\lim_{v\to\infty}
P\bigl(|Z_t|\leq a|X_t| \mid |X_t|>v,Y_{past(t)}\bigr)=1
\qquad\text{for every }a>0.
\]
Then \(
\Gamma_{|\mathbf X|\to|\mathbf Y|\mid\mathcal C}=1
\implies
\Gamma_{|\mathbf X|\to|\mathbf Y|\mid\emptyset}=1.
\)

 \item Assume a two-sided analogue of the Grey-type tail condition: There exist
\(\alpha_x>0\) and \(\nu>0\) such that \(|B_t^x|\in RV(\alpha_x)\) and
\(
E|A^x_{j,t}|^{\alpha_x+\nu}<\infty,
 j=1,2,3,
\)
and assume the conditional tail-dominance condition
\(
\limsup_{u\to\infty}
\frac{P(|X_t|>u\mid Y_{past(t)})}{P(|B_t^x|>u)}
<\infty.
\) Then \(
\Gamma_{|\mathbf X|\to|\mathbf Y|\mid\emptyset}=1
\implies
\Gamma_{|\mathbf X|\to|\mathbf Y|\mid\mathcal C}=1.
\)
\end{itemize}
\end{customlem}
\begin{proof}\label{proof_of_Theorem1_both_tails_lemma}
We write
\[
Y_{t+1}
=
A^y_{1,t+1}Z_t+A^y_{2,t+1}X_t+A^y_{3,t+1}Y_t+B^y_{t+1}.
\]
\textbf{Claim~$1^\pm$} : under (B2) and (B5),
\[
\lim_{v\to\infty}
\mathbb E\!\left[
F^\pm(|Y_{t+1}|)\mid |X_t|>v,\mathcal C_t^{-\mathbf X}
\right]=1
\iff
P(A^y_{2,t+1}=0)\neq 1 .
\]
The proof is identical to the proof of Claim~1 in Theorem~\ref{Theorem1}, using
Lemma~\ref{Observation2} instead of Lemma~\ref{Observation}. Moreover, by (B5),
\(P(A^y_{2,t+1}=0)\neq 1\) implies \(P(A^y_{2,t+1}=0)=0\).

\textbf{First bullet point.}
Assume \(\Gamma_{|\mathbf X|\to|\mathbf Y|\mid\mathcal C}=1\). By the above claim,
\(P(A^y_{2,t+1}=0)=0\). By Lemma~\ref{Observation2}, applied conditionally on
\(\mathbf Y_{\past(t)}\), it is enough to show that, for every \(c\geq0\),
\[
P\bigl(|Y_{t+1}|\leq c\mid |X_t|>v,\mathbf Y_{\past(t)}\bigr)\to0 .
\]
Condition on \(
\varepsilon^y_{t+1}:=
(A^y_{1,t+1},A^y_{2,t+1},A^y_{3,t+1},B^y_{t+1})
=(a_1,a_2,a_3,b),
\)
where \(a_2\neq0\). By the innovation independence assumptions and (B2),
\(\varepsilon^y_{t+1}\) is independent of \((X_t,Z_t,\mathbf Y_{\past(t)})\). Hence it
suffices to prove that
\[
P\bigl(|a_1Z_t+a_2X_t+a_3Y_t+b|\leq c\mid |X_t|>v,\mathbf Y_{\past(t)}\bigr)\to0 .
\]
Set \(r:=a_3Y_t+b\). If \(a_1=0\), then on \(\{|X_t|>v\}\),
\[
|a_2X_t+r|\geq |a_2|v-|r|,
\]
which is larger than \(c\) for all sufficiently large \(v\). Thus the last
probability is eventually zero.

Now suppose \(a_1\neq0\), and set \(
\eta:=\frac{|a_2|}{2|a_1|}.
\) On \(\{|Z_t|\leq \eta |X_t|\}\), we have
\[
|a_1Z_t+a_2X_t+r|
\geq
|a_2||X_t|-|a_1||Z_t|-|r|
\geq
\frac{|a_2|}{2}|X_t|-|r|.
\]
Therefore, on \(\{|X_t|>v,\ |Z_t|\leq \eta |X_t|\}\), the last display is
larger than \(c\) for all sufficiently large \(v\). Hence
\[
P\bigl(|a_1Z_t+a_2X_t+r|\leq c\mid |X_t|>v,\mathbf Y_{\past(t)}\bigr)
\leq
P\bigl(|Z_t|>\eta |X_t|\mid |X_t|>v,\mathbf Y_{\past(t)}\bigr)\to0
\]
by (B3\(^{\pm}\)). Dominated convergence over
\(\varepsilon^y_{t+1}\) gives
\[
P\bigl(|Y_{t+1}|\leq c\mid |X_t|>v,\mathbf Y_{\past(t)}\bigr)\to0 .
\]
Thus \(
\Gamma_{|\mathbf X|\to|\mathbf Y|\mid\emptyset}=1 .
\)

\textbf{Second bullet point.}
We prove the contrapositive. Assume
\(
\Gamma_{|\mathbf X|\to|\mathbf Y|\mid\mathcal C}<1 .
\) 
By the Claim $1^{\pm}$,
\(P(A^y_{2,t+1}=0)=1\), and therefore
\[
Y_{t+1}=A^y_{1,t+1}Z_t+A^y_{3,t+1}Y_t+B^y_{t+1}.
\]
In particular, conditionally on \(\mathbf Y_{\past(t)}\), the variable \(Y_{t+1}\) does
not involve \(
\varepsilon^x_t=(A^x_{1,t},A^x_{2,t},A^x_{3,t},B^x_t).
\)

Choose a threshold \(c_t\) such that
\[
p_t:=P(|Y_{t+1}|\leq c_t\mid\mathbf Y_{\past(t)})>0,
\]
where the threshold can be any finite \(\mathbf Y_{\past(t)}\)-measurable random; for instance, take the first integer \(m\geq1\) such that \(P(|Y_{t+1}|\leq m\mid\mathbf Y_{\past(t)})>0\). Such an integer exists a.s. because \(Y_{t+1}\) is finite a.s.

By Bayes' formula,
\[
P(|Y_{t+1}|\leq c_t\mid |X_t|>v,\mathbf Y_{\past(t)})
=
p_t
\frac{
P(|X_t|>v\mid |Y_{t+1}|\leq c_t,\mathbf Y_{\past(t)})
}{
P(|X_t|>v\mid\mathbf Y_{\past(t)})
}.
\]
It remains to lower-bound the numerator. Write
\[
X_t=Q_t+B^x_t,
\qquad
Q_t:=A^x_{1,t}Z_{t-1}+A^x_{2,t}X_{t-1}+A^x_{3,t}Y_{t-1}.
\]
Let
\((\widetilde Z_{t-1},\widetilde X_{t-1},\widetilde Y_{t-1})\) have the
conditional distribution of
\((Z_{t-1},X_{t-1},Y_{t-1})\) given
\(\{|Y_{t+1}|\leq c_t\}\) and \(\mathbf Y_{\past(t)}\), chosen conditionally
independently of \(\varepsilon^x_t\). Define
\[
\widetilde Q_t
:=
A^x_{1,t}\widetilde Z_{t-1}
+
A^x_{2,t}\widetilde X_{t-1}
+
A^x_{3,t}\widetilde Y_{t-1}.
\]
Since \(A^y_{2,t+1}=0\), the conditioning event
\(\{|Y_{t+1}|\leq c_t\}\) does not involve \(\varepsilon^x_t\). Hence
\[
P(|X_t|>v\mid |Y_{t+1}|\leq c_t,\mathbf Y_{\past(t)})
=
P(|B^x_t+\widetilde Q_t|>v\mid\mathbf Y_{\past(t)}).
\]
Fix \(\delta\in(0,1)\). Then
\[
\begin{aligned}
P(|B^x_t+\widetilde Q_t|>v\mid\mathbf Y_{\past(t)})
&\geq
P(|B^x_t|>(1+\delta)v) \\
&\quad -
P(|B^x_t|>(1+\delta)v,\ |\widetilde Q_t|>\delta v
\mid\mathbf Y_{\past(t)}).
\end{aligned}
\]
We prove that the second term is \(o(P(|B^x_t|>v))\). By the union bound, it is
bounded by
\[
\sum_{j=1}^3
P\left(
|B^x_t|>(1+\delta)v,\
|A^x_{j,t}\widetilde U_j|>\frac{\delta v}{3}
\mid\mathbf Y_{\past(t)}
\right),
\]
where
\(
(\widetilde U_1,\widetilde U_2,\widetilde U_3)
=
(\widetilde Z_{t-1},\widetilde X_{t-1},\widetilde Y_{t-1}).
\)
For each \(j\), \(\widetilde U_j\) is conditionally independent of
\((A^x_{j,t},B^x_t)\) given \(\mathbf Y_{\past(t)}\). Moreover, for every fixed \(M>0\),
\[
\begin{aligned}
&
P\left(
|A^x_{j,t}\widetilde U_j|>\frac{\delta v}{3}
\mid
|B^x_t|>(1+\delta)v,\mathbf Y_{\past(t)}
\right) \\
&\qquad\leq
P(|\widetilde U_j|>M\mid\mathbf Y_{\past(t)})
+
\frac{
P\left(|A^x_{j,t}|>\delta v/(3M)\right)
}{
P\left(|B^x_t|>(1+\delta)v\right)
}.
\end{aligned}
\]
The second term tends to zero because
\(E|A^x_{j,t}|^{\alpha_x+\nu}<\infty\) and
\(|B^x_t|\in RV(\alpha_x)\). Then taking \(M\to\infty\) proves the claim.

Since \(|B^x_t|\in RV(\alpha_x)\), we obtain
\[
\liminf_{v\to\infty}
\frac{
P(|X_t|>v\mid |Y_{t+1}|\leq c_t,\mathbf Y_{\past(t)})
}{
P(|B^x_t|>v)
}
\geq
(1+\delta)^{-\alpha_x}>0 .
\]
By the assumption \(
\limsup_{u\to\infty}
\frac{P(|X_t|>u\mid Y_{past(t)})}{P(|B_t^x|>u)}
<\infty
\), there exists a finite
(possibly \(\mathbf Y_{\past(t)}\)-measurable random variable) \(M_t>0\) such that, for all
sufficiently large \(v\),
\[
P(|X_t|>v\mid\mathbf Y_{\past(t)})
\leq
M_t P(|B^x_t|>v).
\]
Therefore,
\[
\liminf_{v\to\infty}
P(|Y_{t+1}|\leq c_t\mid |X_t|>v,\mathbf Y_{\past(t)})
\geq
\frac{p_t(1+\delta)^{-\alpha_x}}{M_t}
>0 .
\]
Finally, since \(c_t<\infty\) a.s. and \(F^\pm(c_t)<1\) a.s.,
\[
\begin{aligned}
1-
\mathbb E[
F^\pm(|Y_{t+1}|)\mid |X_t|>v,\mathbf Y_{\past(t)}]
&\geq
(1-F^\pm(c_t))
P(|Y_{t+1}|\leq c_t\mid |X_t|>v,\mathbf Y_{\past(t)}).
\end{aligned}
\]
Taking $v\to\infty$ gives \(
\Gamma_{|\mathbf X|\to|\mathbf Y|\mid\emptyset}<1
\), what we wanted to prove. 

\end{proof}

\newpage
\subsection{Proof of Theorem \ref{Theorem2v2}}
\label{Proof of Theorem 2v2}

\begin{lemma}[Freedman's inequality, \cite{freedman1975tail}]
\label{lemma:freedman}
Let \((M_k,\mathcal F_k)_{k\ge0}\) be a martingale with \(M_0=0\). Write
\[
    M_k=\sum_{j=1}^k \xi_j,
    \qquad
    \mathbb E[\xi_j\mid\mathcal F_{j-1}]=0,
\]
and assume that $    |\xi_j|\le b\in\mathbb{R}$ almost surely for all $j$. Let
\[
    V_k
    :=
    \sum_{j=1}^k
    \mathbb E[\xi_j^2\mid\mathcal F_{j-1}]
\]
be the predictable quadratic variation. Then, for every \(x>0\) and \(v>0\),
\[
    \mathbb P\left(
        M_k\ge x
        \ \text{and}\
        V_k\le v
    \right)
    \le
    \exp\left\{
        -\frac{x^2}{2(v+bx/3)}
    \right\}.
\]
Consequently,
\[
    \mathbb P\left(
        |M_k|\ge x
        \ \text{and}\
        V_k\le v
    \right)
    \le
    2\exp\left\{
        -\frac{x^2}{2(v+bx/3)}
    \right\}.
\]
\end{lemma}

\begin{lemma}
\label{Law_of_large_numbers_lemma}
Let $(X_i, Y_i)_{i=1}^{\infty}$ be iid continuous random vectors with support $\mathcal{X}\times [0, 1]$ and continuous joint density. Let $B_n\subset \mathcal{X}$ be decreasing balls such that $\cap_{n=1}^\infty B_n= x_0\in\overline{\mathcal{X}}$ and  $n\mathbb{P}(X_i\in B_n)\to\infty$. 

Then, 
$$
\frac{1}{|\tilde{S}_n|}\sum_{i\in \tilde{S}_n}Y_i \overset{P}{\to} \mathbb{E}[Y_1\mid X_1=x_0], \,\,\,\,as \,\,n\to\infty, 
$$
where $\tilde{S}_n = \{i \in \{1, ..., n\}: X_i\in B_n\}$. 
\end{lemma}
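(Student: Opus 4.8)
The plan is to split the target average into two factors whose limits I can control separately: the empirical frequency $|S_n|/n$ and the (random) sum $\frac1n\sum_{i\in S_n}Y_i$. First I would write
\[
\frac{1}{|S_n|}\sum_{i\in S_n}Y_i=\frac{\tfrac1n\sum_{i=1}^n Y_i\,\indicator{X_i\in B_n}}{\tfrac1n\sum_{i=1}^n \indicator{X_i\in B_n}},
\]
and treat numerator and denominator as triangular arrays of i.i.d.\ (within each row) bounded random variables. Let $p_n:=\mathbb P(X_1\in B_n)$; by hypothesis $np_n\to\infty$, so $p_n\to 0$ but not too fast. The denominator has mean $p_n$ and variance $p_n(1-p_n)/n\le p_n/n$, so $\operatorname{Var}(\text{denom}/p_n)=(1-p_n)/(np_n)\to0$; hence $\tfrac1{np_n}\sum \indicator{X_i\in B_n}\overset{P}{\to}1$ by Chebyshev. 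For the numerator I would similarly normalise by $p_n$: $\mathbb E\!\left[\tfrac1{np_n}\sum Y_i\indicator{X_i\in B_n}\right]=\tfrac1{p_n}\mathbb E[Y_1\indicator{X_1\in B_n}]=\mathbb E[Y_1\mid X_1\in B_n]$, and the variance is bounded by $\tfrac1{np_n^2}\mathbb E[Y_1^2\indicator{X_1\in B_n}]\le \tfrac1{np_n^2}\cdot p_n=\tfrac1{np_n}\to0$, using $0\le Y_1\le1$. So the numerator over $p_n$ converges in probability to $\mathbb E[Y_1\mid X_1\in B_n]$ — a deterministic but $n$-dependent quantity.

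The remaining analytic step is to show $\mathbb E[Y_1\mid X_1\in B_n]\to \mathbb E[Y_1\mid X_1=x_0]$ as $n\to\infty$. This is where continuity of the joint density enters: writing $f_X$ for the marginal density of $X_1$ and $m(x):=\mathbb E[Y_1\mid X_1=x]$, we have
\[
\mathbb E[Y_1\mid X_1\in B_n]=\frac{\int_{B_n} m(x)f_X(x)\,dx}{\int_{B_n} f_X(x)\,dx},
\]
and since $B_n\downarrow\{x_0\}$ with $x_0$ in the closure of $\mathcal X$, both $m$ and $f_X$ are continuous at $x_0$ (continuity of $m$ at $x_0$ follows from continuity of the joint density and $f_X(x_0)>0$ on the support), so the ratio of averages over the shrinking sets tends to $m(x_0)$. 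One should be slightly careful at a boundary point $x_0\in\overline{\mathcal X}\setminus\mathcal X$: there $f_X(x_0)$ may be a limit rather than a value, but as long as $\mathbb P(X_1\in B_n)>0$ for all $n$ (implied by $np_n\to\infty$) and the density is continuous up to the boundary, the same conclusion holds. I would state this as a short sub-lemma or simply invoke continuity.

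Finally I would combine the pieces: numerator$/p_n\overset{P}{\to}m(x_0)$ and denominator$/p_n\overset{P}{\to}1$, and since the denominator limit is a nonzero constant, the ratio converges in probability to $m(x_0)=\mathbb E[Y_1\mid X_1=x_0]$ by the continuous mapping theorem / Slutsky. The main obstacle is not the probabilistic part (which is routine Chebyshev on a triangular array) but making the deterministic convergence $\mathbb E[Y_1\mid X_1\in B_n]\to\mathbb E[Y_1\mid X_1=x_0]$ rigorous without extra regularity — in particular handling the case where $x_0$ lies on the boundary of $\mathcal X$ and where the sets $B_n$ are not balls; here I would lean on the hypotheses that the joint density is continuous and that $n\mathbb P(X_1\in B_n)\to\infty$ (which guarantees $\mathbb P(X_1\in B_n)>0$, so all conditional expectations are well defined).
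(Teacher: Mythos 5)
Your proof is correct, and it takes a genuinely different route from the paper's. You decompose the target as a ratio $\bigl(\tfrac1n\sum_i Y_i\mathbb{1}_{\{X_i\in B_n\}}\bigr)/\bigl(\tfrac1n\sum_i \mathbb{1}_{\{X_i\in B_n\}}\bigr)$, normalise numerator and denominator by $p_n$, apply Chebyshev to each triangular array, and finish with Slutsky; the paper instead works directly with the conditional average $M_n$, conditions on the event $\{|S_n|=k\}$, bounds $\mathbb{E}[(M_n-m_n)^2\mid |S_n|=k]$ by a variance-over-$k$ term, and needs a separate Chebyshev claim that $|S_n|>\tfrac12 np_n$ with probability tending to one in order to control the small-$k$ contribution. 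Your version avoids that conditioning and the case split entirely, at the cost of having to invoke Slutsky and to note (which you should, briefly) that the event $|S_n|=0$ has vanishing probability so the ratio is well defined; the paper sidesteps this by setting $M_n=0$ on that event. Both arguments rest on exactly the same two ingredients: the second-moment bounds driven by $np_n\to\infty$, and the deterministic convergence $m_n:=\mathbb{E}[Y_1\mid X_1\in B_n]\to \mathbb{E}[Y_1\mid X_1=x_0]$. On that last point you are actually more careful than the paper, which simply asserts it "from the assumption of continuous joint density"; your observation that the hypotheses as literally stated (decreasing sets with singleton intersection, not necessarily shrinking in diameter) do not quite force this convergence is a fair criticism of the lemma's statement rather than a gap in either proof — in the paper's application the $B_n$ are balls of radius $r_n\to 0$, where your integral-ratio argument goes through.
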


\begin{proof}
\label{proof_of_thm2v2}

Let $(X, Y) := (X_1, Y_1)$, $p_n:=P(X\in B_n)$ and $m_n:=\mathbb{E}[Y\mid X\in B_n]$. Note that $\lim_{n\to\infty}np_n=\infty$,  $m_n \in [0,1]$ and that $\lim_{n\to\infty}m_n = m:=\mathbb{E}[Y\mid X=x_0]$ from the assumption of continuous joint density. 

Define $Z_{i,n} = 1_{\{X_i\in B_n\}}$. Then $\{Z_{i,n}\}_{i=1}^n$ are i.i.d. and $|\tilde{S}_n|=\sum_{i=1}^n Z_{i,n}$. Let

\[
M_n =
\begin{cases}
\frac{1}{|\tilde{S}_n|}\sum_{i\in \tilde{S}_n}Y_i & \text{if } |\tilde{S}_n| > 0 \\
0 & \text{otherwise}
\end{cases}
\]

\textbf{Claim:} $\lim_{n\to\infty}  P[|\tilde{S}_n|\leq \frac{1}{2}np_n]=0$

\textbf{Proof of the claim: } 

We have
\begin{align*}
P[|\tilde{S}_n|\leq \frac{1}{2}np_n] & = P[np_n - |\tilde{S}_n|\geq \frac{1}{2}np_n] \\&
\leq P[\big||\tilde{S}_n|-np_n\big| \geq \frac{1}{2}np_n]\\&
=P\left[\big|\sum_{i=1}^n(Z_{i,n}-p_n)\big|\geq \frac{1}{2}np_n\right]\\
&\overset{}{\leq} \frac{np_n(1-p_n)}{\frac{1}{4}(np_n)^2}\to 0 ,\,\,\,\,\,as\,\,n\to\infty,
\end{align*}
where we used the Chebyshev inequality in the last step. 

\textbf{Final proof: }
Fix \(\epsilon>0\) and find \(n_0\in\mathbb N\) such that
\(|m_n-m|\leq \epsilon/2\) for all \(n\geq n_0\). Then, for \(n\geq n_0\),
\begin{align*}
P(|M_n-m|\geq \epsilon)
&\leq P(|M_n-m_n|\geq \epsilon/2)\\
&\leq \frac{4}{\epsilon^2}\mathbb{E}[(M_n-m_n)^2] \qquad \,\text{(Markov inequality)}\\
&=\frac{4}{\epsilon^2}\sum_{k=0}^n
\mathbb{E}[(M_n-m_n)^2\mid |\tilde{S}_n|=k]P(|\tilde{S}_n|=k)\\
&=\frac{4}{\epsilon^2}\sum_{k\leq np_n/2}
\mathbb{E}[(M_n-m_n)^2\mid |\tilde{S}_n|=k]P(|\tilde{S}_n|=k)\\
&\quad+
\frac{4}{\epsilon^2}\sum_{np_n/2<k\leq n}
\mathbb{E}[(M_n-m_n)^2\mid |\tilde{S}_n|=k]P(|\tilde{S}_n|=k)\\
&\overset{(a)}{\leq}
\frac{4}{\epsilon^2}P(|\tilde{S}_n|\leq np_n/2)
+
\frac{4}{\epsilon^2}\sum_{np_n/2<k\leq n}
\mathbb{E}[(M_n-m_n)^2\mid |\tilde{S}_n|=k]P(|\tilde{S}_n|=k)\\
&\overset{(b)}{\leq}
\frac{4}{\epsilon^2}P(|\tilde{S}_n|\leq np_n/2)
+
\frac{4}{\epsilon^2}\sum_{np_n/2<k\leq n}
\frac{\operatorname{Var}(Y\mid X\in B_n)}{k}P(|\tilde{S}_n|=k)\\
&\overset{(c)}{\leq}
\frac{4}{\epsilon^2}P(|\tilde{S}_n|\leq np_n/2)
+
\frac{4}{\epsilon^2}\sum_{np_n/2<k\leq n}
\frac{1}{k}P(|\tilde{S}_n|=k)\\
&\overset{(d)}{\leq}
\frac{4}{\epsilon^2}P(|\tilde{S}_n|\leq np_n/2)
+
\frac{4}{\epsilon^2}\frac{2}{np_n}
P(|\tilde{S}_n|>np_n/2)\\
&\leq
\frac{4}{\epsilon^2}P(|\tilde{S}_n|\leq np_n/2)
+
\frac{8}{\epsilon^2np_n}
\to 0,
\end{align*}
where
\begin{itemize}
    \item (a) follows from the fact that  \(M_n,m_n\in[0,1]\) and \((M_n-m_n)^2\leq 1\). Therefore $\sum_{k\leq np_n/2}
\mathbb{E}[(M_n-m_n)^2\mid |\tilde{S}_n|=k]P(|\tilde{S}_n|=k)\leq \sum_{k\leq np_n/2}
1[|\tilde{S}_n|=k ]P(|\tilde{S}_n|=k) = P(|\tilde{S}_n|\leq np_n/2)$.
    \item (b) follows from the fact that \(M_n\) is an average of
     \(k\) iid random variables with distribution \(Y\mid X\in B_n\) and $m_n$ is its expectation.
    \item (c) follows from the fact that $Y\in[0,1]$
    \item \((d)\) follows from \(k>np_n/2\), which implies \(1/k\leq 2/(np_n)\). 
\end{itemize}
Therefore, $M_n$ converges to $m$ in probability, concluding the proof.
\end{proof}

\begin{lemma}
    \label{ergodic_result}
Let $(\textbf{X}_i, Y_i)_{i=1}^{\infty}$ be a stationary ergodic stochastic process with $\mathbb{E}|Y|<\infty$. Let $D\subseteq\mathbb{R}^d$  be a measurable set with $P(\textbf{X}_1\in D)>0$. Let $\tilde{S}_n=\{t\in\{1, \dots, n\}: \textbf{X}_t\in D\}$. Then 
\begin{equation*}\label{eq897651}
    \frac{1}{|\tilde{S}_n|} \sum_{i\leq n: i\in \tilde{S}_n}Y_i \toverset{a.s.}{\to} \mathbb{E}[Y_1\mid \textbf{X}_1\in D], \,\,\,\,as \,\,n\to\infty. 
\end{equation*}
\end{lemma}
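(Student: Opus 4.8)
The plan is to reduce the statement to Birkhoff's pointwise ergodic theorem by treating the numerator $\sum_{i\le n,\, i\in S_n}Y_i$ and the denominator $|S_n|$ separately, normalising each by $n$.

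First I would observe that $\big(Y_i\,\mathbbm{1}(\mathbf{X}_i\in D)\big)_{i\ge 1}$ is itself a stationary ergodic sequence, being the coordinate-wise image of the stationary ergodic process $(\mathbf{X}_i,Y_i)_{i\ge 1}$ under the fixed measurable map $(\mathbf{x},y)\mapsto y\,\mathbbm{1}(\mathbf{x}\in D)$ (equivalently, a factor of the underlying measure-preserving system), and that it is integrable since $\mathbb{E}|Y_1\,\mathbbm{1}(\mathbf{X}_1\in D)|\le\mathbb{E}|Y_1|<\infty$. Birkhoff's ergodic theorem then gives
\[
\frac{1}{n}\sum_{i=1}^n Y_i\,\mathbbm{1}(\mathbf{X}_i\in D)\ \toverset{a.s.}{\to}\ \mathbb{E}\big[Y_1\,\mathbbm{1}(\mathbf{X}_1\in D)\big].
\]
Applying the same theorem to the bounded (hence integrable) stationary ergodic sequence $\big(\mathbbm{1}(\mathbf{X}_i\in D)\big)_{i\ge 1}$ yields
\[
\frac{|S_n|}{n}=\frac{1}{n}\sum_{i=1}^n \mathbbm{1}(\mathbf{X}_i\in D)\ \toverset{a.s.}{\to}\ P(\mathbf{X}_1\in D)>0 .
\]

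Next I would note that, since the almost-sure limit of $|S_n|/n$ is strictly positive, $|S_n|\to\infty$ almost surely, so that on a set of probability one the identity
\[
\frac{1}{|S_n|}\sum_{i\in S_n}Y_i=\frac{\frac{1}{n}\sum_{i=1}^n Y_i\,\mathbbm{1}(\mathbf{X}_i\in D)}{\frac{1}{n}\sum_{i=1}^n \mathbbm{1}(\mathbf{X}_i\in D)}
\]
is valid for all large $n$. Letting $n\to\infty$ and using the two limits above together with continuity of the quotient map $(a,b)\mapsto a/b$ at points with nonzero second coordinate, I obtain
\[
\frac{1}{|S_n|}\sum_{i\in S_n}Y_i\ \toverset{a.s.}{\to}\ \frac{\mathbb{E}\big[Y_1\,\mathbbm{1}(\mathbf{X}_1\in D)\big]}{P(\mathbf{X}_1\in D)}=\mathbb{E}\big[Y_1\mid \mathbf{X}_1\in D\big],
\]
which is the claim.

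I do not expect a genuine obstacle here: the result is essentially a two-line corollary of the pointwise ergodic theorem. The only points that require an explicit (but routine) line of justification are that a coordinate-wise measurable image of a stationary ergodic sequence is again stationary ergodic — so that Birkhoff applies to both auxiliary sequences $\big(Y_i\,\mathbbm{1}(\mathbf{X}_i\in D)\big)_i$ and $\big(\mathbbm{1}(\mathbf{X}_i\in D)\big)_i$ — and that the hypothesis $P(\mathbf{X}_1\in D)>0$ is what legitimises dividing by $|S_n|$ for all large $n$ and identifies the limit as a genuine conditional expectation.
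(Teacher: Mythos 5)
Your proof is correct and follows essentially the same route as the paper's: apply Birkhoff's ergodic theorem separately to the stationary ergodic sequences $Y_i\,\mathbbm{1}(\mathbf{X}_i\in D)$ and $\mathbbm{1}(\mathbf{X}_i\in D)$, then take the quotient using $P(\mathbf{X}_1\in D)>0$. Your added remark that $|S_n|\to\infty$ a.s.\ justifies the division for large $n$ is a small extra point of care not spelled out in the paper, but the argument is the same.
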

\begin{proof}
    The proof is a direct consequence of the weak law of large numbers for ergodic processes (Birkhoff’s Ergodic Theorem, see Proposition 4.3 in \cite{krengel1985ergodic} or  \citep{birkhoff1931proof} or chapter 4 in \cite{pene2022stochastic}). Let $\tilde{X}_i := 1[\textbf{X}_i\in D]$ and let $W_i:=Y_i\tilde{X}_i$. Note that $W_i$ is stationary and ergodic (Proposition 4.3 in \cite{krengel1985ergodic}). Now, (\ref{eq897651}) reads as 
    $$
    \frac{1}{|\tilde{S}_n|} \sum_{i\leq n: i\in \tilde{S}_n}Y_i =    \bigg( \frac{n}{\sum_{i\leq n}\tilde{X}_i}\bigg)   \bigg( \frac{1}{n} \sum_{i\leq n}W_i    \bigg) . 
    $$
The first part converges a.s. to $1/P(X_1\in D)$ from ergodicity of $\tilde{X}_i$ and the second part converges a.s. to $\mathbb{E}[Y_1 1[X_1\in D]]$     from ergodicity of $W_i$. Combining these results with Slutsky theorem, we get that   $\bigg(   \frac{1}{|\tilde{S}_n|} \sum_{i\leq n: i\in \tilde{S}_n}Y_i    \bigg)$ converges almost surely to $\frac{1}{P(X_1\in D)}\mathbb{E}[Y_1 1[X_1\in D]] = \mathbb{E}[Y_1 \mid X_1\in D]$, what we wanted to show. 
\end{proof}

\begin{customthm}{\ref{Theorem2v2}}
Consider a data-generating process as in Definition~\ref{Definition_str}. 
Assume that the process \((\mathbf X,\mathbf Y,\mathbf Z)\) is stationary 
and ergodic. Assume that the relevant finite-dimensional distributions are 
absolutely continuous with respect to Lebesgue measure and have continuous 
densities.  Assume that the structural function \(h_Y\) satisfies 
Assumption~\ref{AssumptionA1} and is continuously differentiable in 
\((y,\mathbf z)\) on a neighbourhood of \((y_0,\mathbf z_0)\), with gradient 
uniformly bounded on that neighbourhood in a sense that on a neighbourhood \(U\) of \((y_0, \textbf{z}_0)\), for \(\varepsilon^Y\)-almost every \(e\), there exist constants \(C(e)<\infty\) and \(x_0(e)<\infty\) such that
\[
    \sup_{\substack{x\ge x_0(e)\\ w\in U}}
    \|\nabla_w h_Y(x,w,e)\|_1
    \le C(e).
\]
Then, the estimator \(\hat{\Gamma}_{\textbf{X}\to\textbf{Y}\mid \textbf{Z}}\) defined in equation \eqref{Gamma_hat_equation} with \(S \equiv S_2\),  is consistent in the sense that   \begin{equation*}
   \hat{\Gamma}_{\textbf{X}\to\textbf{Y}\mid \textbf{Z}} \overset{P}{\to} \Gamma_{\textbf{X}\to\textbf{Y}\mid\mathcal{C}_0}, \quad \text{ as }n\to\infty,
  \end{equation*}
where $\Gamma_{\textbf{X}\to\textbf{Y}\mid\mathcal{C}_0} = \lim_{v\to\infty} \mathbb{E}[ F(Y_{t+1})\mid X_t>v,Y_t = y_0, \textbf{Z}_t = \textbf{z}_0]$, provided that the limit exists.

Define second-order assumptions: for \(
    m(x,y,z):=
    \mathbb E\!\left[
        F\{h_Y(x,y, z,\varepsilon^Y_{t+1})\}
    \right],
\) we assume
\[
    V_n:=\sum_{t\in S_2}    \operatorname{Var}\!\left(
        F\{h_Y(X_t,Y_t,\mathbf Z_t,\varepsilon^Y_{t+1})\}
    \right)\overset{P}{\longrightarrow}\infty, \qquad \text{ and }
\]
\[
    \frac{\sum_{t\in S_2}\{m(X_t,Y_t,\mathbf Z_t)-\Gamma_{\mathbf X\to\mathbf Y\mid\mathcal C_0}\}}
         {\sqrt{V_n}}
    \overset{P}{\longrightarrow}0,
    \qquad
    \frac{\sum_{t\in S_2}\{m(X_t,Y_t,\mathbf Z_t)-\Gamma_{\mathbf X\to\mathbf Y\mid\mathcal C_0}\}^2}
         {V_n}
    \overset{P}{\longrightarrow}0.
\]
Then
\[
    \frac{\sqrt{|S_2|}\,
    \left(
        \widehat\Gamma_{\mathbf X\to\mathbf Y\mid\mathbf Z}
        -
        \Gamma_{\mathbf X\to\mathbf Y\mid\mathcal C_0}
    \right)}
    {\widehat\sigma_n}
    \overset{d}{\to} N(0,1), \quad \text{ where }\quad
        \widehat\sigma_n^2
    :=
    \frac1{|S_2|}
    \sum_{t\in S_2}
    \left\{
        F(Y_{t+1})
        -
        \widehat\Gamma_{\mathbf X\to\mathbf Y\mid\mathbf Z}
    \right\}^2.
\]
\end{customthm}

\begin{proof}
\textit{\textbf{Idea of the proof: }If $\tau_X$ an $r$ were fixed near-limit constants, then for set $D =(\tau_x, \infty)\times B_{(y_0, \textbf{z}_0)(r)}$,}
\begin{equation*}
\begin{split}
      \hat{\Gamma}_{\textbf{X}\to\textbf{Y}\mid\mathcal{C}} &=\frac{1}{|S_2|}\sum_{\substack{t\in\{1, \dots, n\}: \\ (x_t, y_t, \textbf{z}_t)\in D }}  F(Y_{t+1}) \\&
    \overset{P}{\to} \mathbb{E}[ F(Y_{t+1})\mid (X_t, Y_t, \textbf{Z}_t)\in D] \qquad \qquad \,as\,\,n\to\infty\\&=  
     \mathbb{E}[ F(Y_{t+1})\mid X_t>\tau_X,(Y_t ,\textbf{Z}_t) \in B_{(y_0, \textbf{z}_0)}(r)]
    \\&
    \approx \lim_{v\to\infty} \mathbb{E}[ F(Y_{t+1})\mid X_t>v,Y_t = y_0, \textbf{Z}_t = \textbf{z}_0] = \Gamma_{\textbf{X}\to\textbf{Y}\mid\mathcal{C}_0}, 
\end{split}
\end{equation*}
\textit{where the convergence follows directly from the weak law of large numbers for ergodic processes (Lemma~\ref{ergodic_result} with notation $Y_t = F(Y_t)$ and $\textbf{X}_t = (X_{t-1}, Y_{t-1}, Z_{t-1})$). What remains is to show that the same argument holds also if $\tau_X$ and $r$ are non-fixed. This is made precise by Freedman's inequality and mean-value theorem. Asymptotic normality is a consequence of the martingale central limit theorem.}

\textbf{Proof of consistency. } Let \(\tilde W_t=(Y_t,\mathbf Z_t)\), \(\tilde w_0=(y_0,\mathbf z_0)\),
\(I_{n,t}:=\mathbf 1\{X_t\ge \tau_n^X,\; \tilde W_t\in B_{\tilde w_0}(r_n)\}\), and
\(N_n:=\sum_{t=1}^{n-1}I_{n,t}=|S_2|\). By the definition of \(S_2\),
\(N_n\to\infty\) in probability. We work on the event \(\{N_n>0\}\), whose
probability tends to one.

Let \(\mathcal F_t:=\sigma(X_s,Y_s,\mathbf Z_s:s\le t)\). By the structural equation,
\(Y_{t+1}=h_Y(X_t,\tilde W_t,\varepsilon^Y_{t+1})\), with
\(\varepsilon^Y_{t+1}\perp\!\!\!\perp\mathcal F_t\). Define
\[
    m(x,\tilde w):=
    \mathbb E\!\left[
        F\{h_Y(x,\tilde w,\varepsilon^Y_{t+1})\}
    \right].
\]
Then \(\mathbb E[F(Y_{t+1})\mid\mathcal F_t]=m(X_t,\tilde W_t)\). By definition, we rewrite

\[
\hat{\Gamma}_{\mathbf X\to\mathbf Y\mid\mathbf Z}
=
\underbrace{
\frac{1}{N_n}\sum_{t=1}^{n-1} I_{n,t}m(X_t,\tilde W_t)
}_{=:A_n}
+
\underbrace{
\frac{1}{N_n}\sum_{t=1}^{n-1} I_{n,t}
\bigl[F(Y_{t+1})-m(X_t,\tilde W_t)\bigr]
}_{=:C_n}.
\]

\textbf{Second term $C_n$.} We show that $C_n$ is \(o_P(1)\).  Put
\(\xi_{t+1}:=F(Y_{t+1})-m(X_t,\tilde W_t)\). Then
\(\mathbb E[\xi_{t+1}\mid\mathcal F_t]=0\) and \(|\xi_{t+1}|\le 1\). Since
\(I_{n,t}\) is \(\mathcal F_t\)-measurable,
\(M_n:=\sum_{t=1}^{n-1}I_{n,t}\xi_{t+1}\) is a martingale-difference sum with bounded
increments and predictable quadratic variation bounded by \(N_n\). Therefore, by Freedman's inequality, for every \(\delta>0\) and every
\(R\ge1\),
\[
    \mathbb P\left(\left|\frac{M_n}{N_n}\right|>\delta\right)
    \le
    \mathbb P(N_n<R)
    +
    \sum_{k=R}^{\infty}2\exp\{-c_\delta k\},
\]
for some constant \(c_\delta>0\). Indeed, on the event \(\{N_n=k\}\), the
predictable quadratic variation of \(M_n\) is at most \(k\), since the
increments are bounded and only \(k\) summands are selected. Freedman's
inequality therefore bounds the probability of \(|M_n|>\delta k\) on this
event by \(2\exp\{-c_\delta k\}\), and summing over all \(k\ge R\) gives the
displayed inequality. Letting first \(n\to\infty\) and then \(R\to\infty\)
gives \(M_n/N_n=o_P(1)\).

\textbf{First term $A_n$.} On \(\{N_n>0\}\), all selected observations satisfy \(X_t\ge\tau_n^X\) and
\(\tilde W_t\in B_{\tilde w_0}(r_n)\). Therefore
\[
    |A_n-\Gamma_{\mathbf X\to\mathbf Y\mid\mathcal C_0}|
    \le
    \sup_{\substack{x\ge\tau_n^X\\ \tilde w\in B_{\tilde w_0}(r_n)}}
    \left|m(x,\tilde w)-\Gamma_{\mathbf X\to\mathbf Y\mid\mathcal C_0}\right|.
\]
We now show that this supremum converges to zero. 

\textbf{First term $A_n$, case when \(h_Y\) is constant in \(x\).} If \(h_Y\) is constant in \(x\), then
\(m(x,\tilde w)=m_0(\tilde w)\), where
\(m_0(\tilde w):=\mathbb E[F\{h_Y(\tilde w,\varepsilon^Y_{t+1})\}]\). Since \(h_Y\) is
continuous in \(\tilde w\) at \(\tilde w_0\), and \(F\) is continuous and bounded, dominated
convergence gives \(m_0(\tilde w)\to m_0(\tilde w_0)\) as \(\tilde w\to \tilde w_0\). Hence
\[
    \sup_{\tilde w\in B_{\tilde w_0}(r_n)}
    |m_0(\tilde w)-m_0(\tilde w_0)|
    \to0.
\]
Moreover, in this case
\(\Gamma_{\mathbf X\to\mathbf Y\mid\mathcal C_0}=m_0(\tilde w_0)\), so \(A_n\to
\Gamma_{\mathbf X\to\mathbf Y\mid\mathcal C_0}\) in probability.

\textbf{First term $A_n$, case when \(h_Y\) is not constant in \(x\).}   By
Assumption~\ref{AssumptionA1}, \(h_Y(x,\tilde w_0,e)\to\infty\) as \(x\to\infty\), for
almost every \(e\). By the bounded-gradient assumption, for almost every \(e\), there exist
a neighbourhood \(U\) of \(\tilde w_0\), a constant \(C(e)<\infty\), and \(x_0(e)<\infty\),
such that
\[
    \sup_{\substack{x\ge x_0(e)\\ \tilde w\in U}}
    \|\nabla_{\tilde w} h_Y(x,\tilde w,e)\|_1\le C(e).
\]
Since \(r_n\to0\), eventually \(B_{\tilde w_0}(r_n)\subset U\). Thus, by the mean-value
theorem, for \(x\ge\tau_n^X\) and \(\tilde w\in B_{\tilde w_0}(r_n)\),
\[
    h_Y(x,\tilde w,e)
    \ge
    h_Y(x,\tilde w_0,e)-C(e)r_n.
\]
Because \(\tau_n^X\to\infty\) and \(r_n\to0\), it follows that
\[
    \inf_{\substack{x\ge\tau_n^X\\ \tilde w\in B_{\tilde w_0}(r_n)}}
    h_Y(x,\tilde w,e)
    \longrightarrow \infty
\]
for almost every \(e\). Since \(F(u)\to1\) as \(u\to\infty\), dominated convergence gives
\[
    \sup_{\substack{x\ge\tau_n^X\\ \tilde w\in B_{\tilde w_0}(r_n)}}
    |m(x,\tilde w)-1|
    \le
    \mathbb E\!\left[
        1-
        F\!\left(
            \inf_{\substack{x\ge\tau_n^X\\ \tilde w\in B_{\tilde w_0}(r_n)}}
            h_Y(x,\tilde w,\varepsilon^Y_{t+1})
        \right)
    \right]
    \longrightarrow 0.
\]
The same argument with \(\tilde w=\tilde w_0\) and \(v\to\infty\) shows that
\(\Gamma_{\mathbf X\to\mathbf Y\mid\mathcal C_0}=1\). Hence \(A_n\to
\Gamma_{\mathbf X\to\mathbf Y\mid\mathcal C_0}\) in probability also in the non-constant
case.

\textbf{Finish.} Combining \(A_n\to\Gamma_{\mathbf X\to\mathbf Y\mid\mathcal C_0}\) with \(M_n/N_n\to 0\) yields \(
    \hat{\Gamma}_{\mathbf X\to\mathbf Y\mid\mathbf Z}
    \overset{P}{\longrightarrow}
    \Gamma_{\mathbf X\to\mathbf Y\mid\mathcal C_0},
\) what we wanted to prove.

\textbf{Proof of CLT. }Put 
\[
v_t:= \operatorname{Var}\!\left( F\{h_Y(x,y,z,\varepsilon^Y_{t+1})\} \right),
    \qquad
    V_n:=\sum_{t\in S_2}v_t .
\]
\[
    b_t:=m(X_t,Y_t,\mathbf Z_t)-\Gamma_0,     \qquad
   \xi_{t+1}:=F(Y_{t+1})-m(X_t,Y_t,\mathbf Z_t).
\]The martingale differences \(I_{n,t}\xi_{t+1}\) are bounded and have predictable quadratic
variation \(V_n\). Since \(V_n\overset{P}{\to}\infty\), the martingale central limit theorem
gives
\[
    \frac{\sum_{t\in S_2}\xi_{t+1}}{\sqrt{V_n}}
    \overset{d}{\to}
    N(0,1).
\]
By the first assumed bias condition,
\[
    \frac{\sum_{t\in S_2}\{F(Y_{t+1})-\Gamma_0\}}{\sqrt{V_n}}
    \overset{d}{\to}
    N(0,1).
\]
It remains to replace \(V_n\) by the empirical variance. Since
\(\xi_{t+1}^2-v_t\) is again a bounded martingale difference, we use again the Freedman's inequality, which gives us
\[
    \frac{\sum_{t\in S_2}(\xi_{t+1}^2-v_t)}{V_n}
    \overset{P}{\to}0.
\]
Together with the second bias condition and Cauchy's inequality, this implies
\[
    \frac{\sum_{t\in S_2}\{F(Y_{t+1})-\Gamma_0\}^2}{V_n}
    \overset{P}{\to}1.
\]
Moreover,
\[
    N_n\widehat\sigma_n^2
    =
    \sum_{t\in S_2}\{F(Y_{t+1})-\Gamma_0\}^2
    -
    N_n\{\widehat\Gamma_{\mathbf X\to\mathbf Y\mid\mathbf Z}-\Gamma_0\}^2,
\]
and the second term is \(o_P(V_n)\). Hence
\[
    \frac{N_n\widehat\sigma_n^2}{V_n}\overset{P}{\to}1.
\]
Consequently, by Slutsky's theorem, we conclude that
\[
    \frac{\sqrt{N_n}
    \left(
        \widehat\Gamma_{\mathbf X\to\mathbf Y\mid\mathbf Z}
        -
        \Gamma_0
    \right)}
    {\widehat\sigma_n}
    =
    \frac{
        \sum_{t\in S_2}\{F(Y_{t+1})-\Gamma_0\}
    }{\sqrt{V_n}}
    \left(
        \frac{N_n\widehat\sigma_n^2}{V_n}
    \right)^{-1/2}
    \overset{d}{\to}
    N(0,1).
\]
\end{proof}

\newpage
\subsection{Proof of Theorem \ref{Theorem3}} \label{proof_of_thm3}

\begin{customthm}{\ref{Theorem3}}
Consider a time series following the non-negative SRE model \eqref{sre2} that satisfies Assumptions (B1), (B2), and (B4). Then, the estimator \(\hat{\Gamma}_{\textbf{X}\to\textbf{Y}\mid \textbf{Z}}\) defined in equation (\ref{Gamma_hat_equation}), with \(S \equiv S_1\) satisfies
\begin{equation}
\tag{\ref{consistency_theorem2}}
\hat{\Gamma}_{\textbf{X}\to\textbf{Y}\mid\textbf{Z}} \overset{P}{\to} 1 \text{   as }n\to\infty\iff  \Gamma_{\textbf{X}\to\textbf{Y}\mid\mathcal{C}} = 1.
\end{equation}
\end{customthm}
\begin{proof}
First, we add some notation:
\begin{itemize}
    \item Recall that $Y_{t+1}=A_{1,t+1}^yZ_t + A_{2,t+1}^yX_t + A_{3,t+1}^yY_t+B_{t+1}^y$, and $\varepsilon_t^Y:=(A^y_{1,t},A^y_{2,t},A^y_{3,t},B^y_t)^\top.$
    \item $D_t:=\left\{\begin{pmatrix}Y_t\\ \mathbf Z_t\end{pmatrix}\le\boldsymbol{\tau}\right\},$
    \item $  I_{n,t}:=\mathbf 1\{X_t\ge\tau_n^X,\ D_t\},$
    \item $  N_n:=|S_1|=\sum_{t=1}^n I_{n,t}.$ By assumption, \(N_n\overset{P}{\to}\infty\).

\end{itemize}

\textbf{Claim 1} under (B2) and (B4),
\[
    \Gamma_{\mathbf X\to\mathbf Y\mid\mathcal C}=1
    \iff
    A^y_{2,t+1}>0\ \text{a.s.}.
\]
This has been proven in Theorem~\ref{Theorem1}. 

\textbf{Claim 2.} For every bounded measurable function \(g\) of \(\varepsilon^y_{t+1}\),
\[
    \frac1{N_n}\sum_{t\in S_1}g(\varepsilon^y_{t+1})
    \overset{P}{\to}
    \mathbb{E}[g(\varepsilon^y_{t+1})].
\]

\textbf{Proof of Claim 2.}
By assumption, there exists $C<\infty$ such that
\(|g(\varepsilon^y_{t+1})-\mathbb{E}[g(\varepsilon^y_{t+1})]|\le C\).  By (B2), \(\varepsilon^y_{t+1}\) is independent of the past up to $t$, hence 
\[
    M_n(g):=\sum_{t=1}^n I_{n,t}\{g(\varepsilon^y_{t+1})-\mathbb{E}[g(\varepsilon^y_{t+1})]\}
\]
is a bounded martingale-difference sum. Therefore, we use the conditional Hoeffding bound \citep{hoeffding1963probability,azuma1967weighted}: for every \(\delta>0\) and every integer \(R\ge1\),
\[
    P(|M_n(g)|>\delta N_n)
    \le
    P(N_n<R)+2\exp\{-\delta^2R/(2C^2)\}.
\]
Since \(N_n\overset{P}{\to}\infty\), first letting \(n\to\infty\) and then
\(R\to\infty\) yields \(M_n(g)/N_n\overset{P}{\to}0\), which proves the claim.

\textbf{Proof of theorem, implication} ''$\impliedby$'' Assume
\(\Gamma_{\mathbf X\to\mathbf Y\mid\mathcal C}=1\). Then
\(A^y_{2,t+1}>0\) a.s. Since the SRE is non-negative, for every fixed \(v\) and all large
\(n\), every \(t\in S_1\) satisfies \(X_t\ge\tau_n^X\ge v\), and hence
\[
    F(Y_{t+1})
    \ge
    L_t(v):=F(A^y_{2,t+1}v+B^y_{t+1}).
\]
Moreover, \(\mathbb{E}[L_t(v)]\to1\) as \(v\to\infty\), by dominated convergence. Given
\(\varepsilon>0\), choose \(v\) such that \(\mathbb{E}[L_t(v)]>1-\varepsilon\). Using Claim 2,
\[
    \hat{\Gamma}_{\mathbf X\to\mathbf Y\mid\mathbf Z}
    \ge
    \frac1{N_n}\sum_{t\in S_1}L_t(v)
    \overset{P}{\to}
    \mathbb{E}[L_t(v)]>1-\varepsilon .
\]
Since \(0\le\hat{\Gamma}_{\mathbf X\to\mathbf Y\mid\mathbf Z}\le1\) and \(\varepsilon>0\) is arbitrary, we obtain
\(
    \hat{\Gamma}_{\mathbf X\to\mathbf Y\mid\mathbf Z}
    \overset{P}{\to}1 .
\)

\textbf{Proof of theorem, implication} ''$\implies$''  For a contradiction, suppose that
\(\Gamma_{\mathbf X\to\mathbf Y\mid\mathcal C}<1\). Then \(A^y_{2,t+1}=0\) a.s. For
\(t\in S_1\), using \(Y_t\le\tau_Y\), \(\mathbf Z_t\le\boldsymbol{\tau}_Z\), and
non-negativity,
\[
    F(Y_{t+1})
    \le
    K_t:=
    F(A^y_{1,t+1}\boldsymbol{\tau}_Z
      +A^y_{3,t+1}\tau_Y
      +B^y_{t+1}).
\]
The variable \(K_t\) is a bounded measurable function of \(\varepsilon^y_{t+1}\). Using
Claim 2,
\[
    \frac1{N_n}\sum_{t\in S_1}K_t
    \overset{P}{\to}
    \mathbb{E}[K_t].
\]
Since \(F(x)<1\) for every finite \(x\), \(K_t<1\) a.s., and hence \(\mathbb{E}[K_t]<1\). Hence
\[
    P\left(
        \hat{\Gamma}_{\mathbf X\to\mathbf Y\mid\mathbf Z}\le \frac{1+\mathbb{E}[K_t]}{2} 
    \right)\to1 .
\]
Thus \(\hat{\Gamma}_{\mathbf X\to\mathbf Y\mid\mathbf Z}\not\overset{P}{\to}1\). This completes the proof.
\end{proof}

\begin{corollary}
\label{consequence_1+gamma/2}
Let the assumptions of Theorem~\ref{Theorem3} hold and suppose that
\(\Gamma_{X\to Y\mid C}<1\). Write 
\[
\widehat\Gamma_n(\tau)
:=
\frac1{|S_1|}\sum_{t\in S_1}F(Y_{t+1}),
\qquad
\widehat\Gamma^{\mathrm{baseline}}_n(\tau)
:=
\frac1{|\widetilde S_1|}
\sum_{t\in \widetilde S_1}F(Y_{t+1}), 
\]
where $\widetilde S_1:=\left\{t\leq n: \begin{pmatrix}Y_t\\ \mathbf Z_t\end{pmatrix}\le \tau\right\}.$ Then there exists \(\tau^0\) such that, for every admissible \(\tau\leq \tau^0\):
\[
P\left(
\widehat\Gamma_n(\tau)
\leq
\frac{1+\widehat\Gamma^{\mathrm{baseline}}_n(\tau)}{2}
\right)\to 1 .
\]
\end{corollary}
\begin{proof}
Recall the notation
\[
D_t(\tau):=\left\{\begin{pmatrix}Y_t\\ \mathbf Z_t\end{pmatrix}\le \tau\right\},
\qquad
S_1:=\{t\le n:X_t\ge \tau_n^X,\ D_t(\tau)\},
\qquad
\widetilde S_1:=\{t\le n:D_t(\tau)\}.
\]
By the characterization used in the proof of Theorem~\ref{Theorem3}, under
(B2) and (B4),
\[
\Gamma_{X\to Y\mid C}<1
\quad\implies \quad
A^y_{2,t+1}=0 \ \text{a.s.}
\]
Therefore, for every  \(t\in D_t(\tau)\), and using (B4), we can write
\[
Y_{t+1}=A^y_{1,t+1}\mathbf Z_t+A^y_{3,t+1}Y_t+B^y_{t+1}\le
A^y_{1,t+1}\tau_Z+A^y_{3,t+1}\tau_Y+B^y_{t+1}.
\]
Since \(F\) is nondecreasing, we always have
\[
\widehat\Gamma_n
\le
U_n(\tau)
:=
\frac1{|S_1|}\sum_{t\in S_1}
K_{t+1}(\tau), \quad \text{ where } \quad K_{t+1}(\tau)
:=
F\!\left(
A^y_{1,t+1}\tau_Z+A^y_{3,t+1}\tau_Y+B^y_{t+1}
\right).
\]
The random variable \(K_{t+1}(\tau)\) is a bounded measurable function of
\(\varepsilon^y_{t+1}\). Thus, by Claim 2 used in the
proof of Theorem~\ref{Theorem3},
\[
U_n(\tau)\overset{P}{\longrightarrow}
e(\tau)
:=
\mathbb E\!\left[
F\!\left(
A^y_{1,t}\tau_Z+A^y_{3,t}\tau_Y+B^y_t
\right)
\right].
\]
It remains to choose \(\tau\) so that \(e(\tau)\) is strictly below the desired midpoint. 

Let \(s_Y:=\inf\operatorname{supp}(Y_t)\) and
\(s_Z:=\inf\operatorname{supp}(\mathbf Z_t)\). In the non-negative SRE case,
\(s_Y\) and \(s_Z\) are finite. Define
\[
e_0
:=
\mathbb E\!\left[
F\!\left(
A^y_{1,t}s_Z+A^y_{3,t}s_Y+B^y_t
\right)
\right].
\]
Since \(F(x)<1\) for every finite \(x\), we have \(e_0<1\). Moreover, by
right-continuity and monotonicity of \(F\), together with dominated convergence,
\[
e(\tau)\longrightarrow e_0
\qquad\text{as}\qquad
\tau\downarrow (s_Y,s_Z).
\]
Hence we may choose \(\tau^0\) sufficiently close to \((s_Y,s_Z)\) such that for every admissible
\(\tau\le \tau^0\),
\[
e(\tau)\le e(\tau^0)<\frac{1+e_0}{2}.
\]
By the ergodic theorem,
\[
\widehat\Gamma^{\mathrm{baseline}}_n(\tau)
=
\frac{\sum_{t=1}^n \mathbf 1\{D_t(\tau)\}F(Y_{t+1})}
     {\sum_{t=1}^n \mathbf 1\{D_t(\tau)\}}
\overset{P}{\longrightarrow}
\gamma_{\mathrm{base}}(\tau):=
\mathbb E\{F(Y_{t+1})\mid D_t(\tau)\}
\]
for every admissible \(\tau\). Again using non-negativity and monotonicity of \(F\),
\[
F(Y_{t+1})
\ge
F\!\left(
A^y_{1,t+1}s_Z+A^y_{3,t+1}s_Y+B^y_{t+1}
\right).
\]
Since \(\varepsilon^y_{t+1}\) is independent of the past, the right-hand side is
independent of \(D_t(\tau)\). Consequently, \(
\gamma_{\mathrm{base}}(\tau)\ge e_0.
\) Thus, for every admissible \(\tau\le\tau^0\),
\[
e(\tau)
<
\frac{1+e_0}{2}
\le
\frac{1+\gamma_{\mathrm{base}}(\tau)}{2}.
\]
Combining this with the convergence and
\(U_n(\tau)\overset{P}{\to}e(\tau)\) and the strict population inequality above gives
\[
P\left(
U_n(\tau)\le
\frac{1+\widehat\Gamma^{\mathrm{baseline}}_n(\tau)}{2}
\right)\to 1.
\]
Since \(\widehat\Gamma_n(\tau)\le U_n(\tau)\), we conclude that also
\(
P\left(
\widehat\Gamma_n(\tau)
\le
\frac{1+\widehat\Gamma^{\mathrm{baseline}}_n(\tau)}{2}
\right)\to 1.
\)
This proves the claim.
\end{proof}

\newpage
\subsection{Proof of Lemma \ref{Classification_lemma}}
\label{proof_of_classicication_lemma}

\begin{lemma}
\label{lemma:S2_baseline_consistency}
Let the assumptions of Theorem~\ref{Theorem2v2} hold. Let
\[
    \widetilde S_{2,n}
    :=
    \left\{
        t\in\{1,\ldots,n-1\}:
        (Y_t,\mathbf Z_t)\in B_{(y_0,\mathbf z_0)}(r_n)
    \right\},
\]
Then 
\[
    \widehat\Gamma^{\baseline}_{2,n}
    :=
    \frac{1}{|\widetilde S_{2,n}|}
    \sum_{t\in\widetilde S_{2,n}} F(Y_{t+1})  \overset{P}{\longrightarrow}
    \Gamma^{\baseline}_0
    :=
    \mathbb E[
        F(Y_{t+1})
        \mid
        Y_t=y_0,\mathbf Z_t=\mathbf z_0
    ].
\]
\end{lemma}

\begin{proof}
The proof can follow directly from following line by line the proof of Theorem~\ref{Theorem2v2}. 

Since \(S_{2,n}\subseteq \widetilde S_{2,n}\) and since \(|S_{2,n}|\overset{P}{\to}\infty\) we also have \(
    |\widetilde S_{2,n}|\overset{P}{\to}\infty .
\)

Put \(\widetilde W_t:=(Y_t,\mathbf Z_t)\) and
\(\widetilde w_0:=(y_0,\mathbf z_0)\). By the structural equation,
\[
    Y_{t+1}
    =
    h_Y(X_t,\widetilde W_t,\varepsilon^Y_{t+1}),
    \qquad
    \varepsilon^Y_{t+1}\perp\!\!\!\perp \sigma(X_s,Y_s,\mathbf Z_s:s\leq t).
\]
Define
\[
    q(\widetilde w)
    :=
    \mathbb E[
        F(Y_{t+1})\mid \widetilde W_t=\widetilde w
    ].
\]
The continuity and density assumptions from Theorem~\ref{Theorem2v2} imply that
\(q\) is continuous at \(\widetilde w_0\). Hence
\[
    \mathbb E[
        F(Y_{t+1})
        \mid
        \widetilde W_t\in B_{\widetilde w_0}(r_n)
    ]
    \longrightarrow
    q(\widetilde w_0)
    =
    \Gamma^{\baseline}_0 .
\]
Repeating the martingale-decomposition and local-ergodic argument from the proof of
Theorem~\ref{Theorem2v2}, but without the restriction \(X_t\geq\tau_n^X\), gives
\[
    \widehat\Gamma^{\baseline}_{2,n}
    -
    \mathbb E[
        F(Y_{t+1})
        \mid
        \widetilde W_t\in B_{\widetilde w_0}(r_n)
    ]
    \overset{P}{\longrightarrow}0 .
\]
Combining the last two displays proves the claim.
\end{proof}

\begin{customlem}{\ref{Classification_lemma}}
Let the assumptions from Theorem~\ref{Theorem2v2} hold. Then
Algorithm~\ref{Algorithm1} with \(S=S_2\) is consistent; that is, the output is
correct with probability tending to one as \(n\to\infty\).

Let the assumptions from Theorem~\ref{Theorem3} hold. Then there exists
\(\boldsymbol{\tau}_0\in\mathbb R^{1+d}\) such that, for all admissible
\(\boldsymbol{\tau}\leq\boldsymbol{\tau}_0\), Algorithm~\ref{Algorithm1} with
\(S=S_1\) and hyper-parameter \(\boldsymbol{\tau}\) gives the correct output with
probability tending to one as \(n\to\infty\).
\end{customlem}

\begin{proof}
\textbf{First case} \(S=S_2\). By Theorem~\ref{Theorem2v2},
\(
    \widehat\Gamma_{2,n}
    \overset{P}{\longrightarrow}
    \Gamma_0
    :=
    \Gamma_{\mathbf X\to\mathbf Y\mid\mathcal C_0},
\)
and by Lemma~\ref{lemma:S2_baseline_consistency},
\(
    \widehat\Gamma^{\baseline}_{2,n}
    \overset{P}{\longrightarrow}
    \Gamma^{\baseline}_0
    :=
    \mathbb E[
        F(Y_{t+1})\mid Y_t=y_0,\mathbf Z_t=\mathbf z_0
    ]<0.
\)

\begin{itemize}
    \item If \(\mathbf X\overset{ext}{\to}\mathbf Y\), then \(\Gamma_0=1\). Hence
\[
    \widehat\Gamma_{2,n}
    -
    \frac{1+\widehat\Gamma^{\baseline}_{2,n}}{2}
    \overset{P}{\longrightarrow}
    \frac{1-\Gamma^{\baseline}_0}{2}
    >0.
\]
Thus Algorithm~\ref{Algorithm1} returns
\(\mathbf X\overset{ext}{\to}\mathbf Y\) with probability tending to one.

 \item If \(\mathbf X\overset{ext}{\not\to}\mathbf Y\), then \(
    \Gamma_0=\Gamma^{\baseline}_0.
\) Consequently,
\[
    \widehat\Gamma_{2,n}
    -
    \frac{1+\widehat\Gamma^{\baseline}_{2,n}}{2}
    \overset{P}{\longrightarrow}
    -\frac{1-\Gamma^{\baseline}_0}{2}
    <0.
\] Thus Algorithm~\ref{Algorithm1} returns
\(\mathbf X\overset{ext}{\not\to}\mathbf Y\) with probability tending to one.
\end{itemize}

\textbf{Second case} \(S=S_1\). The baseline estimator satisfies
\[
    \widehat\Gamma^{\baseline}_{1,n}(\boldsymbol{\tau})
    \overset{P}{\longrightarrow}
    \gamma_{\baseline}(\boldsymbol{\tau})
    :=
    \mathbb E[
        F(Y_{t+1})
        \mid
        (Y_t,\mathbf Z_t)\leq\boldsymbol{\tau}
    ]<1,
\]
by the ergodic theorem. 

\begin{itemize}
 \item If \(\mathbf X\overset{ext}{\to}\mathbf Y\), then
Theorem~\ref{Theorem3} gives
\(
    \widehat\Gamma_{1,n}(\boldsymbol{\tau})
    \overset{P}{\longrightarrow}
    1,
\)
so Algorithm~\ref{Algorithm1} returns
\(\mathbf X\overset{ext}{\to}\mathbf Y\) with probability tending to one.

 \item If \(\mathbf X\overset{ext}{\not\to}\mathbf Y\), then
Corollary~\ref{consequence_1+gamma/2} gives
\(\boldsymbol{\tau}_0\in\mathbb R^{1+d}\) such that, for every admissible
\(\boldsymbol{\tau}\leq\boldsymbol{\tau}_0\),
\[
    \mathbb P\!\left(
        \widehat\Gamma_{1,n}(\boldsymbol{\tau})
        \leq
        \frac{
            1+
            \widehat\Gamma^{\baseline}_{1,n}(\boldsymbol{\tau})
        }{2}
    \right)
    \to 1.
\]
Thus Algorithm~\ref{Algorithm1} returns
\(\mathbf X\overset{ext}{\not\to}\mathbf Y\) with probability tending to one.
\end{itemize}
\end{proof}

\subsection{Proof of Lemma \ref{lemma_path_diagram}}
\label{proof_of_lemma_path_diagram}

\begin{customlem}{\ref{lemma_path_diagram}}
Let \((\textbf{X}^1, \dots, \textbf{X}^m)\) be a collection of time series. Assume that, for each distinct pair \(i, j \in \{1, \dots, m\}\), Algorithm~\ref{Algorithm1} is consistent and that 
\begin{equation}\label{ertty}
\Gamma_{\textbf{X}^i\to\textbf{X}^j\mid\mathcal{C}} = 1 \implies  \Gamma_{\textbf{X}^i\to\textbf{X}^j\mid \emptyset} = 1. 
     \end{equation}
Note that these conditions are satisfied under the assumptions of Lemma~\ref{Classification_lemma} and Theorem~\ref{Theorem1}. Then, Algorithm~\ref{Algorithm2} is consistent, meaning that \(P({\hat{\mathcal{G}}} = \mathcal{G}) \to 1\) as \(n \to \infty\).

Furthermore, if, for each distinct pair $i, j\in{1, \dots, m}$, 
\begin{equation}\label{ertty2}\Gamma_{\textbf{X}^i\to\textbf{X}^j\mid\mathcal{C}} = 1 \iff  \Gamma_{\textbf{X}^i\to\textbf{X}^j\mid \emptyset} = 1
\end{equation}
then $P({\hat{\mathcal{G}}^{\rm P}} = \mathcal{G})\to 1$ as $n\to\infty$, and Step 2 of the algorithm is asymptotically not necessary. 

\end{customlem}

\begin{proof}
Under condition (\ref{ertty2}), we observe the following equivalences:
\begin{equation*}
\begin{split}
\Gamma_{\textbf{X}^i \to \textbf{X}^j \mid \emptyset} = 1 & \iff \Gamma_{\textbf{X}^i \to \textbf{X}^j \mid \mathcal{C}} = 1 \\
& \iff \textbf{X}^i \toverset{ext}{\to} \textbf{X}^j \\
& \iff (i, j) \in \mathcal{G}.
\end{split}
\end{equation*}
Since Algorithm~\ref{Algorithm1} is consistent, we obtain an edge \(i \to j\) in \({\hat{\mathcal{G}}^{\rm P}}\) if and only if \(\Gamma_{\textbf{X}^i \to \textbf{X}^j \mid \emptyset} = 1\) with probability approaching 1 as \(n \to \infty\). Hence, \(P({\hat{\mathcal{G}}^{\rm P}} = \mathcal{G}) \to 1\) as \(n \to \infty\).

Regarding the first statement, considering condition (\ref{ertty}), we infer that \(P({\hat{\mathcal{G}}^{\rm P}} \supseteq \mathcal{G}) \to 1\) as \(n \to \infty\) since \(\Gamma_{\textbf{X}^i \to \textbf{X}^j \mid \emptyset} = 1\) for every edge \(i \to j\) in \(\mathcal{G}\). Consequently, for \(A := \pa_{{\hat{\mathcal{G}}^{\rm P}}}(i) \cap \pa_{{\hat{\mathcal{G}}^{\rm P}}}(j)\) and \(B := \pa_{\mathcal{G}}(i) \cap \pa_{\mathcal{G}}(j)\), we have \(A \supseteq B\) with probability approaching 1 as \(n \to \infty\). Thus,
\[ \Gamma_{\textbf{X}^i \to \textbf{X}^j \mid \mathcal{C}} = 1 \iff \Gamma_{\textbf{X}^i \to \textbf{X}^j \mid \textbf{W}^A} = 1 \iff \Gamma_{\textbf{X}^i \to \textbf{X}^j \mid \textbf{W}^B} = 1. \]
Therefore, since Algorithm~\ref{Algorithm1} is consistent, an edge \(i \to j\) exists in \(\hat{\mathcal{G}}\) as \(n \to \infty\) if and only if \(\Gamma_{\textbf{X}^i \to \textbf{X}^j \mid \mathcal{C}} = 1\). This completes the proof.
\end{proof}

\newpage
\subsection{Proof of Proposition \ref{no_free_lunch}}
\label{proof_of_no_free_lunch}

\begin{customprop}{\ref{no_free_lunch}}[No-free-lunch: time-series version]
Let \(n\in\mathbb N\), let \(\alpha\in(0,1)\), and let
\(\psi_n:\mathbb R^{3n}\to\{0,1\}\) be any test of
\[
    H_0^G:\mathbf X\toverset{G}{\not\to}\mathbf Y\mid\mathbf Z .
\]
If \(\psi_n\) has level \(\alpha\) uniformly over the unrestricted Granger non-causality null, then it cannot have nontrivial finite-sample power even against the embedded alternatives \(\mathcal A_0\); that is, 
\[
    \sup_{Q\in\mathcal P_0^G} Q(\psi_n=1)\le \alpha  \quad \implies \quad       \sup_{P\in\mathcal A_0}   P(\psi_n=1)\le \alpha. 
\]
\end{customprop}

\begin{proof}
We prove the result by reducing the time-series testing problem to the
i.i.d. conditional-independence testing problem of
\citet{HARDNESS_OF_CONDITIONAL_TESTING}.

We use the following auxiliary notation: Let \(\mathcal E_0\) be the
class of absolutely continuous laws \(R\) of a triple \((A,B,C)\), and define
\[
    \mathcal N_{\mathrm{CI}}
    :=
    \{R\in\mathcal E_0:A\indep B\mid C\},
    \qquad
    \mathcal A_{\mathrm{CI}}
    :=
    \mathcal E_0\setminus\mathcal N_{\mathrm{CI}} ,
\qquad
    \mathcal A_{\mathrm{CI}}^{\perp}
    :=
    \{R\in\mathcal A_{\mathrm{CI}}: A\indep C\}.
\]

For \(R\in\mathcal E_0\), let \((A_t,C_t)_{t\in\mathbb Z}\) be i.i.d. with
law \(R_{AC}\), and write 
\[
    B_t=f_R(A_t,C_t,\eta_{t+1})
\]

for some measurable function \(f_R\) and \((\eta_t)_{t\in\mathbb Z}\)  i.i.d.
Uniform\((0,1)\), independent of \((A_t,C_t)_{t\in\mathbb Z}\). Using this notation, we have \((A_t,B_t,C_t)\sim R\). 

Define the time series
\[
    X_t=A_t,
    \qquad
    Z_t=C_t,
    \qquad
    Y_t=B_{t-1}=f_R(A_{t-1},C_{t-1},\eta_t).
\]
Let \(P_R\) denote its law. Notice that \(P_R\in\Xi_0\). Moreover,
for \(R\in\mathcal A_{\mathrm{CI}}^{\perp}\), the same construction has the
structural form required in the definition of \(\mathcal A_0\): take
\(\varepsilon_t^Z=C_t\), \(\varepsilon_t^X=A_t\),
\(\varepsilon_t^Y=\eta_t\), and \(f=f_R\). The required mutual independence of
the innovation sequences follows from \(A\indep C\) and from the independence
of \((\eta_t)\).

For every \(R\in\mathcal E_0\),
\[
    X_t=A_t,
    \qquad
    Y_{t+1}=B_t,
    \qquad
    Z_t=C_t .
\]
Let
\[
    D_t:=\sigma\big(B_{s},C_s:s<t\big),
    \qquad
    E_t:=\sigma\big(A_s:s<t\big).
\]
Then \(\mathcal C_t^{-\mathbf X}=\sigma(D_t,C_t)\),
\(\mathbf X_{\past(t)}=\sigma(E_t,A_t)\), and \((D_t,E_t)\) is independent of
\((A_t,B_t,C_t)\). Hence, conditionally on \((D_t,C_t)\), the conditional law
of \((E_t,A_t,B_t)\) factorizes as the conditional law of \(E_t\) given
\(D_t\) times the conditional law of \((A_t,B_t)\) given \(C_t\). Therefore,
conditional independence of \(Y_{t+1}=B_t\) from
\(\mathbf X_{\past(t)}=\sigma(E_t,A_t)\) given
\(\mathcal C_t^{-\mathbf X}=\sigma(D_t,C_t)\) is equivalent to conditional
independence of \(B_t\) from \(A_t\) given \(C_t\). That is,
\[
    \mathbf X\toverset{G}{\not\to}\mathbf Y\mid\mathbf Z
    \quad\text{under }P_R
    \qquad\Longleftrightarrow\qquad
    A\indep B\mid C
    \quad\text{under }R .
\]
Thus, if \(R\in\mathcal N_{\mathrm{CI}}\), then
\(P_R\in\mathcal P_0^G\). If \(R\in\mathcal A_{\mathrm{CI}}^{\perp}\), then
\(P_R\in\mathcal A_0\).

Now let \(\psi_n\) satisfy
\[
    \sup_{Q\in\mathcal P_0^G}Q(\psi_n=1)\le \alpha .
\]
From \(\psi_n\), define a test of conditional independence based on
\(n+1\) i.i.d. observations
\[
    V_i=(A_i,B_i,C_i),\qquad i=0,\ldots,n,
\]
by
\[
\begin{split}
    \varphi_{n+1}(V_0,\ldots,V_n)
    :=
    \psi_n\big(
        &(A_1,B_0,C_1),
          (A_2,B_1,C_2),
          \ldots,
          (A_n,B_{n-1},C_n)
    \big).
\end{split}
\]
The argument of \(\psi_n\) has the same distribution as
\[
    ((X_1,Y_1,Z_1),\ldots,(X_n,Y_n,Z_n))
\]
under \(P_R\). Hence, using the \(n\)-sample convention from the main text,
\[
    R^{\otimes(n+1)}(\varphi_{n+1}=1)
    =
    P_R(\psi_n=1).
\]

For every \(R\in\mathcal N_{\mathrm{CI}}\), we have
\(P_R\in\mathcal P_0^G\). Therefore
\[
    R^{\otimes(n+1)}(\varphi_{n+1}=1)
    =
    P_R(\psi_n=1)
    \le \alpha .
\]
Thus \(\varphi_{n+1}\) is a valid level-\(\alpha\) test of
\(A\indep B\mid C\) over \(\mathcal E_0\). By the no-free-lunch theorem of
\citet{HARDNESS_OF_CONDITIONAL_TESTING},
\[
    R^{\otimes(n+1)}(\varphi_{n+1}=1)
    \le \alpha
    \qquad
    \text{for every }R\in\mathcal A_{\mathrm{CI}} .
\]
In particular, the same bound holds for every
\(R\in\mathcal A_{\mathrm{CI}}^{\perp}\).

It remains only to identify these embedded alternatives with \(\mathcal A_0\).
Let \(P\in\mathcal A_0\), and define
\[
    A_t:=X_t,
    \qquad
    B_t:=Y_{t+1},
    \qquad
    C_t:=Z_t .
\]
By the definition of \(\mathcal A_0\), the triples \((A_t,B_t,C_t)\) are
i.i.d., \(A_t\indep C_t\), and the Granger-causality condition is equivalent,
by the argument above, to \(A\not\!\indep B\mid C\). Hence
\(R:=\mathcal L(A_t,B_t,C_t)\) belongs to
\(\mathcal A_{\mathrm{CI}}^{\perp}\). Moreover,
\[
    ((X_1,Y_1,Z_1),\ldots,(X_n,Y_n,Z_n))
    =
    ((A_1,B_0,C_1),\ldots,(A_n,B_{n-1},C_n)),
\]
in distribution. Therefore
\[
    P(\psi_n=1)
    =
    R^{\otimes(n+1)}(\varphi_{n+1}=1)
    \le \alpha .
\]
Since \(P\in\mathcal A_0\) was arbitrary, this proves
\[
    \sup_{P\in\mathcal A_0}P(\psi_n=1)\le \alpha .
\]
\end{proof}

\begin{customcorollary}{\ref{corollary_no_free_lunch}}
Let \(\psi_n:\mathbb R^{3n}\to\{0,1\}\) be any test of
\[
    H_0^{ext}:
    \Gamma_{\mathbf X\to\mathbf Y\mid\mathbf Z}<1, \qquad \text{ against} \qquad 
    H_1^{ext}:
    \Gamma_{\mathbf X\to\mathbf Y\mid\mathbf Z}=1.
\]
For every embedded alternative \(P\in\mathcal A_0\) holds 
\[
\sup_{Q\in\mathcal P_0^{ext}}Q(\psi_n=1)\le \alpha \qquad 
\text{ then } \qquad 
    P(\psi_n=1)\le \alpha. 
\]
\end{customcorollary}

\begin{proof}
Let \(\psi_n\) satisfy
\[
    \sup_{Q\in\mathcal P_0^{ext}}Q(\psi_n=1)\le \alpha .
\]
On the common domain on which the extremal coefficient is defined, Granger
non-causality implies extremal non-causality. Thus
\(\mathcal P_0^G\subseteq\mathcal P_0^{ext}\) on this domain, and therefore
\[
    \sup_{Q\in\mathcal P_0^G}Q(\psi_n=1)
    \le
    \sup_{Q\in\mathcal P_0^{ext}}Q(\psi_n=1)
    \le \alpha .
\]
Proposition~\ref{no_free_lunch} now yields
\[
    P(\psi_n=1)\le \alpha
    \qquad
    \text{for every }P\in\mathcal A_0 .
\]
\end{proof}

\newpage
\subsection{Proofs of Propositions \ref{proposition_with_lag_1} and \ref{proposition_with_lag_2}} 
\label{proof_of_proposition_with_lag}

\begin{customprop}{\ref{proposition_with_lag_1}}
Consider the data-generating process (\ref{structural_generation_lagged}). Then for every $p\in\mathbb{N}$, 
\begin{equation*}
    \begin{split}
     \Gamma_{\textbf{X}\to\textbf{Y}\mid\mathcal{C}}(p) = 1 &\implies \textbf{X}\overset{{\rm tail}(p)}{\longrightarrow}\textbf{Y}    
    \implies \textbf{X}\toverset{Sims}{\to}\textbf{Y}.  
    \end{split}
\end{equation*}
\end{customprop}

\begin{proof}
The proof is mostly analogous to the proofs of Propositions \ref{Proposition_CTC_iff_1} and \ref{Proposition_Granger_equivalent_tail}. 

\begin{itemize}
    \item ``$  \Gamma_{\textbf{X}\to\textbf{Y}\mid\mathcal{C}}(p) = 1 \implies \textbf{X}\overset{{\rm tail}(p)}{\longrightarrow}
 \textbf{Y} $'':  We have $$\lim_{v\to \infty}\mathbb{E}[\max\{F(Y_{t+1}), \dots, F(Y_{t+p})\}\mid \mathcal{C}^{-{X_t}}_t]< 1,$$
since $F(x)<1$ for all $x\in\mathbb{R}$. Hence, if   $\Gamma_{\textbf{X}\to\textbf{Y}\mid\mathcal{C}}(p) = 1$, then 
\begin{equation*}
    \begin{split}
\lim_{v\to \infty}\mathbb{E}[\max\{F(Y_{t+1}), \dots, F(Y_{t+p})\}\mid& X_t>v, \mathcal{C}^{-{X_t}}_t] = 1\\&\neq \lim_{v\to \infty}\mathbb{E}[\max\{F(Y_{t+1}), \dots, F(Y_{t+p})\}\mid \mathcal{C}^{-{X_t}}_t],        
    \end{split}
\end{equation*}
what we wanted to prove.

\item``$\textbf{X}\overset{{\rm tail}(p)}{\longrightarrow}\textbf{Y} \implies \textbf{X}\toverset{Sims}{\to}\textbf{Y} $'': If   $\textbf{Y}_{\future(t)} \indep X_{t}\mid \mathcal{C}^{-{X_t}}_{t}$, then 
\begin{equation*}
    \begin{split}
\lim_{v\to \infty}\mathbb{E}[\max\{F(Y_{t+1}), \dots, F(Y_{t+p})\}\mid& X_t>v, \mathcal{C}^{-{X_t}}_t]\\&=     \lim_{v\to \infty}\mathbb{E}[\max\{F(Y_{t+1}), \dots, F(Y_{t+p})\}\mid \mathcal{C}^{-{X_t}}_t].        
    \end{split}
\end{equation*}
 Hence  $X\toverset{Sims}{\not\to}Y$ implies $X\overset{{\rm tail}(p)}{\not\to}Y$. 
\end{itemize}
\end{proof}

\begin{customprop}{\ref{proposition_with_lag_2}}
Consider the structural time series
\[
\begin{split}
   \mathbf Z_t
   &= h_Z(X_{t-1},\ldots,X_{t-q_Z},
          Y_{t-1},\ldots,Y_{t-q_Z},
          \mathbf Z_{t-1},\ldots,\mathbf Z_{t-q_Z},
          \varepsilon^Z_t),\\
   X_t
   &= h_X(X_{t-1},\ldots,X_{t-q_X},
          Y_{t-1},\ldots,Y_{t-q_X},
          \mathbf Z_{t-1},\ldots,\mathbf Z_{t-q_X},
          \varepsilon^X_t),\\
   Y_t
   &= h_Y(X_{t-1},\ldots,X_{t-q_Y},
          Y_{t-1},\ldots,Y_{t-q_Y},
          \mathbf Z_{t-1},\ldots,\mathbf Z_{t-q_Y},
          \varepsilon^Y_t).
\end{split}
\]
Assume that \(h_X,h_Y,h_Z\), are upper-tail preserving. Assume further that,
for every \(t\) and every \(m\ge 1\),
\[
   (\varepsilon^X_{t+1},\ldots,\varepsilon^X_{t+m},
    \varepsilon^Y_{t+1},\ldots,\varepsilon^Y_{t+m},
    \varepsilon^Z_{t+1},\ldots,\varepsilon^Z_{t+m})
   \indep X_t \mid \mathcal C_t^{-X_t}.
\]
Then, if
\[
   \ell_t:=\min\{s\ge 1:
   Y_{t+s}\not\indep X_t\mid \mathcal C_t^{-X_t}\}
\]
exists, then
\[
   \Gamma_{\mathbf X\to \mathbf Y\mid\mathcal C}^t(p)=1
   \qquad\text{for every }p\ge \ell_t .
\]
\end{customprop}

\begin{proof}
WLOG \(t=0\) to simplify notation and write
\[
    \mathcal C:=\mathcal C_0^{-X_0}.
\]
All statements below are understood under a regular conditional law given \(\mathcal C\), and hence hold \(\mathcal C\)-almost surely. 

Call a random variable \(V\) ``active'' if
\[
   \forall c\in\mathbb R:\qquad
   \lim_{v\to\infty}
   P(V>c\mid X_0>v,\mathcal C)=1 .
\]
Call a vector of variables \(N\) ''irrelevant'' if \(N\indep X_0\mid \mathcal C\).

\textbf{Simple observations. }The variable \(X_0\) is trivially ``active''. All variables contained in \(\mathcal C\), in particular
\[
   X_{-1},X_{-2},\ldots,\qquad
   Y_0,Y_{-1},\ldots,\qquad
   \mathbf Z_0,\mathbf Z_{-1},\ldots,
\]
are ``irrelevant'', because they are \(\mathcal C\)-measurable. By assumption, every finite block of future
noise variables is also conditionally independent of \(X_0\) given \(\mathcal C\).

\textbf{Induction step. }We prove by induction over \(k\ge 0\) that each coordinate of
\[
   X_1,\ldots,X_k,\qquad
   Y_1,\ldots,Y_k,\qquad
   \mathbf Z_1,\ldots,\mathbf Z_k
\]
is either active or irrelevant. The claim is trivially true for \(k=0\).

Assume it has been proved up to time \(k\). Consider one coordinate
of \(X_{k+1}\), \(Y_{k+1}\), or \(\mathbf Z_{k+1}\). Denote it by \(V_{k+1}\). We may
write
\(
   V_{k+1}=h(A,N),
\)
where \(A\) is the vector of active lagged arguments entering the
corresponding structural equation, and \(N\) collects the irrelevant
lagged arguments, the \(\mathcal C\)-measurable arguments, and the
noise variable at time \(k+1\).

If \(h\) is constant in the coordinates \(A\), then \(V_{k+1}\) is a
measurable function only of irrelevant variables, \(\mathcal C\), and the
noise variables. Therefore \(V_{k+1}\) can be added to the jointly irrelevant
vector, so \(V_{k+1}\indep X_0\mid\mathcal C\).

If \(h\) is not constant in the coordinates \(A\), we show that
\(V_{k+1}\) is active. Fix \(c\in\mathbb R\) and \(\eta>0\). Since \(N\)
is conditionally independent of \(X_0\), there exists a compact set
\(K\) such that
\[
   P(N\in K\mid X_0>v,\mathcal C)
   =P(N\in K\mid \mathcal C)>1-\eta
\]
for all \(v\). By the upper-tail preservation assumption, there exists
\(M\in\mathbb R\) such that
\[
   h(a,n)>c
   \qquad
   \text{whenever } a_i\ge M\ \text{for all active coordinates }i,
   \text{ and } n\in K .
\]
By the induction hypothesis, every coordinate of \(A\) is active, and
therefore
\[
   \lim_{v\to\infty}
   P\bigl(\min_i A_i>M\mid X_0>v,\mathcal C\bigr)=1 .
\]
Consequently,
\[
\begin{split}
   \liminf_{v\to\infty}
   P(V_{k+1}>c\mid X_0>v,\mathcal C)
   &\ge
   \liminf_{v\to\infty}
   P\bigl(\min_i A_i>M,\ N\in K
       \mid X_0>v,\mathcal C\bigr)\ge 1-\eta .
\end{split}
\]
Since \(\eta>0\) was chosen arbitrary,
$   \lim_{v\to\infty}
   P(V_{k+1}>c\mid X_0>v,\mathcal C)=1 .$

Thus \(V_{k+1}\) is active. This completes the induction.

\textbf{Final step.} Now assume that for some \(s\le p\),
\[
   Y_s\not\indep X_0\mid\mathcal C .
\]
\(Y_s\) is not irrelevant, and hence by induction, \(Y_s\) is active, so for every \(c\in\mathbb R\),
\[
   \lim_{v\to\infty}
   P(Y_s>c\mid X_0>v,\mathcal C)=1 .
\]
Consequently, as $v\to\infty$,
\[
\begin{split}
   P\!\left(
      \max\{Y_1,\ldots,Y_p\}>c
      \mid X_0>v,\mathcal C
   \right)
   &\ge
   P(Y_s>c\mid X_0>v,\mathcal C)\to 1
   \qquad
   \forall c\in\mathbb R .
\end{split}
\]
Let \(F\) be the distribution function used in the definition of
\(\Gamma_{\mathbf X\to\mathbf Y\mid\mathcal C}(p)\). For any
\(\varepsilon>0\), choose \(c\) such that \(F(c)>1-\varepsilon\). Then
\[
\begin{split}
&\liminf_{v\to\infty}
E\!\left[
   \max\{F(Y_1),\ldots,F(Y_p)\}
   \mid X_0>v,\mathcal C
\right]  \\
&\qquad\ge
(1-\varepsilon)
\lim_{v\to\infty}
P\!\left(
   \max\{Y_1,\ldots,Y_p\}>c
   \mid X_0>v,\mathcal C
\right)
=1-\varepsilon .
\end{split}
\]
Sending \(\varepsilon\downarrow0\) gives
\[
   \Gamma_{\mathbf X\to\mathbf Y\mid\mathcal C}^t(p)=1 .
\]
This proves the result.
\end{proof}

\newpage
\subsection{Proof of Proposition~\ref{theorem_instantaneous}}
\label{section_proof_instantaneous}

\begin{customprop}{\ref{theorem_instantaneous}}
Fix \(t\in\mathbb Z\). Let \(
\tilde{\mathcal C}_t:=\mathcal C_t^{-\{X_t,Y_t\}}
\) denote the admissible information available at time \(t\), excluding \(X_t\) and \(Y_t\).
Suppose that, conditionally on \(\tilde{\mathcal C}_t\), the contemporaneous structural equations are
\[
X_t=\mu_X(\tilde{\mathcal C}_t)+\varepsilon_t^X,
\qquad
Y_t=\mu_Y(\tilde{\mathcal C}_t)+\beta X_t+\varepsilon_t^Y,
\]
where \(\mu_X\) and \(\mu_Y\) are \(\tilde{\mathcal C}_t\)-measurable and finite almost surely, and $\beta>0$. 

Assume that \(\varepsilon_t^X\), \(\varepsilon_t^Y\), and \(\tilde{\mathcal C}_t\) are mutually independent,  \(\varepsilon_t^X, \varepsilon_t^Y\in RV(\alpha)\) are compatible and define the $p=0$ coefficient 
\[
\Gamma^{t,\mathrm{inst}}_{\mathbf X\to\mathbf Y\mid\tilde{\mathcal C}}
:=
\lim_{v\to\infty}
\mathbb E\!\left[
F_Y(Y_t)\mid X_t>v,\tilde{\mathcal C}_t
\right],
\]
assuming that the limit exists a.s. Then,
\[
\Gamma^{t,\mathrm{inst}}_{\mathbf X\to\mathbf Y\mid\tilde{\mathcal C}}=1, \quad \Gamma^{t,\mathrm{inst}}_{\mathbf Y\to\mathbf X\mid\tilde{\mathcal C}}<1,
\quad
\quad\text{a.s.}
\]
\end{customprop}

\begin{proof}
Fix \(t\in\mathbb Z\) and condition on \(\tilde{\mathcal C}_t\). Since \(\mu_X(\tilde{\mathcal C}_t)\) and \(\mu_Y(\tilde{\mathcal C}_t)\) are \(\tilde{\mathcal C}_t\)-measurable, they may be treated as constants. For notational simplicity, write $\mu_X=\mu_X(\tilde{\mathcal C}_t), \mu_Y=\mu_Y(\tilde{\mathcal C}_t)$. Then
\[
X_t=\mu_X+\varepsilon_t^X,
\qquad
Y_t=\mu_Y+\beta X_t+\varepsilon_t^Y .
\]
Define the notation \[
\mathbb P(\varepsilon_t^X>u)\sim c_X u^{-\alpha}L(u),
\qquad
\mathbb P(\varepsilon_t^Y>u)\sim c_Y u^{-\alpha}L(u),
\qquad u\to\infty,
\]
where \(c_X,c_Y>0\) and \(L\) is slowly varying. 

\textbf{Direction} \(X_t\to Y_t\). Since \(\beta>0\) and \(\varepsilon_t^Y\ge 0\) almost surely, \( Y_t\ge \mu_Y+\beta X_t
\). Hence, on the event \(\{X_t>v\}\) for $v\to\infty$, \(
Y_t\ge \mu_Y+\beta v\to\infty .
\) In other words, we have
\[
\liminf_{v\to\infty}
\mathbb E\!\left[
F_Y(Y_t)\mid X_t>v,\tilde{\mathcal C}_t
\right]
\ge
\lim_{v\to\infty}F_Y(\mu_Y+\beta v)
=1.
\]
Since \(F_Y\le 1\), it follows that \(
\Gamma^{t,\mathrm{inst}}_{\mathbf X\to\mathbf Y\mid\tilde{\mathcal C}}=1
\) a.s.

\textbf{Direction} \(Y_t\to X_t\).  Substituting the structural equation for \(X_t\)
into the equation for \(Y_t\), we obtain
\[
Y_t
=
\underbrace{\mu_Y+\beta\mu_X}_{d}
+
\underbrace{\beta\varepsilon_t^X}_{U}
+
\underbrace{\varepsilon_t^Y}_{V}.
\]
Using the regular variation
\[
\mathbb P(U>u)
=
\mathbb P(\varepsilon_t^X>u/\beta)
\sim
\beta^\alpha c_X u^{-\alpha}L(u), \qquad \mathbb P(V>u)
\sim
c_Y u^{-\alpha}L(u).
\]
Since regularly varying distributions are subexponential, the single-large-jump principle gives
\[
\mathbb P(Y_t>v\mid\tilde{\mathcal C}_t)
=
\mathbb P(U+V>v-d)
\sim
\mathbb P(U>v)+\mathbb P(V>v).
\]
Therefore,
\[
\mathbb P(Y_t>v\mid\tilde{\mathcal C}_t)
\sim
\left(\beta^\alpha c_X+c_Y\right)v^{-\alpha}L(v).
\]
Since \(U=\beta\varepsilon_t^X\), the event \(\{U>v\}\) implies \(X_t\to\infty\). Hence
\[
\mathbb E\!\left[
F_X(X_t)\mathbf 1_{\{U>v\}}\mid\tilde{\mathcal C}_t
\right]
\sim
\mathbb P(U>v).
\]
On the other hand, \(V=\varepsilon_t^Y\) is independent of \(X_t\) conditionally on
\(\tilde{\mathcal C}_t\). Thus the contribution to the event \(\{Y_t>v\}\) coming from
the innovation \(V\) is weighted by the conditional mean
\[
m_X(\tilde{\mathcal C}_t)
=
\mathbb E\!\left[
F_X(X_t)\mid\tilde{\mathcal C}_t
\right]\overset{a.s.}{<}1
\]
Using again the single-large-jump principle,
\[
\mathbb E\!\left[
F_X(X_t)\mathbf 1_{\{Y_t>v\}}\mid\tilde{\mathcal C}_t
\right]
\sim
\left\{
\beta^\alpha c_X+c_Y\,m_X(\tilde{\mathcal C}_t)
\right\}
v^{-\alpha}L(v).
\]
Dividing this asymptotic expression by \(\mathbb P(Y_t>v\mid\tilde{\mathcal C}_t)\), we obtain
\[
\Gamma^{t,\mathrm{inst}}_{\mathbf Y\to\mathbf X\mid\tilde{\mathcal C}}
=
\frac{
\beta^\alpha c_X+c_Y\,m_X(\tilde{\mathcal C}_t)
}{
\beta^\alpha c_X+c_Y
}\overset{a.s.}{<} 1. 
\]
This proves the result.
\end{proof}

\end{document}